\def\maxwidth{ %
  \ifdim\Gin@nat@width>\linewidth
    \linewidth
  \else
    \Gin@nat@width
  \fi
}
\definecolor{fgcolor}{rgb}{0.345, 0.345, 0.345}
\definecolor{shadecolor}{rgb}{.97, .97, .97}
\definecolor{messagecolor}{rgb}{0, 0, 0}
\definecolor{warningcolor}{rgb}{1, 0, 1}
\definecolor{errorcolor}{rgb}{1, 0, 0}
\numberwithin{equation}{section}
\theoremstyle{plain}
\theoremstyle{plain}
\renewcommand{\algorithmicrequire}{\textbf{Input:}}
\renewcommand{\algorithmicensure}{\textbf{Output:}}
\DeclareMathOperator*{\argmax}{arg\,max}
\DeclareMathOperator*{\tr}{tr}
\renewcommand{\vec}{\boldsymbol}
\newcommand{\bbE}{\mathbb{E}}
\newcommand{\calN}{\mathcal{N}}
\newcommand{\calW}{\mathcal{W}}
\newcommand{\calS}{\mathcal{S}}
\newcommand{\bbOne}{\mathds{1}}
\newcommand{\var}{\text{Var}}
\newcommand{\cov}{\text{Cov}}
\newcommand{\vSigma}{{\vec{\Sigma}}}
\newcommand{\hvSigma}{{\hat{\vec{\Sigma}}}}
\newcommand{\vPsi}{{\vec{\Psi}}}
\newcommand{\vDelta}{{\vec{\Delta}}}
\newcommand{\vTheta}{{\vec{\Theta}}}
\newcommand{\hvTheta}{{\hat{\vec{\Theta}}}}
\newcommand{\hvPsi}{{\hat{\vec{\Psi}}}}
\newcommand{\vX}{{\vec{X}}}
\newcommand{\vx}{{\vec{x}}}
\newcommand{\vY}{{\vec{Y}}}
\newcommand{\vS}{{\vec{S}}}
\newcommand{\vI}{{\vec{I}}}
\newcommand{\vE}{{\vec{E}}}
\begin{document}

\begin{frontmatter}
\title{Estimating a common covariance matrix for network meta-analysis of
gene expression datasets in diffuse large B-cell lymphoma\thanksref{T1}}
\runtitle{Estimating a common covariance matrix}
\thankstext{T1}{Supported by MSCNET, EU FP6, CHEPRE, the Danish Agency for Science, Technology, and Innovation as well as Karen Elise Jensen Fonden.}

\begin{aug}
\author{\fnms{Anders Ellern} \snm{Bilgrau}\thanksref{m1,m2,m3},
\ead[label=e1]{anders.ellern.bilgrau@gmail.com}
}
\author{\fnms{Rasmus Froberg \snm{Br\o{}ndum}}\thanksref{m2,m3},
\ead[label=e5]{rfb@rn.dk}
}
\author{\fnms{Poul Svante} \snm{Eriksen}\thanksref{m1},
\ead[label=e2]{svante@math.aau.dk}
}
\author{\fnms{Karen}~\snm{Dybk\ae{}r}\thanksref{m2},
\ead[label=e3]{k.dybkaer@rn.dk}
}
\and
\author{\fnms{Martin} \snm{B\o{}gsted}\thanksref{m1,m2}
\ead[label=e4]{mboegsted@dcm.aau.dk}
}
\runauthor{A.\ Bilgrau et al.}

\affiliation{Aalborg University\thanksmark{m1} and Aalborg University Hospital\thanksmark{m2}\\ Shared first authorship\thanksmark{m3}}

\address{
Department of Haematology\\
Sdr.\ Skovvej 15\\
DK-9000 Aalborg\\
\printead{e1}\\
\phantom{E-mail: }\printead*{e5}\\
\phantom{E-mail: }\printead*{e4}\\
\phantom{E-mail: }\printead*{e3}
}

\address{
Department of Mathematical Sciences\\
Fredrik Bajers Vej 7G\\
DK-9220 Aalborg \O{}\\
\printead{e1}\\
\phantom{E-mail: }\printead*{e2}
}

\address{
Department of Clinical Medicine\\
Sdr.\ Skovvej 15\\
DK-9000 Aalborg \O{}\\
\printead{e4}\\
\phantom{E-mail: }\printead*{e3}
}

\end{aug}

\begin{abstract}
The estimation of covariance matrices of gene expressions has many applications in cancer systems biology. Many gene expression studies, however, are hampered by low sample size and it has therefore become popular to increase sample size by collecting gene expression data across studies.
Motivated by the traditional meta-analysis using random effects models, we present a hierarchical random covariance model and use it for the meta-analysis of gene correlation networks across 11 large-scale gene expression studies of diffuse large B-cell lymphoma (DLBCL).
We suggest to use a maximum likelihood estimator for the underlying common covariance matrix and introduce an EM algorithm for estimation.  By simulation experiments comparing the estimated covariance matrices by cophenetic correlation and Kullback-Leibler divergence the suggested estimator showed to perform better or not worse than a simple pooled estimator.  In a posthoc analysis of the estimated common covariance matrix for the DLBCL data we were able to identify novel biologically meaningful gene correlation networks with eigengenes of prognostic value.
In conclusion, the method seems to provide a generally applicable framework for meta-analysis, when multiple features are measured and believed to share a common covariance matrix obscured by study dependent noise.
\end{abstract}

\begin{keyword}
\kwd{covariance estimation}
\kwd{precision estimation}
\kwd{integrative analysis}
\kwd{meta-analysis}
\kwd{network analysis}
\end{keyword}

\end{frontmatter}

\section{Introduction}
Human cells carry out their function in concerted interaction via intricate protein signalling networks. These networks are according to the central dogma of molecular biology controlled by expressed genes.  It has become popular to perform genome wide measurements of expressed genes and proteins and summarizing the information by huge covariance matrices leading to improved understanding of disease pathology and identification of new drug targets \citep{Agnelli2011, Clarke2013}. Many gene expression studies, however, are hampered by low sample size and it has therefore become of interest to increase sample size by collecting gene expression data across studies. These data are potentially hampered by severe batch effects, and robust methods are therefore required to conduct meta-analysis of covariance matrices.

To the best of our knowledge no approaches exist where meta-analysis of covariance matrices have been addressed explicitly. We acknowledge, however, that a number of indirect methods have been constructed. An immediate and tempting approach is to use one of the many study correcting approaches scattered around in the literature \citep{Irizarry2003, Johnson2007, Lee2014} followed by estimating the covariance matrix either based on a pooled data set or by pooling covariance matrices estimated from each individual study as suggested by \cite{Lee2014}. This approach, however, suffers from the same disadvantages as usual meta-analysis based on pooling fixed effects as it puts too much weight on large outliers in the data \citep{Borenstein2010}.

Motivated by the alternative meta-analysis by random effects \citep{DerSimonian1986, Choi2003}, we suggest a hierarchical model where the covariance for each study is assumed to be drawn from an inverse Wishart distribution with a common mean covariance matrix, and data from each study is then subsequently generated from a multivariate Gaussian distribution with this covariance matrix. We suggest to use a maximum likelihood estimator for the underlying common covariance matrix and introduce an EM algorithm for its estimation. We use the method for the meta-analysis of gene correlation networks across 11 large-scale gene expression studies of diffuse large B-cell lymphoma (DLBCL). It is our expectation that a more suitable handling of the covariance matrix will lead to more adequate estimations of covariance matrices and subsequently inferred gene correlation networks.

In Section \ref{sec:RCMmodel}, we propose the model for a common covariance matrix across multiple studies, derive estimators thereof, and propose an inter-study homogeneity measure to aid in assessing the variation between studies. We conduct an extensive simulation study in Section \ref{sec:estimationassessment} comparing the proposed estimator and simple pooling of covariance matrices. We then apply the model in Section \ref{sec:DLBCL} to $2{,}046$ DLBCL samples across 11 datasets before concluding the manuscript in Section \ref{sec:conclusion}.

\section{A hierarchical model for the covariance matrix}
\label{sec:RCMmodel}

Let $p$ be the number of features and $k$ the number of studies.
We model an observation $\vx$ from the $i$'th study as a $p$-dimensional zero-mean multivariate Gaussian vector with covariance matrix realized from an inverse Wishart distribution, i.e.\ $\vx$ follows the hierarchical model
\begin{align}
\begin{split} \label{eq:RCM}
  \vSigma_i  &\sim \calW^{-1}_p\big(\vPsi, \nu\big), \\
  \vx | \vSigma_i &\sim \calN_p(\vec{0}_p, \vSigma_i), \qquad i = 1, ..., k,
\end{split}
\end{align}
where $\calN_p(\vec{\mu},\vSigma_i)$ denotes a $p$-dimensional multivariate Gaussian distribution with mean $\vec{\mu}$ and positive definite (p.d.) covariance matrix $\vSigma_i$, and probability density function (pdf) shown in \eqref{eq:normalpdf},
and $\calW^{-1}_p(\vPsi, \nu)$ denotes a $p$-dimensional inverse Wishart distribution with $\nu > p - 1$ degrees of freedom, a p.d. $p \times p$ scale matrix $\vPsi$, and pdf
shown in \eqref{eq:wishartpdf}.
While the inverse Wishart distribution is defined for all $\nu > p - 1$, the first order moment exists only when $\nu > p + 1$ and is given by
\begin{align}
  \label{eq:expcovar}
  \bbE[\vSigma_i] = \vSigma = \frac{\vPsi}{\nu-p-1} \text{ for } \nu > p + 1.
\end{align}
Hence, in the Random Covariance Model (RCM) of \eqref{eq:RCM}, $\vSigma$ can be interpreted as a location-like parameter as it is the expected covariance matrix in each study.
The parameter $\nu$ inversely controls the inter-study variation and can as such be considered an inter-study homogeneity parameter of the covariance structure.
A large $\nu$ corresponds to high study homogeneity and vice versa for small $\nu$.
This can further be seen as $\vSigma_i$ concentrates around $\vSigma$ for $\nu\to\infty$ which corresponds to a vanishing inter-study variation for increasing $\nu$.
This fact is seen directly from variance and covariance expressions for the inverse Wishart (see \eqref{eq:invwishcovar} and \eqref{eq:invwishvar}) where the 4th order denominator grows much faster than the 1st order nominator as polynomials in $\nu$ and causing the variance to vanish for $\nu\to\infty$.
Thus, the true underlying covariance matrix $\vSigma$ and the homogeneity parameter $\nu$ are the effects of interest to be estimated.

\subsection{The likelihood function}
Suppose $\vx_{i1}, \dots,\vx_{in_i}$ are $n_i$ i.i.d.\ observations from $i = 1,...,k$ independent studies from the model given in \eqref{eq:RCM}.
Let $\vX_i = (\vx_{i1}, \dots,\vx_{in_i})^\top$ be the $n_i \times p$ matrix of observations for the $i$'th study where rows correspond to samples and columns to variables.
By the independence assumptions, the log-likelihood for $\vPsi$ and $\nu$ is given by
\begin{align*}
  &\ell\!\left(\vPsi, \nu \big|\vX_1, ..., \vX_k  \right)
  = \log f\!\left(\vX_1, ..., \vX_k \big| \vPsi, \nu \right) \\
  &= \log\!\int
             f(\vX_1, ...,\vX_k |
               \vSigma_1, ..., \vSigma_k, \vPsi, \nu)
             f(\vSigma_1, ..., \vSigma_k | \vPsi, \nu)
             \mathrm{d}\vSigma_1 \cdots \mathrm{d}\vSigma_k \\
  &= \log \prod_{i=1}^k \!\int
               f(\vX_i | \vSigma_i)
               f(\vSigma_i | \vPsi, \nu)
               \mathrm{d}\vSigma_i.
\end{align*}
Throughout, we use the generic notation $f(\cdot | \cdot)$ and $f(\cdot)$ for the conditional and unconditional pdf of random variables, respectively.
Since the inverse Wishart distribution is conjugate to the multivariate Gaussian distribution, the integral---of which the integrand forms a Gaussian-inverse-Wishart distribution---can be evaluated. Hence $\vSigma_i$ can be marginalized out, cf.\ \eqref{eq:marg1} in Appendix \ref{sec:marginalization}, and we arrive at the following expression for the log-likelihood function,
\begin{align}
  &\ell\!\left(\vPsi, \nu \big| \vX_1, ..., \vX_k \right) 
  = \log\prod_{i=1}^k
    \frac{\big|\vPsi\big|^\frac{\nu}{2} \Gamma_p\!\left(\frac{\nu+n_i}{2}\right)}
         {\pi^\frac{n_i p}{2} \big|\vPsi +\vX_i^\top\vX_i\big|^\frac{\nu+n_i}{2}
          \Gamma_p\!\left(\frac{\nu}{2}\right)}          \notag\\
  &= \sum_{i=1}^k \!\bigg[
       \frac{\nu}{2}  \log\big|\vPsi\big|
       - \frac{\nu + n_i}{2}\log\big| \vPsi +\vX_i^\top\vX_i \big|
       + \log\frac{\Gamma_p\!\left(\frac{\nu + n_i}{2}\right)}
                  {\Gamma_p\!\left(\frac{\nu}{2}\right)}
       \!\bigg]\!,
    \label{eq:loglik}
\end{align}
up to an additive constant where $\Gamma_p$ is the multivariate generalization of the gamma function $\Gamma$, see \eqref{eq:multigamma}.
The scatter matrix $\vS_i =\vX_i^\top\vX_i$ and study sample size $n_i$ are sufficient statistics for each study.
Note that $\vS_i$ is conditionally Wishart distributed, $\vS_i|\vSigma_i \sim \calW(\vSigma_i, n_i)$, by construction.

As stated in the following two propositions, the likelihood is not log-concave in general. However, it is log-concave as a function of $\nu$. All proofs have been deferred to Appendix \ref{sec:proofs}.

\begin{restatable}[Non-concavity in $\vPsi$]{proposition}{propositionNonConcavityInPsi}
  \label{prop:nonconcavityinpsi}
  For a fixed $\nu$, the log-likelihood function \eqref{eq:loglik} is not
  concave in $\vPsi$.
\end{restatable}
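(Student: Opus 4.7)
The plan is to exhibit a one-parameter family of positive definite matrices along which the restricted log-likelihood fails to be concave. Heuristically, the $\vPsi$-dependent part of \eqref{eq:loglik} decomposes into $\frac{k\nu}{2}\log|\vPsi|$, which is concave on the p.d.\ cone, and $-\frac{1}{2}\sum_{i}(\nu+n_i)\log|\vPsi+\vS_i|$, which is convex. Concavity of the sum must therefore fail whenever the convex piece dominates, which one expects to happen for large $\vPsi$, since the coefficient $\sum_{i}(\nu+n_i) = k\nu + \sum_{i} n_i$ of the convex piece strictly exceeds the coefficient $k\nu$ of the concave piece.

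To make this precise, I would restrict attention to the ray $\vPsi(t) = t\vI_p$ for $t > 0$ and examine $g(t) := \ell(t\vI_p, \nu\,|\,\vX_1,\ldots,\vX_k)$. Since each symmetric $\vS_i$ diagonalizes, $\log|t\vI_p + \vS_i| = \sum_{j=1}^{p}\log(t+\lambda_{ij})$ where the $\lambda_{ij}$ are the eigenvalues of $\vS_i$, a direct computation yields
$$g''(t) = -\frac{k\nu p}{2t^2} + \sum_{i=1}^{k}\frac{\nu+n_i}{2}\sum_{j=1}^{p}\frac{1}{(t+\lambda_{ij})^2}.$$
Multiplying by $2t^2$ and letting $t\to\infty$ gives $2t^2 g''(t) \to -k\nu p + p\sum_{i=1}^{k}(\nu+n_i) = p\sum_{i=1}^{k}n_i > 0$, so $g''(t) > 0$ for all sufficiently large $t$. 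The restriction of a concave function to an affine slice of its domain is itself concave, so if $\ell(\cdot, \nu\,|\,\vX_1,\ldots,\vX_k)$ were concave on the p.d.\ cone then $g$ would have nonpositive second derivative everywhere on $(0,\infty)$. The contradiction proves the proposition.

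There is no substantial obstacle here beyond choosing a convenient restriction of the domain. The argument works for every $k \geq 1$, every $p \geq 1$, every admissible $\nu > p-1$, and every dataset with $\sum_{i} n_i > 0$, which shows that the failure of concavity is a robust structural feature of the marginal likelihood rather than an artefact of particular data. A sharper statement — say, an explicit description of the region of $\vPsi$ where the Hessian fails to be negative semidefinite — would instead require computing the full matrix Hessian via standard matrix calculus and tracking when its eigenvalues change sign; this is more laborious but not conceptually harder.
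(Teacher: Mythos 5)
Your argument is correct and follows essentially the same route as the paper's proof: restrict the $\vPsi$-dependent part of the log-likelihood to a one-dimensional slice and show the second derivative becomes positive because the total coefficient $\sum_i(\nu+n_i)$ of the convex $-\log|\vPsi+\vS_i|$ terms exceeds the coefficient $k\nu$ of the concave $\log|\vPsi|$ term. The paper reduces to the scalar case $p=1$ and leaves the final inequality as "straightforward," whereas your restriction to the ray $t\vI_p$ with the explicit $t\to\infty$ limit handles every $p$ and every dataset at once, so your write-up is, if anything, slightly more complete.
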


\begin{restatable}[Concavity in $\nu$]{proposition}{propositionConcavityInNu}
  \label{prop:concavityinnu}
  For a fixed positive definite $\vPsi$, the log-likelihood function \eqref{eq:loglik}
  is concave in $\nu$.
\end{restatable}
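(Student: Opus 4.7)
The plan is to decompose the log-likelihood \eqref{eq:loglik} as a function of $\nu$ (with $\vPsi$ fixed and positive definite) into pieces whose concavity is transparent, and then invoke the fact that a sum of concave functions is concave.

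First I would observe that the two terms $\frac{\nu}{2}\log|\vPsi|$ and $-\frac{\nu+n_i}{2}\log|\vPsi + \vX_i^\top\vX_i|$ are both \emph{linear} in $\nu$, because $\vPsi$ and $\vPsi + \vX_i^\top\vX_i$ are fixed positive definite matrices so their log-determinants are finite constants. Linear functions are concave, so these contribute no difficulty. All the work therefore concentrates in the term
\[
  h_i(\nu) \;=\; \log\Gamma_p\!\left(\tfrac{\nu+n_i}{2}\right) - \log\Gamma_p\!\left(\tfrac{\nu}{2}\right).
\]

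Next, using the definition $\Gamma_p(x) = \pi^{p(p-1)/4}\prod_{j=1}^p \Gamma\!\bigl(x - \tfrac{j-1}{2}\bigr)$ (formula \eqref{eq:multigamma}), I would rewrite
\[
  h_i(\nu) \;=\; \sum_{j=1}^p \Bigl[\log\Gamma\!\bigl(\tfrac{\nu+n_i-(j-1)}{2}\bigr) - \log\Gamma\!\bigl(\tfrac{\nu-(j-1)}{2}\bigr)\Bigr].
\]
After the affine change of variable $u = \tfrac{\nu-(j-1)}{2}$ (which preserves concavity), each summand takes the form $\log\Gamma(u+c) - \log\Gamma(u)$ with $c = n_i/2 > 0$. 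Computing the second derivative with respect to $u$ gives
\[
  \frac{d^2}{du^2}\bigl[\log\Gamma(u+c) - \log\Gamma(u)\bigr] \;=\; \psi'(u+c) - \psi'(u),
\]
where $\psi'$ denotes the trigamma function. The key analytic fact is that $\psi'(u) = \sum_{m=0}^\infty (u+m)^{-2}$ is \emph{strictly decreasing} on $(0,\infty)$, since $\psi''(u) = -2\sum_{m=0}^\infty (u+m)^{-3} < 0$. Hence for $c>0$ the difference $\psi'(u+c) - \psi'(u)$ is strictly negative, proving that $\log\Gamma(u+c) - \log\Gamma(u)$ is strictly concave in $u$.

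Summing these concave pieces over $j=1,\ldots,p$ shows $h_i(\nu)$ is concave in $\nu$, and then summing over the independent studies $i=1,\ldots,k$ preserves concavity. Combined with the linear terms, the full log-likelihood \eqref{eq:loglik} is concave in $\nu$ for fixed p.d.\ $\vPsi$. The only nontrivial step is the monotonicity of $\psi'$; I would either cite it as a standard property of the polygamma functions or briefly justify it via the series representation above. The domain caveat $\nu > p-1$ (so all gamma arguments remain in $(0,\infty)$) should be recorded but is automatic from the model assumption.
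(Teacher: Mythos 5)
Your proof is correct and follows essentially the same route as the paper: both split off the $\nu$-linear terms, expand $\log\Gamma_p$ via \eqref{eq:multigamma} into univariate gamma ratios, and conclude concavity from the second derivative $\psi'(u+c)-\psi'(u)<0$ using the monotone decrease of the trigamma function. Your version is marginally more explicit (affine change of variable, series justification that the trigamma is decreasing), but the decomposition and key fact are identical to the paper's.
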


\noindent While the likelihood function is not concave in $\vPsi$ we are able to show the existence and uniqueness of a global maximum in $\vPsi$.

\begin{restatable}[Existence and uniqueness]{proposition}{propositionUniqueMax}
\label{prop:uniquemax}
The log-likelihood \eqref{eq:loglik} has a unique maximum in $\vPsi$ for fixed $\nu$ and $n_\bullet = \sum_{a=1}^k n_a \geq p$.
\end{restatable}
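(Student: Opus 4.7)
The plan is to prove existence via a coercivity argument on the positive definite cone, and uniqueness via strict negative definiteness of the Hessian at every stationary point. For fixed $\nu$ only the $\vPsi$-dependent part of \eqref{eq:loglik} matters, so I would work with
\begin{align*}
g(\vPsi)=\tfrac{k\nu}{2}\log|\vPsi|-\sum_{i=1}^{k}\tfrac{\nu+n_i}{2}\log|\vPsi+\vS_i|.
\end{align*}

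For existence, the identity $\log|\vPsi+\vS_i|=\log|\vPsi|+\log|\vI+\vPsi^{-1}\vS_i|$ recollects $g$ as $-\tfrac{n_\bullet}{2}\log|\vPsi|-\sum_i\tfrac{\nu+n_i}{2}\log|\vI+\vPsi^{-1}\vS_i|$, in which the summation is always nonpositive. Along any sequence with $\lambda_{\max}(\vPsi)\to\infty$ the leading term alone drives $g\to-\infty$. Along a sequence with $\lambda_{\min}(\vPsi)\to 0$ in a direction $\vec{v}$, the hypothesis $n_\bullet\geq p$ makes $\sum_i\vS_i$ positive definite (generically), so $\vS_i\vec{v}\neq 0$ for at least one $i$ and the corresponding $\log|\vI+\vPsi^{-1}\vS_i|$ diverges fast enough to send $g\to-\infty$ there as well. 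Together with continuity this yields a global maximizer $\hvPsi$ in the interior of the pd cone.

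For uniqueness I would compute the Hessian and show it is strictly negative definite at any critical point $\vPsi$; combined with coercivity on the contractible pd cone, a standard mountain-pass argument (two strict local maxima would force a saddle) then rules out more than one maximizer. The first-order condition reads $k\nu\,\vPsi^{-1}=\sum_i(\nu+n_i)(\vPsi+\vS_i)^{-1}$, and a direct computation gives, for symmetric $\vD$, $D^{2}g[\vD,\vD]=-k\nu\tr((\vPsi^{-1}\vD)^{2})+\sum_i(\nu+n_i)\tr(((\vPsi+\vS_i)^{-1}\vD)^{2})$. The key step is the change of variables $\vE=\vPsi^{-1/2}\vD\vPsi^{-1/2}$ and $\vec{C}_i=(\vI+\vPsi^{-1/2}\vS_i\vPsi^{-1/2})^{-1}$ (satisfying $0\prec\vec{C}_i\preceq\vI$), under which the score equation becomes $\sum_i(\nu+n_i)\vec{C}_i=k\nu\vI$; substituting it into $k\nu\|\vE\|_F^{2}=\tr(\vE\cdot k\nu\vI\cdot\vE)$ collapses the Hessian to
\begin{align*}
D^{2}g[\vD,\vD]=-\sum_{i=1}^{k}(\nu+n_i)\,\tr\!\left(\vec{C}_i\vE(\vI-\vec{C}_i)\vE\right)=-\sum_{i=1}^{k}(\nu+n_i)\,\|\vec{C}_i^{1/2}\vE(\vI-\vec{C}_i)^{1/2}\|_{F}^{2}\leq 0.
\end{align*}
The sum vanishes only when $(\vI-\vec{C}_i)\vE=0$ for every $i$, equivalently the columns of $\vE$ lie in $\bigcap_i\vPsi^{1/2}\ker\vS_i=\vPsi^{1/2}\bigcap_i\ker\vS_i$, which is trivial under $n_\bullet\geq p$; hence $\vE=0$ and so $\vD=0$.

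The main obstacle is identifying the change of variables $(\vE,\vec{C}_i)$ that lets the score equation collapse the Hessian into a manifestly nonpositive quadratic form; naive Cauchy--Schwarz-type bounds on the Hessian go the wrong way because the weights in the score equation sum to $\tfrac{k\nu+n_\bullet}{2}>\tfrac{k\nu}{2}$, so one must genuinely exploit both the equality $\sum_i(\nu+n_i)\vec{C}_i=k\nu\vI$ and the eigenvalue bound $\vec{C}_i\preceq\vI$. The hypothesis $n_\bullet\geq p$ then plays a dual role, guaranteeing coercivity by ruling out escape to a singular $\vPsi$ and upgrading the resulting negative semidefinite bound to strict negative definiteness by trivializing the common null space $\bigcap_i\ker\vS_i$.
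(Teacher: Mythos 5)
Your proposal follows essentially the same route as the paper's: coercivity of the log-likelihood on the positive definite cone for existence, strict negative definiteness of the Hessian at every stationary point for the local structure, and a topological argument to upgrade ``every critical point is a strict local maximum'' to uniqueness. Your Hessian computation is the paper's Lemma~2 in different notation --- whitening by $\vPsi^{-1/2}$ and substituting the score equation $\sum_i(\nu+n_i)\vec{C}_i=k\nu\vI$ is exactly the paper's reduction to $\vPsi=\vI$ followed by substitution of \eqref{eq:loglikequation} --- and your Frobenius-norm factorization is a clean equivalent of the paper's use of the matrix inversion lemma. Your mountain-pass argument replaces the paper's gradient-flow/basin-of-attraction argument; both are standard and do the same job. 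The one place where your sketch is genuinely softer than the paper is coercivity. Your first case tacitly assumes $\log|\vPsi|\to\infty$ when $\lambda_{\max}(\vPsi)\to\infty$, which fails if eigenvalues simultaneously collapse; and in the case $\lambda_{\min}(\vPsi)\to 0$ the phrase ``diverges fast enough'' is where the work lies: the coefficient of $\log\lambda_{\min}$ in your decomposition is $\tfrac{k\nu}{2}-\sum_{i\in A}\tfrac{\nu+n_i}{2}$, where $A$ indexes those (necessarily rank-deficient) $\vS_i$ whose kernels contain the collapsing direction, and its positivity needs $\nu>p-1$ together with the generic position of those kernels, neither of which you invoke. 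The paper avoids this bookkeeping entirely via concavity of $\log|\cdot|$, bounding $\sum_i\tfrac{\nu+n_i}{2}\log|\vPsi+\vS_i|\geq C\log\bigl|\vPsi+\sum_i\tfrac{\nu+n_i}{2C}\vS_i\bigr|$ with $C=\tfrac{k\nu+n_\bullet}{2}$, which replaces the individual scatter matrices by a single almost surely positive definite combination and dispatches all three degenerate cases at once; you should adopt that bound or supply the missing eigenvalue accounting.
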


%
%

In the following section estimators of the parameters are derived using moments and the EM algorithm assuming $\nu$ to be fixed.

\subsection{Moment estimator}
The pooled empirical covariance matrix can be viewed as a moment estimator of $\vSigma$.
By the model assumptions, the first and second moment of the $j$'th observation in the $i$'th study, $\vx_{ij}$, is given by $\bbE[\vx_{ij}] = \vec{0}_p$ and
\begin{align*}
  \bbE[\vx_{ij}\vx_{ij}^\top]
    &= \bbE\!\left[ \bbE[ \vx_{ij}\vx_{ij}^\top | \vSigma_i ]\right]
    = \bbE[\vSigma_i]
    = \frac{\vPsi}{\nu - p - 1}
    = \vSigma.
\end{align*}
for all $j = 1, ..., n_i$ and $i = 1, ..., k$. This suggests the estimators
\begin{align}
  \label{eq:pooledest}
  \hvPsi_\text{pool}
  = (\nu - p - 1)\frac{\sum_{i = 1}^k \vS_i}{\sum_{i = 1}^k n_i}
  \text{ and }
  \hvSigma_\text{pool}
  = \frac{\sum_{i = 1}^k \vS_i}{\sum_{i = 1}^k n_i}, \qquad \nu > p + 1
\end{align}
where the latter is obtained by plugging $\hvPsi_\text{pool}$ into \eqref{eq:expcovar}.
This is the well-known pooled empirical covariance matrix.

\subsection{Maximization using the EM algorithm}
Here the updating scheme of the expectation-maximization (EM) algorithm \citep{Dempster1977} for fixed $\nu$ is derived.
We now compute the expectation step of the EM-algorithm.

From \eqref{eq:RCM} we have that,
\begin{align*}
  \vSigma_i          &\sim \calW^{-1}_p\big(\vPsi, \nu\big), \\
  \vS_i | \vSigma_i  &\sim \calW_p(\vSigma_i, n_i) \quad \text{ for } i = 1, ..., k.
\end{align*}
Let $\vDelta_i = \vSigma_i^{-1}$ be the precision matrix and let $\vTheta = \vPsi^{-1}$, then we equivalently have that
\begin{align}
  \vDelta_i
  &\sim \calW_p\big(\vTheta, \nu\big),
  \notag\\
  \vS_i | \vDelta_i
  &\sim \calW_p( \vDelta_i^{-1}, n_i).
  \label{eq:precisiondensity}
\end{align}
From the conjugacy of the inverse Wishart and the Wishart distribution, the posterior distribution of the precision matrix is
\begin{align*}
    \vDelta_i | \vS_i
    &\sim \calW_p\!\Big( \big(\vTheta^{-1} + \vS_i\big)^{-1}, n_i + \nu\Big).
\end{align*}
Hence, by the expectation of the Wishart distribution,
\begin{align*}
  \bbE[\vDelta_i |\vS_i] = (n_i + \nu)\big(\vTheta^{-1} + \vS_i\big)^{-1}.
\end{align*}
The maximization step, in which the log-likelihood $\ell(\vTheta|\vDelta_1, ..., \vDelta_k)$ is maximized, yields the estimate
$
 \hat{\vTheta} = \frac{1}{k\nu}\sum_{i = 1}^k \vDelta_i,
$
which is the mean of the scaled precision matrices $\frac{1}{\nu}\vDelta_i$ (derived in  Appendix \ref{sec:precisionloglik}).
Let $\hvTheta_{(t)}$ be the current estimate of $\vTheta$.
This yields the updating scheme
\begin{align}
  \label{eq:em}
  \hvTheta_{(t+1)}
  = \frac{1}{k\nu}\sum_{i = 1}^k
    (n_i + \nu)\left(\hvTheta_{(t)}^{-1} + \vS_i\right)^{-1}
\end{align}
for $\vTheta_{(t)}$.
We denote the inverse of the estimate obtained by repeated iteration of \eqref{eq:em} by $\hvPsi_\text{EM}$. The EM algorithm can be sensitive to starting values. Hence, starting the algorithm in different starting values can help assesing if a global maximum has been reached.

An approximate maximum likelihood estimator using a first order approximation is also possible (derived in Appendix \ref{sec:amle}).

\subsection{Estimation procedure}
We propose a procedure alternating between estimating $\nu$ and $\vPsi$ while keeping the other fixed.
Given parameters $\hat{\nu}_{(t)}$ and $\hvPsi_{(t)}$ at iteration $t$, we estimate $\hvPsi_{(t+1)}$ using fixed $\hat{\nu}_{(t)}$. Subsequently, we find $\hat{\nu}_{(t+1)}$ by a standard one-dimensional numerical optimization procedure using the fixed $\hvPsi_{(t+1)}$.
This coordinate ascent approach is repeated until convergence as described in Algorithm \ref{alg:RCM}.
\begin{algorithm}[tb]
\caption{RCM coordinate ascent estimation procedure}
\label{alg:RCM}
\begin{algorithmic}[1]
\State \algorithmicrequire{
\State \emph{Sufficient data:} $(\vS_1, n_1), ..., (\vS_k, n_k)$
\State \emph{Initial parameters:} $\hvPsi_{(0)}, \hat{\nu}_{(0)}$
\State \emph{Convergence criterion:} $\varepsilon > 0$
}
\State \algorithmicensure{
\State \emph{Parameter estimates:} $\hvPsi, \hat{\nu}$
}
\Procedure{fitRCM}{$\vS_1, ..., \vS_k, n_1, ..., n_k, \hvPsi_{(0)}, \hat{\nu}_{(0)}, \varepsilon$}
  \State \emph{Initialize}: $l_{(0)} \gets \ell(\hvPsi_{(0)}, \hat{\nu}_{(0)})$
  \For {$t = 1, 2, 3, ...$}
    \State {$\hvPsi_{(t)} \gets U\!\left(\hvPsi_{(t-1)}, \hat{\nu}_{(t-1)}\right)$}
    \State {$\hat{\nu}_{(t)} \gets \argmax_\nu \ell\!\left(\hvPsi_{(t)}, \nu\right)$}
    \State {$l_{(t)} \gets \ell\!\left(\hvPsi_{(t)}, \hat{\nu}_{(t)}\right)$}
    \If {$l_{(t)} - l_{(t-1)} < \varepsilon$}
      \State \Return {$\Big(\hvPsi_{(t)}, \nu_{(t)}\Big)$}
    \EndIf
  \EndFor
\EndProcedure
\end{algorithmic}
\end{algorithm}
The update function $U$ in the algorithm is defined by the derived estimators.
That is, equations \eqref{eq:pooledest}, \eqref{eq:em}, or \eqref{eq:mle} define $U$ as the pooled, EM, or approximate MLE estimates, respectively.

The procedure using the EM step utilizes the results about the RCM log-likelihood and thus provides a guarantee of convergence along with the advantage of a very simple implementation.
Both the EM step and the $\nu$ update will always yield an increase in the likelihood.
The disadvantage is that the identified stationary point might be a local maximum or
saddle-point when considering the log-likelihood function jointly in
$(\vPsi, \nu)$.
Intuitively, the latter possibility happens with zero probability, but it cannot be excluded that the maximum found is not global.

Variations on the convergence criterion can also be considered, such as (a) using the difference in successive parameter estimates, or (b) using relative rather than absolute differences.

\subsection[Interpretation and inference of nu]{Interpretation and inference}
\subsubsection*{Intra-study correlation coefficient}
The heterogeneity parameter $\nu$ has no straightforward interpretation partly because the values of $\nu$ which corresponds to a large study heterogeneity is dependent on the dimension $p$.
We therefore introduce a descriptive statistic analogous to the intra-study correlation coefficient (ICC) \citep{Shrout1979} well known from ordinary meta-analysis.
For the RCM this follows from the definition of the ICC which is defined to be the ratio of the between-study variation $\var(\Sigma_{ij})$ and the total variation $\var(S_{ij})$ of any single pair of variables. In Appendix \ref{app:ICC} it is shown that the ICC is given by:
\begin{align}
  \text{ICC}(\nu)
  = \frac{1}{\nu - p}.
  \label{eq:ICCexprs}
\end{align}
The ICC might in this sense be utilized in better quantifying the reproducibility of the covariance across studies.
A straight-forward plug-in estimator $\widehat{\text{ICC}}(\nu)$ of the ICC of some gene-gene interaction is then $\text{ICC}(\hat{\nu})$.

Though $v > p + 3$ is required for the variances to exist, it is clear that
$\text{ICC}(\nu) \to 1$ for $\nu \to (p+1)^+$ and  $\text{ICC}(\nu) \to 0$ for $\nu \to \infty$
as should be expected.

\subsubsection*{Test for no study heterogeneity}
By the RCM $\nu$ parameterizes an inter-study variance where the size of $\nu$ corresponds to the homogeneity between the studies.
A large $\nu$ yields high study homogeneity while a small $\nu$ yields low homogeneity.
Thus, it might be of interest to test if the estimated homogeneity $\hat{\nu}$ is extreme under the null-hypothesis of no heterogeneity (i.e.\ infinite homogeneity).
I.e.\ a test for the hypothesis $H_0: \nu = \infty$ which is equivalent to
\begin{align*}
  H_0: \vSigma_1 = ... = \vSigma_k = \vSigma.
\end{align*}
The two are equivalent since sampling the covariance matrix from the inverse Wishart distribution becomes deterministic for $\nu = \infty$.
Therefore, testing this hypothesis can also be interpreted as testing whether the data is adequately explained when leaving out the hierarchical structure.

The distribution of $\hat{\nu}$ under the null hypothesis is not tractable.
However, in practice under $H_0$ or when $\nu$ is extremely large the estimated $\hat{\nu}_\text{obs}$ will be finite as the intra-study variance dominates the total variance.
We note that the null distribution of $\hat{\nu}$ does not depend on $\vSigma$.
We propose approximating the distribution of $\hat{\nu}$ under $H_0$ by resampling.
To do this, the model is simply fitted a large number of times $N$ on datasets re-sampled under $H_0$ mimicked by permuted study labels to get $\hat{\nu}_0^{(1)}, ..., \hat{\nu}_0^{(N)}$.
As \textit{small} values of $\hat{\nu}$ are critical for $H_0$ approximate acceptance regions can be constructed from $\hat{\nu}_0^{(j)}, j = 1,...,N$. Likewise, an approximation of the $p$~value testing $H_0$ can be obtained by
\begin{align}
  \label{eq:pvalue}
  P = \frac{1}{N+1}
  \Bigg(
    1 + \sum_{j=1}^N \bbOne\!\Big[\hat{\nu}_0^{(j)} < \hat{\nu}_\text{obs}\Big]
  \Bigg),
\end{align}
where $\bbOne[ \,\cdot\, ]$ is the indicator function.
The addition of one to both nominator and denominator adds a positive bias to the approximate p-value and is considered minimally needed according to \citet{Phipson2010}.
This is approximately the fraction of $\hat{\nu}^{(j)}_0$'s smaller than $\hat{\nu}_\text{obs}$.

\subsection{Implementation and availability}
Algorithm \ref{alg:RCM} and the different estimators are implemented in the statistical programming language R \citep{R} with core functions in \texttt{C++} using packages Rcpp and RcppArmadillo \citep{Eddelbuettel2011, RcppArmadillo}.
They are incorporated in the open-source R-package \texttt{correlateR} freely available for forking and editing \citep{correlateR}.
We refer to the information here for further details and installation instructions.
This document was prepared with \texttt{knitr} \citep{Xie2013} and LaTeX.
To reproduce this document see \url{http://github.com/AEBilgrau/RCM}.

\section{Simulation experiments}
\label{sec:estimationassessment}
\subsection{Evalutation of Network Estimation}

To assess the estimation procedures ability to estimate $\Sigma$ we generated data from the hierarchical model \eqref{eq:RCM} in two different scenarios. In the first scenario we define a simple block matrix of dimension $p=40$ with four blocks of size $10$. Each block has an internal pairwise correlation of $0.5$, blocks 1\and 2 and 3\and 4 have a correlation of $0.3$ between all pairs, and the remaing correlations are set at $0.1$. In the second scenario we select the top 100 genes, ranked by variance, from the IDRC dataset (see Table \ref{tab:studies}) and used the scatter matrix of these genes, scaled as a correlation matrix, as the $\Sigma$ matrix for simulation. For both scenarios we performed agglomerative hierchacical clustering using Ward-linkage and 1 minus the absolute correlation as a distance measure. Heatmaps with associated hierarchical clustering of both $\Sigma$ matrices are shown in Supplementary Figure \ref{fig:S1}.

For both scenarios we simulate data with $k=3$ and a range of values for $n_i$ and $\nu$. Each simulation was repeated 100 times, and the correlation matrix was estimated using the EM, MLE, and Pool approaches as outlined in Section \ref{sec:RCMmodel}. The similarity of the estimated and true $\Sigma$ matrices and associated networks were evaluated using respectively the Kullback-Leibler divergence \citep{Mattiussi2011} and the cophenetic correlation \citep{Sokal1962}. The cophenetic correlation is defined as the correlation of cophenetic distances of all pairwise distances in a tree, where the cophenetic distance is the height of the lowest point on the tree where two points merge. Results from the first scenario (EM and Pool method in Table \ref{tab:results.clustering}, full results in Supplementary Table \ref{tab:results.clustering.full}) show that for heterogenous data ($\nu = 50, 100$) and $n_i \geq p$ the EM estimator outperforms the Pool and MLE estimators using both measures. Examples of tanglegrams comparing networks estimated with the EM and Pool method and the true $\Sigma$ matrix are shown in Supplementary Figure \ref{fig:S2}. Tanglegrams were constructed using the R-package \texttt{dendextend} \citep{Galili2015}. Increasing the $\nu$ parameter, thereby making the data more homogeneous across groups diminishes the advantage of the EM estimator. Similar results were found in the second scenario using a $\Sigma$ matrix based on the IDRC dataset (Table \ref{tab:results.clustering.idrc}). Results furthermore showed that the estimates in terms of cophenetic correlation for the MLE and Pool approaches are nearly identical. We expect this to be caused by the fact that the MLE method is initilized with the Pool estimates and stops after few iterations; presumably a better estimate cannot be found in these simple scenarios.

\begin{table}[!tbp]
{\tiny
\caption{Mean cophenetic correlation and 
            Kullback-Leibler divergence with $95\%$ confidence,
            for estimated  vs true network for 
            different values of $\nu$ and $n_i$ using the EM or Pool method\label{tab:results.clustering}} 
\begin{center}
\begin{tabular}{rrcllcll}
\hline\hline
\multicolumn{2}{c}{\bfseries }&\multicolumn{1}{c}{\bfseries }&\multicolumn{2}{c}{\bfseries Cophenetic Correlation}&\multicolumn{1}{c}{\bfseries }&\multicolumn{2}{c}{\bfseries Kullback-Leibler divergence}\tabularnewline
\cline{1-8}
\multicolumn{1}{c}{$n_i$}&\multicolumn{1}{c}{$\nu$}&\multicolumn{1}{c}{}&\multicolumn{1}{c}{EM}&\multicolumn{1}{c}{Pool}&\multicolumn{1}{c}{}&\multicolumn{1}{c}{EM}&\multicolumn{1}{c}{Pool}\tabularnewline
\hline
$  20$&$   50$&&0.19 (0.17;0.21)&0.2 (0.18;0.22)&&240.37 (232.79;247.94)&227.33 (220.13;234.52)\tabularnewline
$  30$&$   50$&&0.26 (0.23;0.28)&0.25 (0.23;0.28)&&126.61 (123.81;129.41)&121.81 (119.1;124.51)\tabularnewline
$  50$&$   50$&&0.6 (0.56;0.64)&0.43 (0.39;0.46)&&75.62 (73.5;77.74)&73.62 (71.54;75.69)\tabularnewline
$ 100$&$   50$&&0.88 (0.85;0.9)&0.7 (0.67;0.74)&&33.04 (32.56;33.52)&30.9 (30.44;31.36)\tabularnewline
$ 500$&$   50$&&0.99 (0.98;0.99)&0.91 (0.89;0.93)&&23.64 (23.41;23.88)&21.31 (21.1;21.53)\tabularnewline
$1000$&$   50$&&0.99 (0.99;0.99)&0.9 (0.88;0.92)&&22.86 (22.59;23.14)&20.53 (20.28;20.78)\tabularnewline
$  20$&$  100$&&0.35 (0.32;0.38)&0.35 (0.32;0.37)&&76.69 (74.05;79.33)&72.36 (69.85;74.86)\tabularnewline
$  30$&$  100$&&0.4 (0.37;0.42)&0.39 (0.37;0.42)&&34.51 (33.76;35.26)&33.14 (32.42;33.87)\tabularnewline
$  50$&$  100$&&0.72 (0.68;0.75)&0.69 (0.66;0.72)&&27.92 (27.2;28.65)&27.26 (26.55;27.97)\tabularnewline
$ 100$&$  100$&&0.97 (0.96;0.98)&0.96 (0.95;0.97)&&8.02 (7.88;8.16)&7.85 (7.71;7.98)\tabularnewline
$ 500$&$  100$&&1 (0.99;1)&1 (1;1)&&3.34 (3.31;3.38)&3.18 (3.15;3.21)\tabularnewline
$1000$&$  100$&&1 (1;1)&1 (1;1)&&2.95 (2.92;2.98)&2.79 (2.77;2.82)\tabularnewline
$  20$&$ 1000$&&0.51 (0.48;0.54)&0.51 (0.48;0.54)&&52.66 (51.04;54.29)&49.61 (48.07;51.16)\tabularnewline
$  30$&$ 1000$&&0.61 (0.58;0.64)&0.61 (0.58;0.64)&&22.5 (22.05;22.95)&21.59 (21.16;22.02)\tabularnewline
$  50$&$ 1000$&&0.81 (0.78;0.84)&0.81 (0.78;0.84)&&20.49 (19.91;21.08)&20.02 (19.44;20.59)\tabularnewline
$ 100$&$ 1000$&&0.99 (0.98;0.99)&0.99 (0.99;0.99)&&4.47 (4.36;4.58)&4.42 (4.31;4.52)\tabularnewline
$ 500$&$ 1000$&&1 (1;1)&1 (1;1)&&0.71 (0.7;0.72)&0.71 (0.7;0.72)\tabularnewline
$1000$&$ 1000$&&1 (1;1)&1 (1;1)&&0.41 (0.41;0.42)&0.41 (0.4;0.42)\tabularnewline
$  20$&$10000$&&0.53 (0.5;0.55)&0.52 (0.5;0.55)&&53.15 (51.26;55.04)&50.07 (48.28;51.86)\tabularnewline
$  30$&$10000$&&0.65 (0.61;0.68)&0.64 (0.61;0.68)&&21.91 (21.46;22.35)&21.01 (20.59;21.44)\tabularnewline
$  50$&$10000$&&0.83 (0.8;0.85)&0.82 (0.79;0.85)&&19.88 (19.29;20.48)&19.42 (18.84;20.01)\tabularnewline
$ 100$&$10000$&&0.99 (0.99;1)&0.99 (0.99;1)&&4.19 (4.11;4.27)&4.14 (4.06;4.22)\tabularnewline
$ 500$&$10000$&&1 (1;1)&1 (1;1)&&0.59 (0.58;0.6)&0.59 (0.58;0.6)\tabularnewline
$1000$&$10000$&&1 (1;1)&1 (1;1)&&0.28 (0.27;0.28)&0.28 (0.27;0.28)\tabularnewline
\hline
\end{tabular}\end{center}}
\end{table}

\subsection{Computation time for the RCM model}

Next we tested the performance of the different methods in terms of computation time. Figure \ref{fig:1} shows computation times of the methods with varying values of the dimension of the data, and demonstrates that the increased performance of the EM method comes at an extra cost in computation time.
\begin{figure}
  \includegraphics[width=0.8\linewidth]{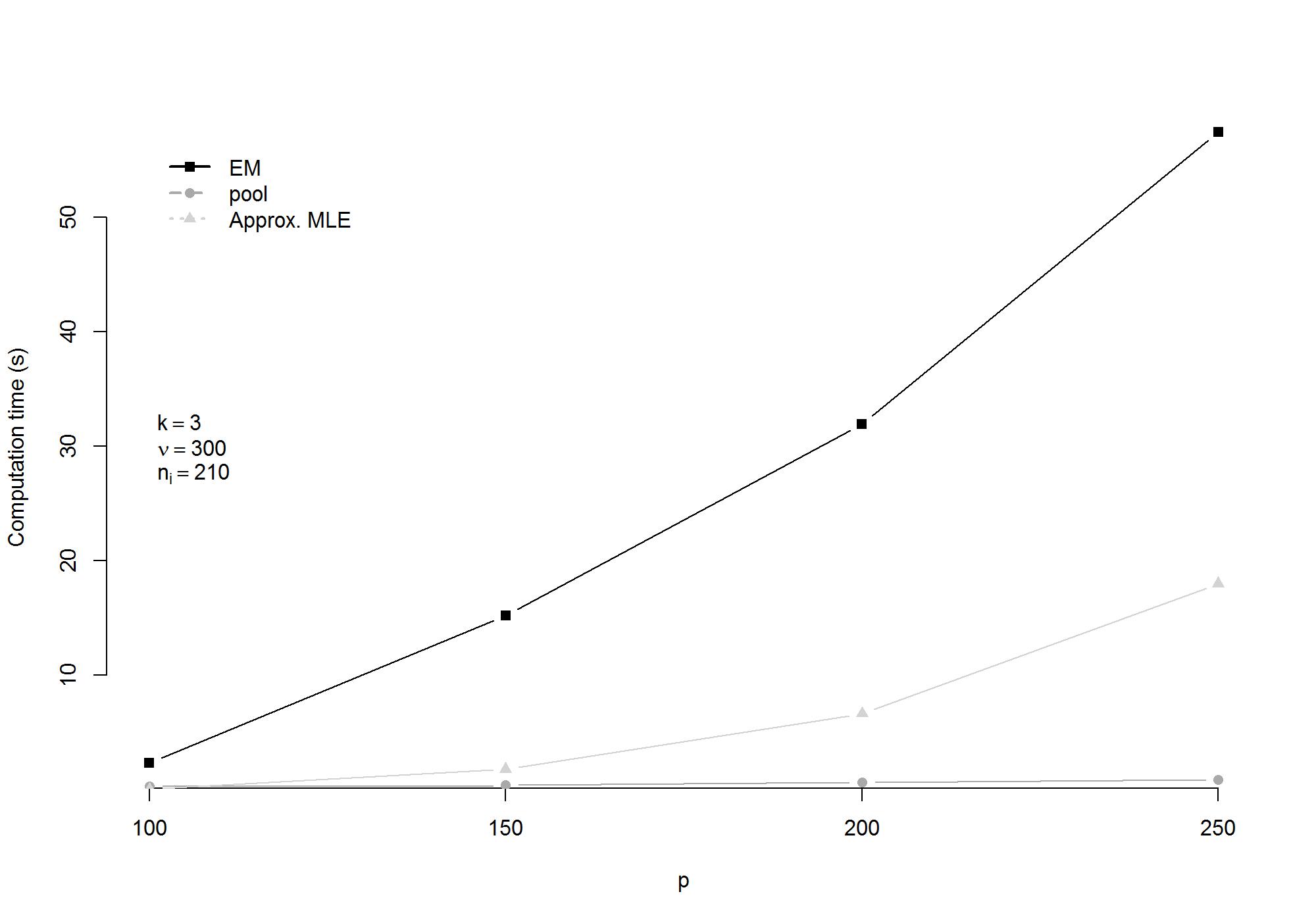}
  \caption{The mean computation time of $10$ fits with varying dimension $p$.}
  \label{fig:1}
\end{figure}

\subsection{Evaluation of the hypothesis testing}

Finally we investigate the performance of the P-value for the hypothesis test suggested in \eqref{eq:pvalue}. To do this, we simulate from the hierarchical model with $k=3$ and a range of different values for $p$, $\nu$, and $n_i$. For these simulations we used a $\Psi$ matrix with a diagonal of ones and $0.5$ for off-diagonal values. Simulations were done 100 times for each scenario, and 500 permutations were done for each simulation. Results summarized as boxplots of the P-values obtained in the 100 simulations for each scenario are shown in Supplementary Figure \ref{fig:S3}. We find that for heterogenous data (e.g. $p=20, \nu=30$) the null-hypothesis is clearly rejected if $n_i>p$. When increasing $\nu$ thus making the groups more similar, more observations are needed to reject the null hypothesis, while for identical groups, i.e. $\nu=\infty$, the null-hypothesis is generally not rejected. The P-values obtained from the permuation test thus performs as intended.

\section{DLBCL meta-analysis}
\label{sec:DLBCL}

Diffuse large B-cell lymphoma (DLBCL) is an aggressive cancer subtype accounting for $30\%-58\%$ of non-Hodgkin's lymphomas (NHL) which constitutes about $90 \%$ of all lymphomas \citep{Project1997}.

\subsection{Data and preprocessing}
A large amount of DLBCL gene expression datasets are now available online at the NCBI (National Center for Biotechnology Information) Gene Expression Omnibus (GEO) website.
Ten large-scale DLBCL gene expression studies were downloaded and preprocessed using custom brainarray chip definition files (CDF) \citep{Dai2005} and RMA-normalized using the R-package \texttt{affy} \citep{affy}.
The corresponding GEO-accession numbers and microarray platforms used are seen in Table \ref{tab:studies}.
The downloaded data yield a total of 2046 samples with study sizes in the range 78-469.
The summarization using brainarray CDFs to Ensembl gene identifiers facilitates cross-platform integration.

After RMA normalization and summarization, the data were brought to a common scale by quantile normalizing all data to the common cumulative distribution function of all arrays. Lastly, the datasets were reduced to 11573 common genes represented in all studies and array platforms. Figure \ref{fig:S4} shows a plot of the first and second principal components of the combined dataset. We see a clear split on the first principal component, indicitating a possible batch effect and heterogeneous data, and thus a situation where the EM estimator might offer an advantage compared to the simpler Pool approach.

\begin{table}[!tbp]
\caption{Overview of studies used with GEO accession number from the NCBI Gene expression omnibus website, the relevant reference, array types used in the study, and number of samples and features on the used array.\label{tab:studies}}
\begin{center}
\begin{tabular}{llllp{2cm}ll}
\hline\hline
\multicolumn{1}{l}{}&
\multicolumn{1}{c}{GEO no.}&
\multicolumn{1}{c}{Name}&
\multicolumn{1}{c}{Reference}&
\multicolumn{1}{c}{Used arrays}&
\multicolumn{1}{c}{$n$}\tabularnewline
\hline
1&GSE56315&CHEPRETRO&\cite{DybkaerBoegsted2015}&hgu133plus2&89\tabularnewline
2&GSE19246&BCCA&\cite{Williams2010}&hgu133plus2&177\tabularnewline
3&GSE12195&CUICG&\cite{Compagno2009}&hgu133plus2&136\tabularnewline
4&GSE22895&HMRC&\cite{Jima2010}&hugene10st&101\tabularnewline
5&GSE31312&IDRC&\cite{Visco2012}&hgu133plus2&469\tabularnewline
6&GSE10846&LLMPP R-CHOP&\cite{Lenz2008}&hgu133plus2&181\tabularnewline
7&GSE10846&LLMPP CHOP&\cite{Lenz2008}&hgu133plus2&233\tabularnewline
8&GSE34171&MDFCI&\cite{Monti2012}&hgu133plus2, snp6&90\tabularnewline
9&GSE34171&MDFCI&\cite{Monti2012}&hgu133a, hgu133b&78\tabularnewline
10&GSE22470&MMML&\cite{Salaverria2011}&hgu133a&271\tabularnewline
11&GSE4475&UBCBF&\cite{Hummel2006}&hgu133a&221\tabularnewline
\hline
\end{tabular}\end{center}
\end{table}

\subsection{Analysis}
For each dataset the scatter matrix $\vS_i$ of the top 300 most variable genes (as measured by the pooled variance across all studies) was computed as the sufficient statistics along with the number of samples.

The parameters of the RCM were estimated using the EM algorithm and yielded the $300 \times 300$ matrix $\hvPsi$, $\hat{\nu} = 773.16$, 
and ICC = $0.0021$. The RCM was fitted using three different initial sets of parameters which all converged to the same parameter estimates.
Log-likelihood traces, iterations used, and computation times are seen in Figure \ref{fig:2}.
From the parameter estimate, the common expected covariance $\hvSigma = (\hat{\nu}-p-1)^{-1}\hvPsi$ was computed and subsequently scaled to the corresponding correlation matrix $\hat{\vec{R}}$.

\begin{figure}
  \includegraphics[width=0.8\linewidth]{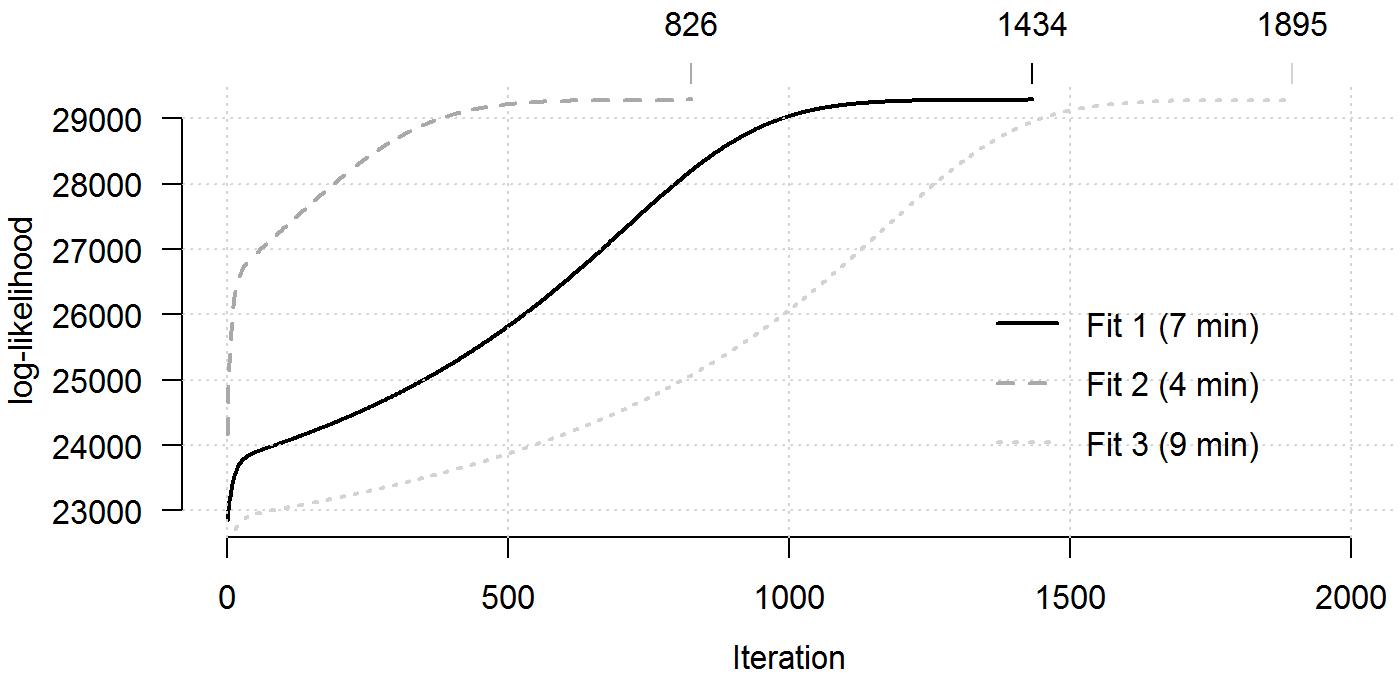}
  \caption{The trace of the log-likelihood for three different starting values of $\vPsi$ and $\nu$ using the EM algorithm and computational times in minutes. The number of iterations used for each fit is shown above.}
  \label{fig:2}
\end{figure}

Despite the low ICC value the permutation test yielded a P-value for the null hypothesis of study homogeneity of $0.002$, clearly rejecting it. This means a significant difference 
has been detected between the estimated covariance structures across studies.
This low ICC might suggest selecting the most variable genes bias the ICC towards inter-study homogeneity of covariances. To further investigate the low ICC value we randomly sampled 300 genes and estimated the $\nu$ parameter 100 times. This gave a value of $\nu$ ranging from 382.69 to 395.18 with a mean of 388.87, corresponding to an ICC ranging from 0.0105 to 0.0121 with a mean of 0.0113; histograms are shown in Supplementary Figure \ref{fig:S7}. This indicates a bias towards more homogeneity for the high variance selected genes.

For simplicity we employed a standard network analysis to the estimated common correlation matrix $\hat{\vec{R}}$ across all studies. To identify clusters with high internal correlation, we used agglomerative hierarchical clustering with Ward-linkage and distance measure defined as 1 minus the absolute value of the correlation. The dendrogram was arbitrarily pruned at a height which produced 5 modules. The Modules are given different colors. Figure \ref{fig:3} shows the heatmap, associated network modules and suggested function.

\begin{figure}
\begin{subfigure}{0.5\textwidth}
  \centering
  \includegraphics[width=\linewidth]{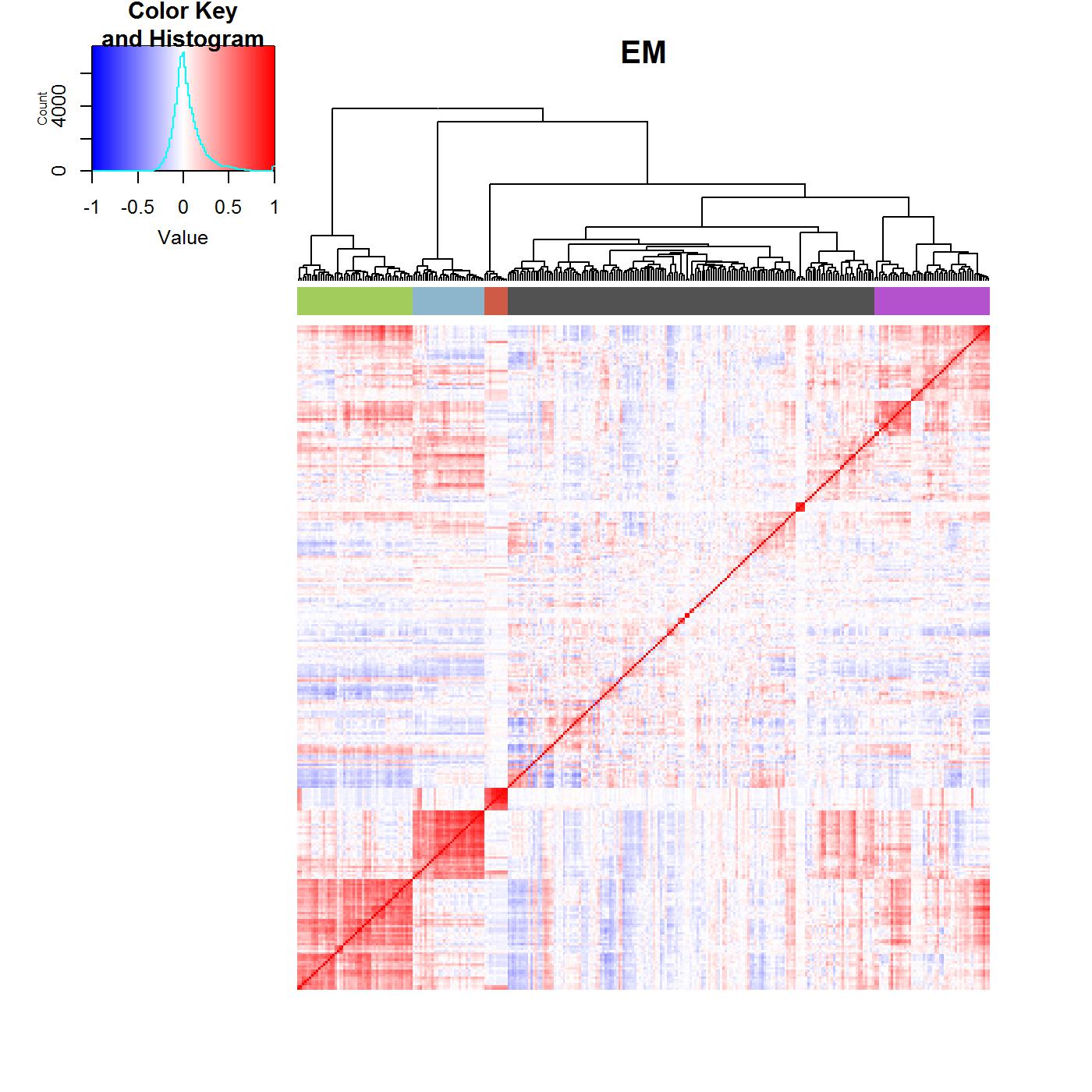}
   \label{fig:3a}
\end{subfigure}%
\begin{subfigure}{0.5\textwidth}
  \centering
  \includegraphics[width=\linewidth]{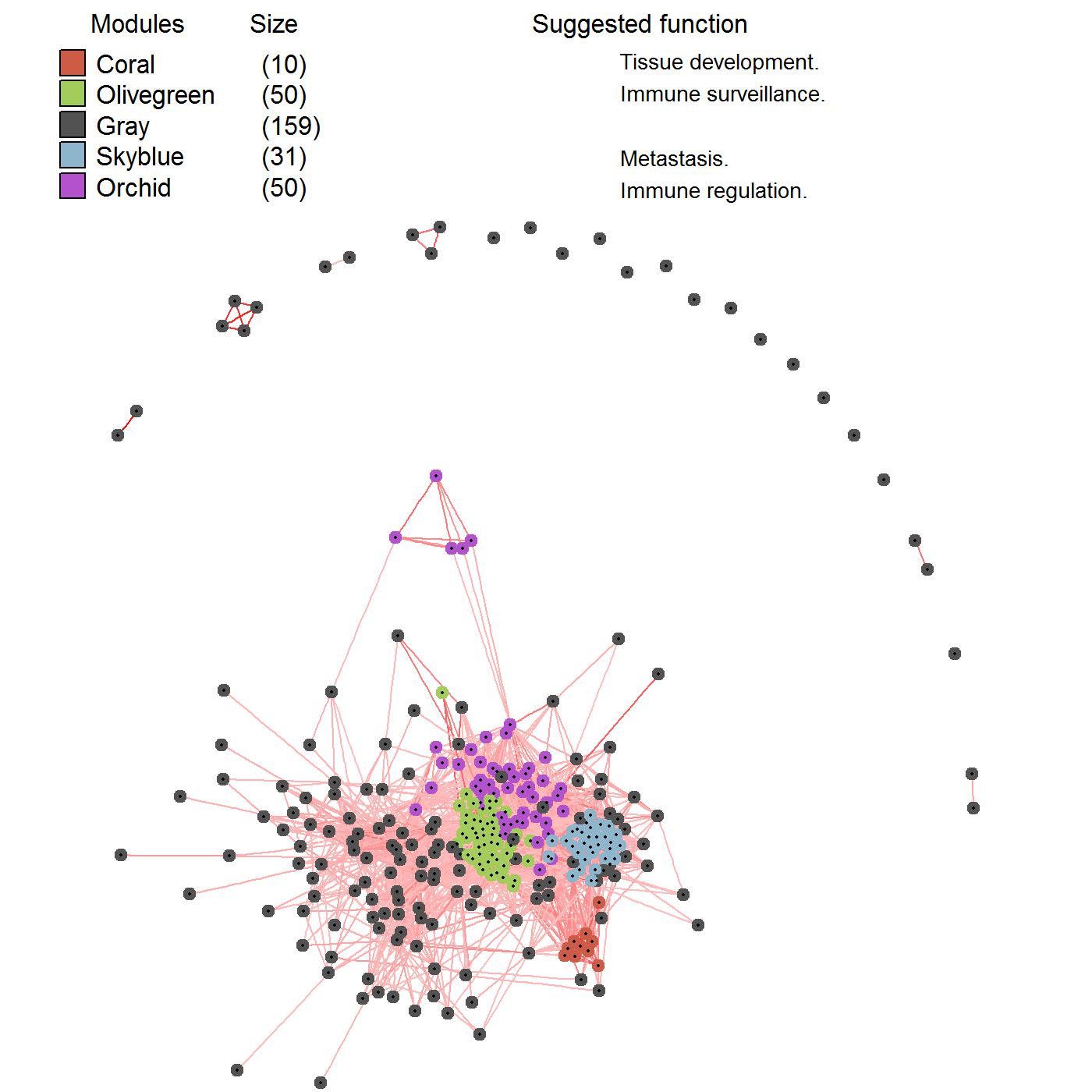}
  \label{fig:3b}
\end{subfigure}
\caption{Heatmap and correlation network for the estimated correlation matrices of the top 300 genes for the DLBCL data using the EM method. The network is cut at a height producing 5 clusters}
\label{fig:3}
\end{figure}

We checked if the identified modules were prognostic for overall survival (OS) in the CHOP and R-CHOP-treated cohort datasets of GSE10846. To do this, the eigengene \citep{Horvath2011} for each module was computed. The module eigengene is the first principal component of the expression matrix of the module which thus can be represented by a linear combination of the module genes. We also report the amount of variation the eigengene represents by calculating the explained variation of the first pricipal component. Multiple Cox proportional hazards model for OS was fitted with the module eigengenes as covariates.  For the prognostically interesting and tightly clustered olivegreen module, the Kaplan-Meier estimates were computed for groups arising when dichotomizing the values of the corresponding eigengene as above or below the median value. These results are shown in Figure \ref{fig:4}. The proportion of variance explained by the eigengene in the CHOP and R-CHOP datasets for respectively the Coral, Olivegreen, Gray, Skyblue and Orchid modules were 0.72, 0.6, 0.11, 0.7, 0.31, and 0.77, 0.55, 0.11, 0.7, 0.31.

\begin{figure}
  \includegraphics[width=0.8\linewidth]{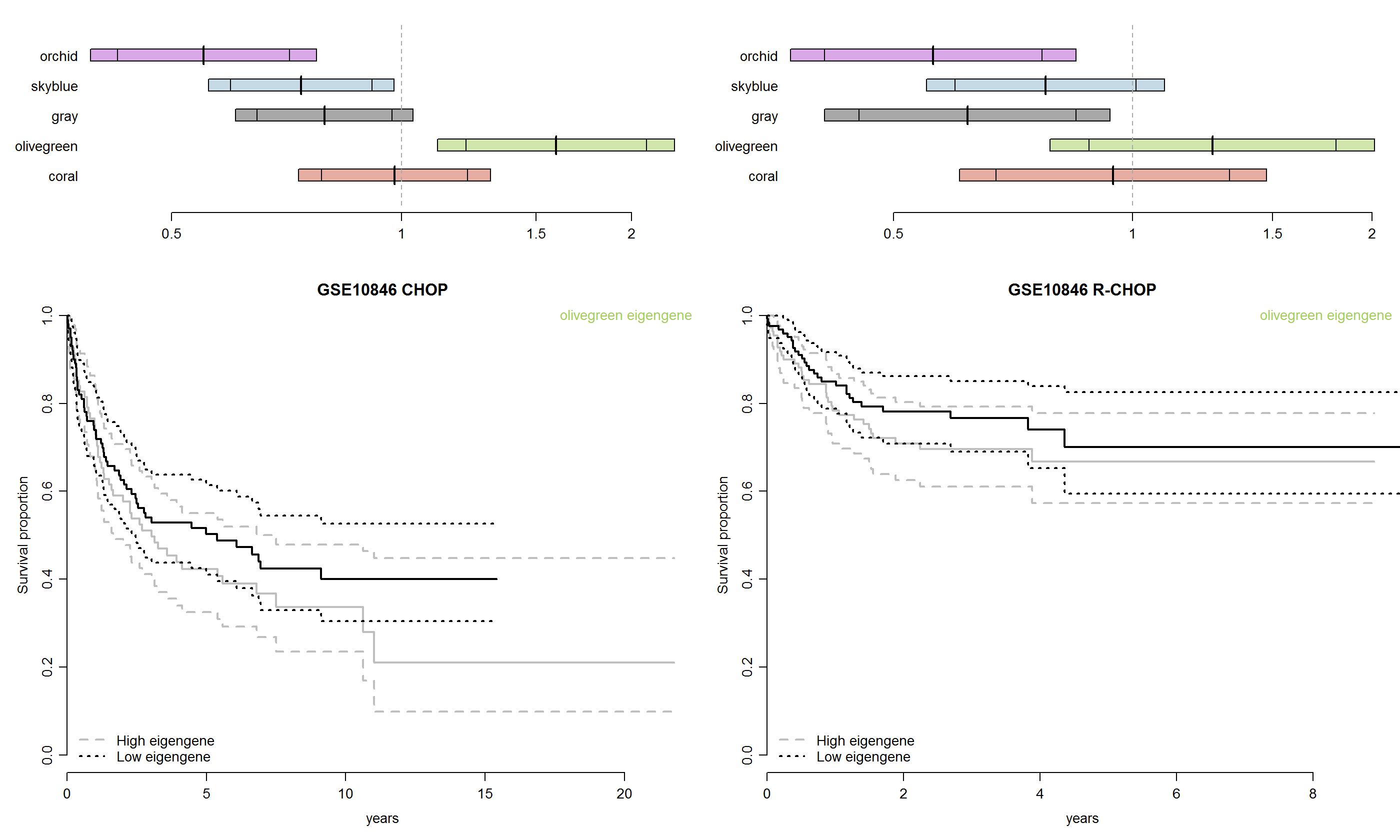}
  \caption{The top row shows $95\%$ and $99\%$ CI for the hazard ratio for each eigengene in the multiple Cox proportional hazards model containing all eigengenes in the CHOP and R-CHOP dataset. The bottom row shows Kaplan-Meier estimates (and $95\%$ CI) for the overall survival for patients stratified by the dichotomized olivegreen eigengene.}
  \label{fig:4}
\end{figure}

Next, the modules were screened for biological relevance using GO (Gene Ontology) Biological Process, Molecular Function, and Cellular Component as well as REACTOME and KEGG pathway enrichment analysis. This was done using the g:profiler web server \citep{Reimand2016} via the accompanying R-package \texttt{gProfileR} \citep{gProfileR2016}. Since we pre-selected the top 300 genes by variance, the enrichment analysis was done using only these as the background genes. 
Top genes for each module, ranked by connectivity, are shown in Table \ref{tab:top.genes.em}, while results of the enrichment analysis for each of the modules are shown in Supplementary Table \ref{tab:enrichment.em}. 
Inspection of the enrichment analysis  and most connected genes allowed us to hypothesize that the coral module is involved in "tissue development" (strong association with GO:0009888 tissue development), the Skyblue module is involved in "metastasis" (strong association with G0:0009611-response to wounding and GO.00442060-wound healing, \citep{Opdenaker2015}), the orchid module involved in "immune regulation" (strong association with GO:0002376-immune system process), and the olivegreen module involved in "immune surveillance" (strong association with GO:0006952-defense response and GO:0045087-innate immune response).

\begin{table}[!tbp]
{\tiny
\begin{center}
\begin{tabular}{lclclclcl}
\hline\hline
\multicolumn{1}{c}{\bfseries Gray}&\multicolumn{1}{c}{\bfseries }&\multicolumn{1}{c}{\bfseries Olivegreen}&\multicolumn{1}{c}{\bfseries }&\multicolumn{1}{c}{\bfseries Orchid}&\multicolumn{1}{c}{\bfseries }&\multicolumn{1}{c}{\bfseries Skyblue}&\multicolumn{1}{c}{\bfseries }&\multicolumn{1}{c}{\bfseries Coral}\tabularnewline
\cline{1-9}
\multicolumn{1}{c}{n = 159}&\multicolumn{1}{c}{}&\multicolumn{1}{c}{n = 50}&\multicolumn{1}{c}{}&\multicolumn{1}{c}{n = 50}&\multicolumn{1}{c}{}&\multicolumn{1}{c}{n = 31}&\multicolumn{1}{c}{}&\multicolumn{1}{c}{n = 10}\tabularnewline
\hline
MYBL1&&FCER1G&&CD2&&COL5A2&&KRT6A\tabularnewline
BATF&&C1QB&&CD3D&&COL1A2&&SPRR1A\tabularnewline
STAP1&&C1QA&&GIMAP4&&COL3A1&&SPRR1B\tabularnewline
CYB5R2&&GBP1&&PTGDS&&THBS2&&KRT13\tabularnewline
TNFRSF13B&&RARRES3&&CCL19&&COL6A3&&SPRR3\tabularnewline
CD44&&IDO1&&CLU&&COL1A1&&S100A2\tabularnewline
MARCKSL1&&CD14&&ADAMDEC1&&COL5A1&&KRT14\tabularnewline
LRMP&&LILRB2&&TRBC2&&VCAN&&DSP\tabularnewline
HCK&&SERPING1&&ITM2A&&FAP&&KRT5\tabularnewline
MME&&PSTPIP2&&LGALS2&&MMP2&&\tabularnewline
LMO2&&GZMA&&ITK&&SULF1&&\tabularnewline
VPREB3&&CCL8&&PLA2G2D&&MXRA5&&\tabularnewline
BCL2A1&&IFNG&&IL7R&&DCN&&\tabularnewline
BLNK&&GBP2&&PLA2G7&&LUM&&\tabularnewline
HLA-DOB&&CXCL10&&ENPP2&&SPARC&&\tabularnewline
RRAS2&&SLAMF7&&IL18&&POSTN&&\tabularnewline
STAG3&&FGL2&&CHI3L1&&COL15A1&&\tabularnewline
BACH2&&CD163&&TFEC&&TMEM45A&&\tabularnewline
CCND2&&CXCL11&&CXCL13&&COL11A1&&\tabularnewline
PDGFD&&GZMH&&CCL21&&CTSK&&\tabularnewline
NCF2&&ALDH1A1&&CSTA&&EMP1&&\tabularnewline
SPINK2&&CXCL9&&MMP9&&AEBP1&&\tabularnewline
MNDA&&GZMK&&LYZ&&TGFBI&&\tabularnewline
MS4A1&&GZMB&&HSD11B1&&GJA1&&\tabularnewline
CD22&&KCNJ2&&APOC1&&PLS3&&\tabularnewline
OSBPL10&&CPVL&&CXCL14&&TIMP1&&\tabularnewline
GPR137B&&IGSF6&&C3&&ANXA1&&\tabularnewline
GRHPR&&LGMN&&MAL&&TNFAIP6&&\tabularnewline
SORL1&&MT2A&&CYP27B1&&SPP1&&\tabularnewline
IGF2BP3&&MT1G&&LAMP3&&&&\tabularnewline
SYBU&&CD8A&&CHIT1&&&&\tabularnewline
TCL1A&&MS4A4A&&PLAC8&&&&\tabularnewline
ZNF804A&&CRTAM&&SELL&&&&\tabularnewline
SLC12A8&&S100A9&&KLRB1&&&&\tabularnewline
CTGF&&MARCO&&CD69&&&&\tabularnewline
FCRL2&&S100A8&&ROBO1&&&&\tabularnewline
DUSP5&&MT1M&&ORM1&&&&\tabularnewline
CCR10&&GPX3&&S1PR1&&&&\tabularnewline
ALOX5AP&&GNLY&&CCR7&&&&\tabularnewline
RGCC&&MT1E&&GPR183&&&&\tabularnewline
\hline
\end{tabular}
\caption{The identified modules, their sizes, and member genes. The genes are sorted decreasingly by their intra-module connectivity (sum of the incident edge weights). Only the top 40 genes are shown.\label{tab:top.genes.em}}\end{center}}
\end{table}

From the gene enrichment and survival analysis the olivegreen module appeared particularly interesting, as we notice a strong involvement of immune response and an association between high value of the eigengene expression and poor survival, which eventually could make these patients candidates for experimental immunotherapies. Several of the genes, e.g. S100A8, S100A9, CD14, and CD163 with the highest connectivity in this module have been associated to immunotherapy \citep{fulmer2008, Cheng2008, pmid28331613}. As prominent examples S100A8 (MRP8; calgranulin A) and the gene S100A9 (MRP14; calgranulin B) appear in the list.  This is interesting
as compelling research has shown that the S100 family of calciumbinding
proteins maintain immunosuppressive myeloid-derived suppressor (MDS) cells at the tumor site
\citep{fulmer2008}. Notably, in mice injected with lymphoma cells, knockout
of S100A9 resulted in greater tumor infiltration of T-cells and less accumulation
of MDS cells than that seen in wild-type mice \citep{Cheng2008}.
The knockout mice had higher rates of tumor rejection and lower tumor
size than their wild-type littermates. This result indicates that knockdown
of these proteins may improve the outcome of immunotherapy strategies in
patients with values of the eigengene of the olivegreen module.

Finally, we compared the network analyses based on the covariance matrix obtained by the EM to that obtained by the Pool methods. The upper row of Supplementary Figure \ref{fig:S5} shows the heatmap and associated network modules for the Pool method, when the dendogram is cut at 5 modules, Supplementary Figure \ref{fig:S6} shows plots for the survival analysis, and top genes and gene enrichments are given in Supplementary Tables \ref{tab:top.genes.pool} and \ref{tab:enrichment.pool}. For the Pool method, we chose for each module the same color as the module of the EM based clustering with most overlapping genes. In the lower row of Figure \ref{fig:S5} a tangleram was constructed and the cophenetic correlation was calculated. We noticed generally a great overlap between the modules, but a low cophenetic correlation. With background in the simulation we anticipate the Pool method has lower efficiency than the EM method. 

The olivegreen and coral modules seem to be so tightly regulated that they manifest themselves for both methods, which is also seen in the enrichment analyses. However, the size of the skyblue module is increased for the pool method by acquiring genes from the grey module identified by the EM method, but the overall enrichment is not changed.  For the orchid module, we notice a number of genes ending up in the grey module for the pool method. This has the consequence that the immune regulation fingerprint disappers using the pool method. Morever, if we look at the less correlated intramodular connections the noise plays a larger role leading to a less clear separation between the modules for the Pool method. This can have potential biological implications, when regulating hub genes resulting in intra module cascades of reactions.

\section{Discussion}
\label{sec:conclusion}
The RCM for meta-analysis of covariance structures was shown to be superior to simple pooling as suggested previously in the literature. The estimated covariance matrix was also capable of providing a dissimilarity measure, which was able to pinpoint alternative biologically meaningful gene correlation networks in DLBCL, which can be used to formulate new hypothesis about the role of immune therapy in DLBCL.

However, the proposed testing is computationally demanding and only feasible when $p$ is sufficiently small. This could e.g.\ be overcome by improved and faster fitting procedures or by deriving the distribution of $\hat{\nu}$ under the null hypothesis. Yet the latter is seemingly intractable as $\hat{\nu}$ is a very complex function of the data. The fact that the null-hypothesis lies on the edge of the parameter space also seems to constrain the feasibility of deriving such a distribution. One might question whether the added utility of the $\nu$ parameter provides sufficient relaxation of the covariance homogeneity. Therefore, the present work should be considered a first step in the direction of explicitly modelling the inter-study variation of covariance matrices. It is also worth noticing, that although the suggested method proved to be superior to simple pooling, it only works for small or moderate numbers of features $p$. This can partly be alleviated by combining multiple studies to yield a sufficiently large total sample size $n_\bullet$ that allows for the estimation of large covariance matrices. Turning to using $p$-values seems tempting, but one should be aware, as with all hypothesis testing, that the exact threshold of ICC (or $\nu$) needed to claim homogeneous studies is dependent on the sample size and the relevant effect size. In this respect the relevant effect size is unclear and will be problem dependent.

The moderate size of $p$ is a severe drawback as many methods have been published concerning estimation of large covariance matrices by various regularization methods \citep{Meinshausen2006, Friedman2008, vanWieringen2014}. Therefore we believe this work could be further enriched by combining the method with regularized estimation. In the future such generalizations of the model to $p \gg n_\bullet$ is extremely interesting though out of scope for this article.

In conclusion the article demonstrates an advantageous model based way of conducting meta-analaysis of covariance matrices - especially in a setting with moderate number of features compared to the dimension. One should also notice the method seems to provide a generally applicable framework making it usable in other settings where multiple features are measured and believed to share a common covariance matrix obscured by group dependent noise.

\section*{Acknowledgments}
We thank Martin Raussen, Jon Johnsen, as well as Niels Richard Hansen for their assistance on some of the mathematical proofs.
The helpful comments from Steffen Falgreen, Andreas S.\ Pedersen, and reviewers were also much appreciated.
The technical assistance from Alexander Schmitz, Julie S.\ B\o{}dker, Ann-Maria Jensen, Louise H.\ Madsen, and Helle H\o{}holt is also greatly appreciated.

\begin{supplement}
\sname{Supplement A}\label{suppA}
\stitle{Appendices}
\slink[url]{http://imstat.org/aoas/}
\sdescription{Supplementary figures, tables and proofs available online.}
\end{supplement}

\begin{supplement}
\sname{Supplement B}\label{suppB}
\stitle{Documents for reproducibility}
\slink[url]{http://github.com/AEBilgrau/RCM}
\sdescription{The documents and other needed files to perform the analyses to reproduce this article. See the \texttt{README} file herein.}
\end{supplement}


\bibliographystyle{imsart-nameyear}
\bibliography{references_manual}

\printaddresses  

\newpage
\appendix
\pagenumbering{arabic}                   
\setcounter{page}{1}
\renewcommand*{\thepage}{A\arabic{page}} 

\section{Supplementary Figures and Tables}
\counterwithin{figure}{section}
\counterwithin{table}{section}
\setcounter{figure}{0}
\setcounter{table}{0}

\begin{figure}[H]
\begin{subfigure}{0.5\textwidth}
  \centering
  \includegraphics[width=\linewidth]{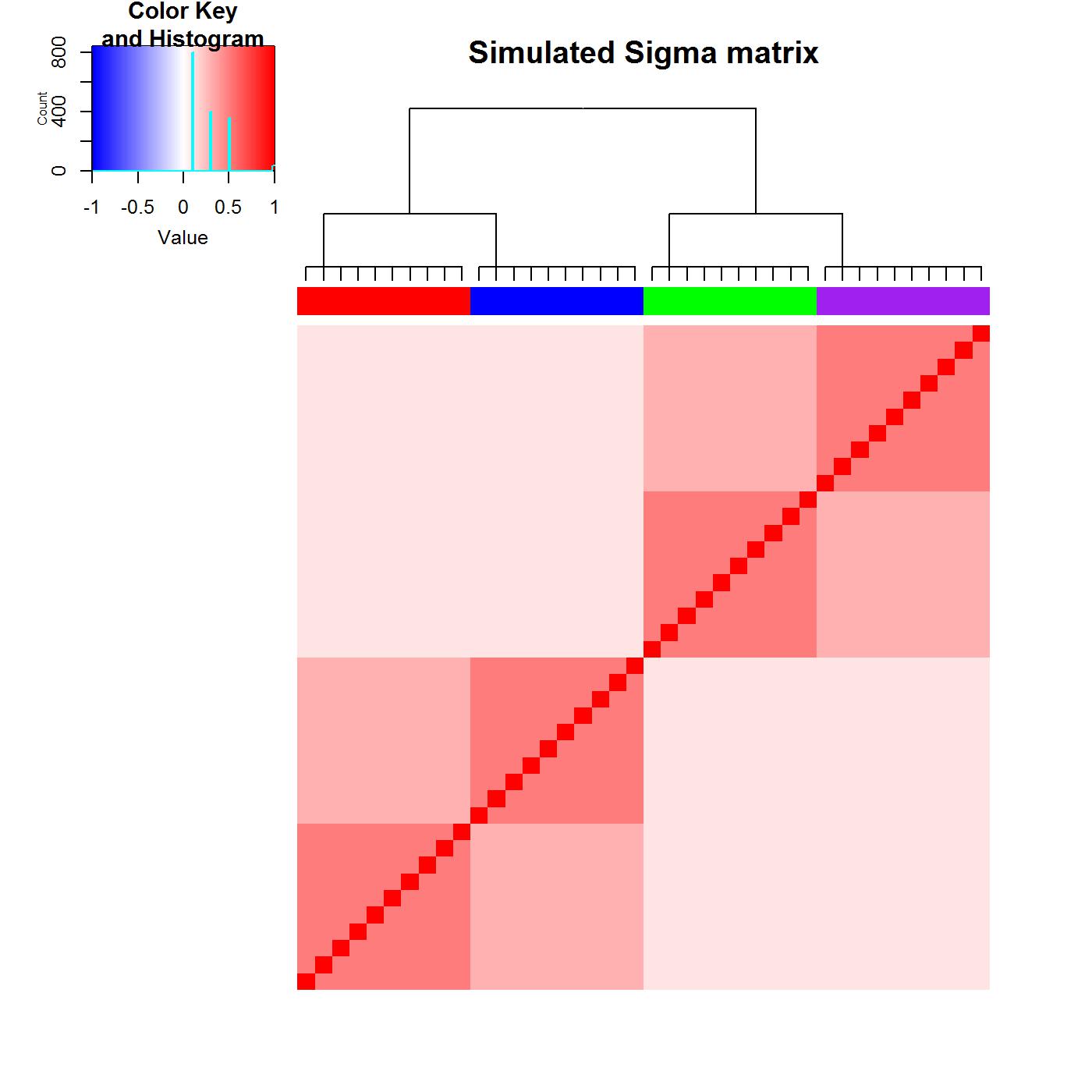}
   \label{fig:S1A}
\end{subfigure}%
\begin{subfigure}{0.5\textwidth}
  \centering
  \includegraphics[width=\linewidth]{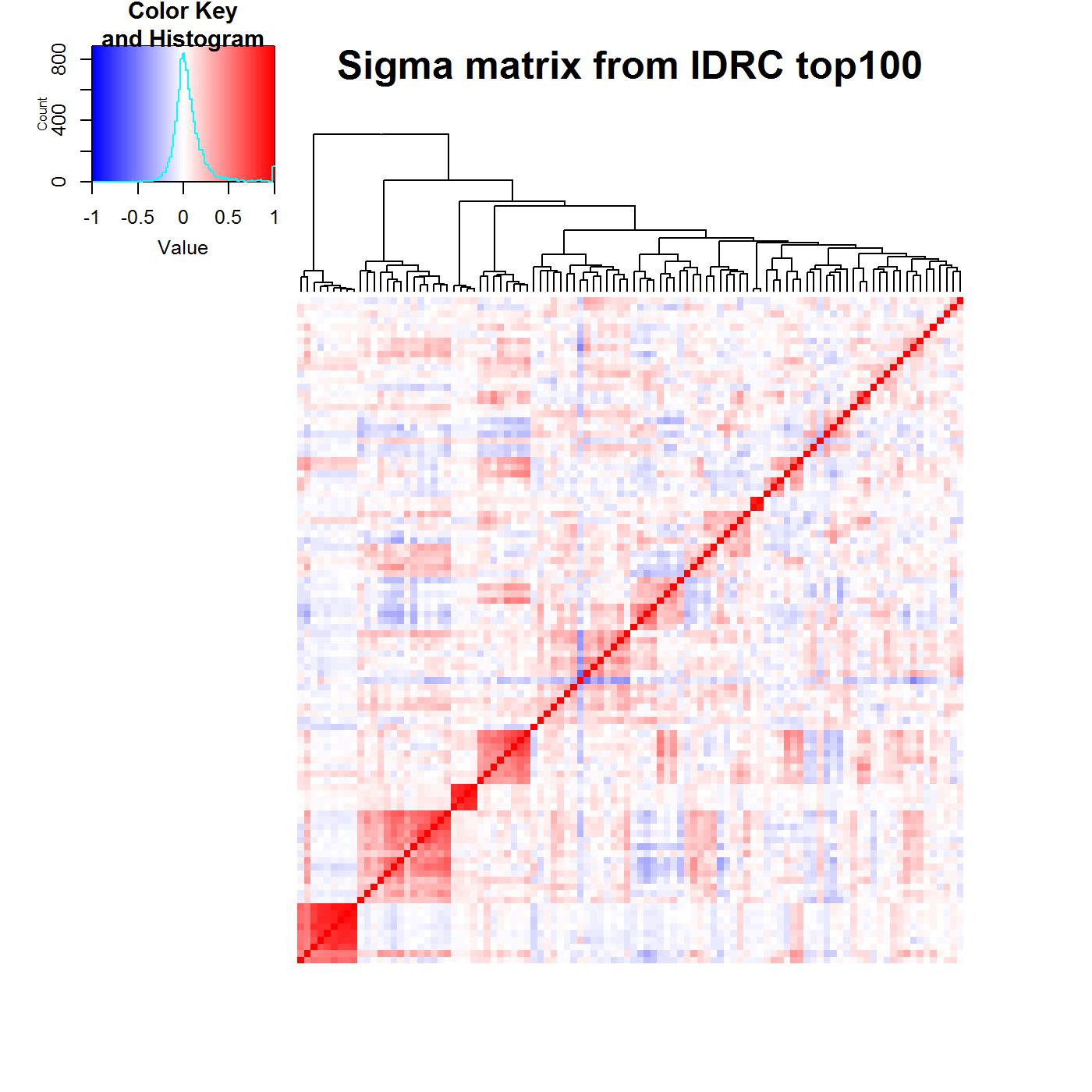}
  \label{fig:S1B}
\end{subfigure}
\caption{Heatmaps and hierarchical clustering of the $\Sigma$ matrices used for simulation.}
\label{fig:S1}
\end{figure}

\begin{figure}[H]
\begin{subfigure}{0.8\textwidth}
  \centering
  \includegraphics[width=0.8\linewidth]{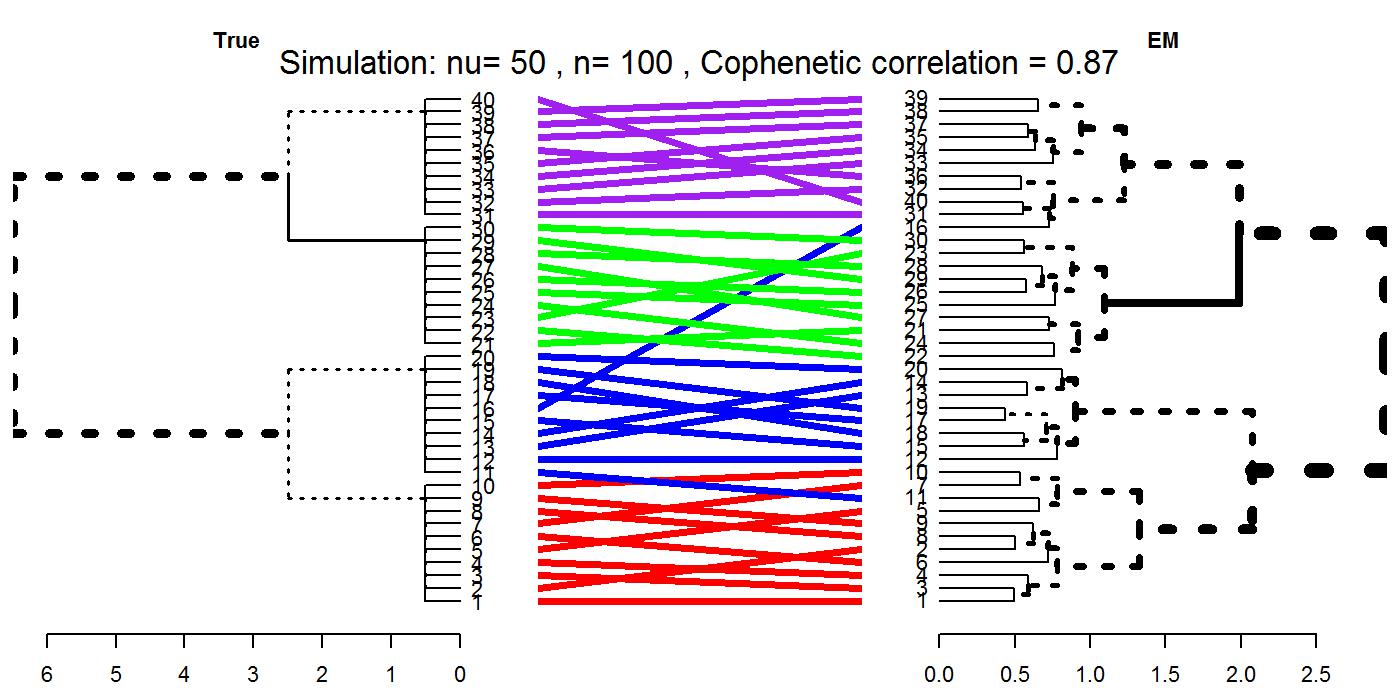}
   \label{fig:S2A}
\end{subfigure}%
\\
\begin{subfigure}{0.8\textwidth}
  \centering
  \includegraphics[width=0.8\linewidth]{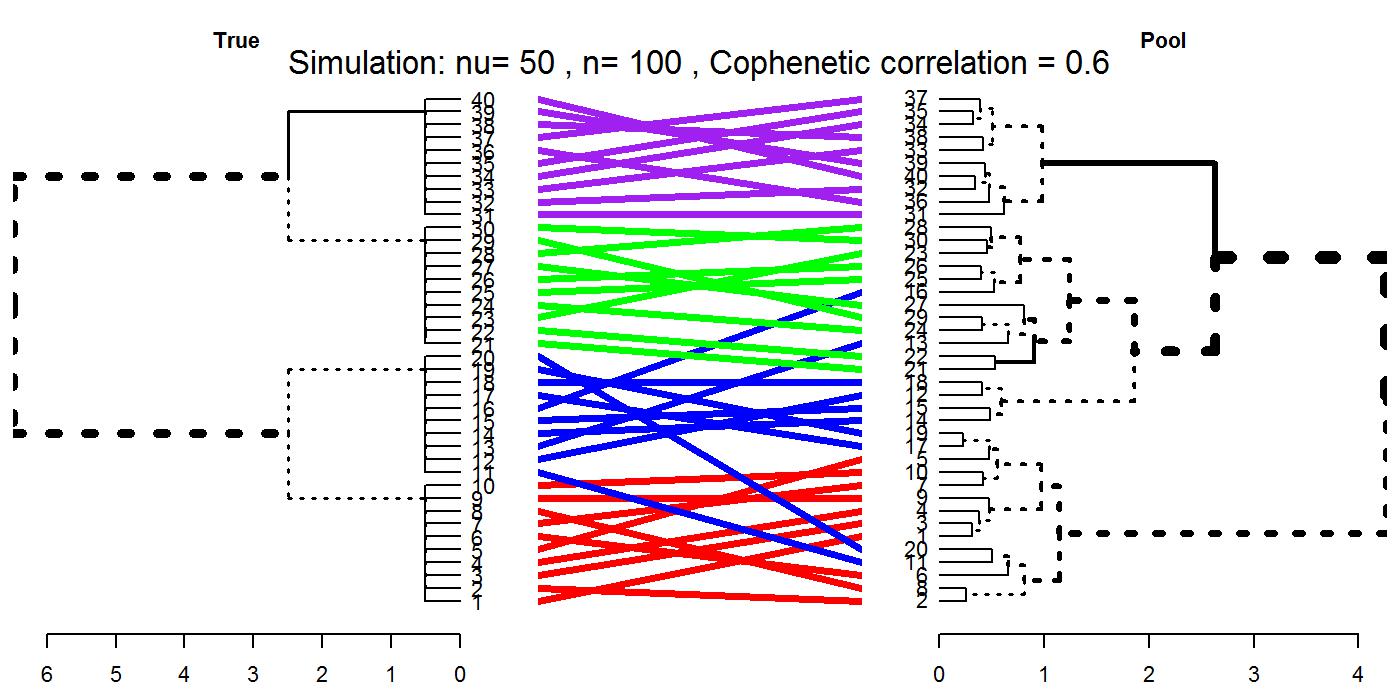}
  \label{fig:S2B}
\end{subfigure}
\caption{Tanglegrams for the True vs estimated dendrograms with the EM and Pool method.}
\label{fig:S2}
\end{figure}

\begin{figure}[H]
  \includegraphics[width=0.8\linewidth]{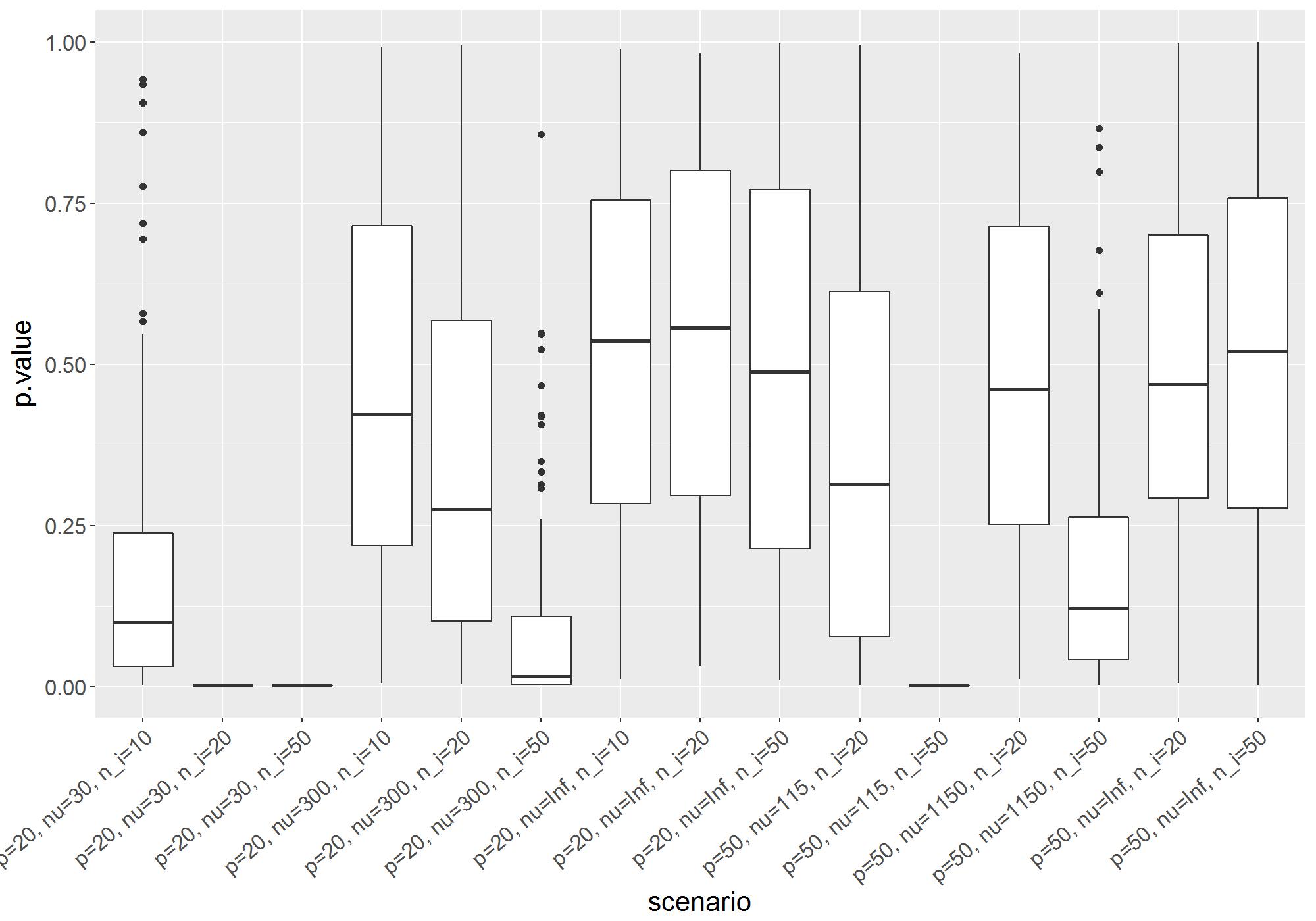}
  \caption{Boxplot of obtained P-values from the permutation procedure under different values of $p$, $\nu$ and $n_i$.}
  \label{fig:S3}
\end{figure}

\begin{figure}[H]
  \includegraphics[width=0.8\linewidth]{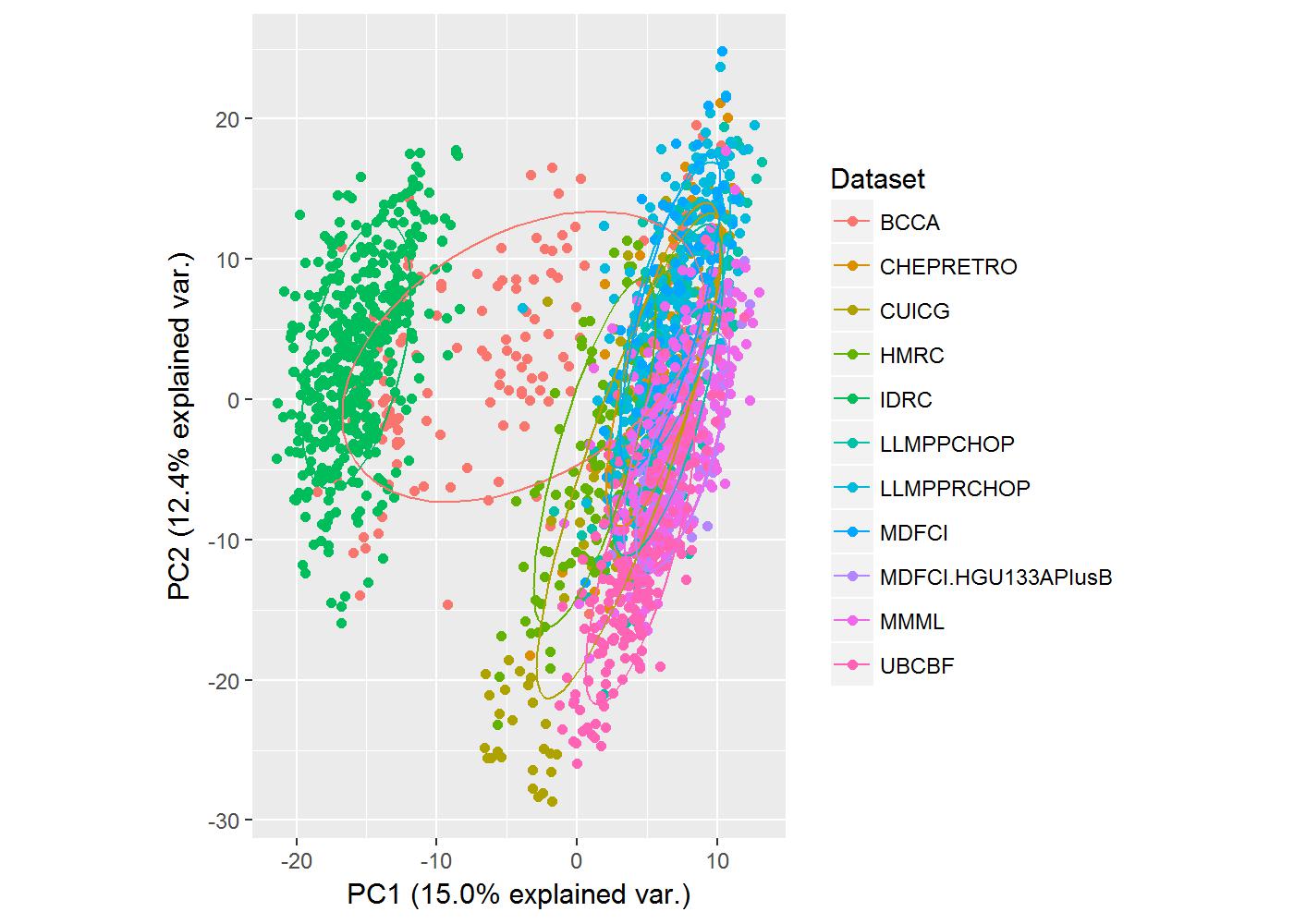}
  \caption{PCA plot of the datasets used in the DLBCL analysis.}
  \label{fig:S4}
\end{figure}

\begin{figure}[H]
\begin{subfigure}{0.5\textwidth}
  \centering
  \includegraphics[width=\linewidth]{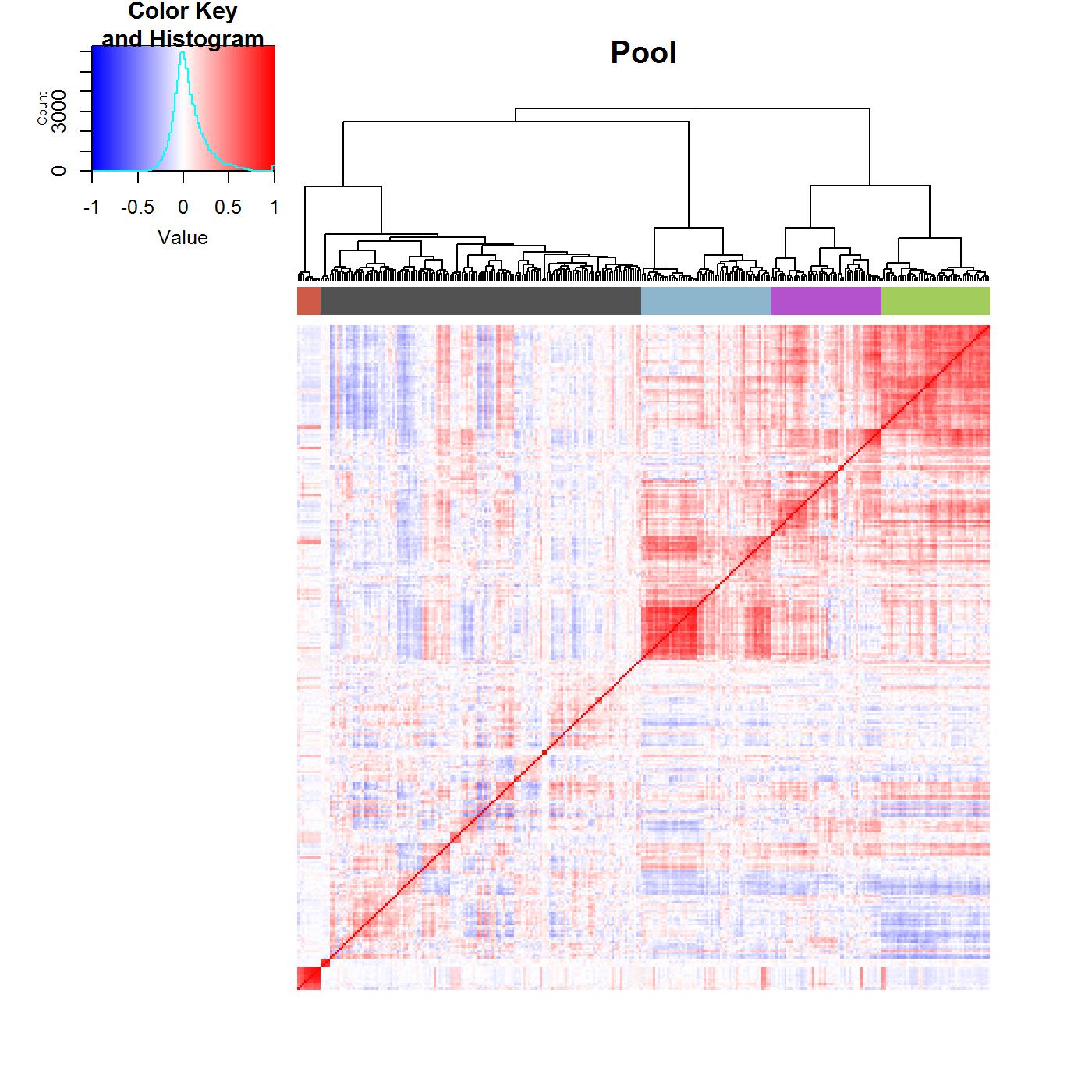}
   \end{subfigure}%
\begin{subfigure}{0.5\textwidth}
  \centering
  \includegraphics[width=\linewidth]{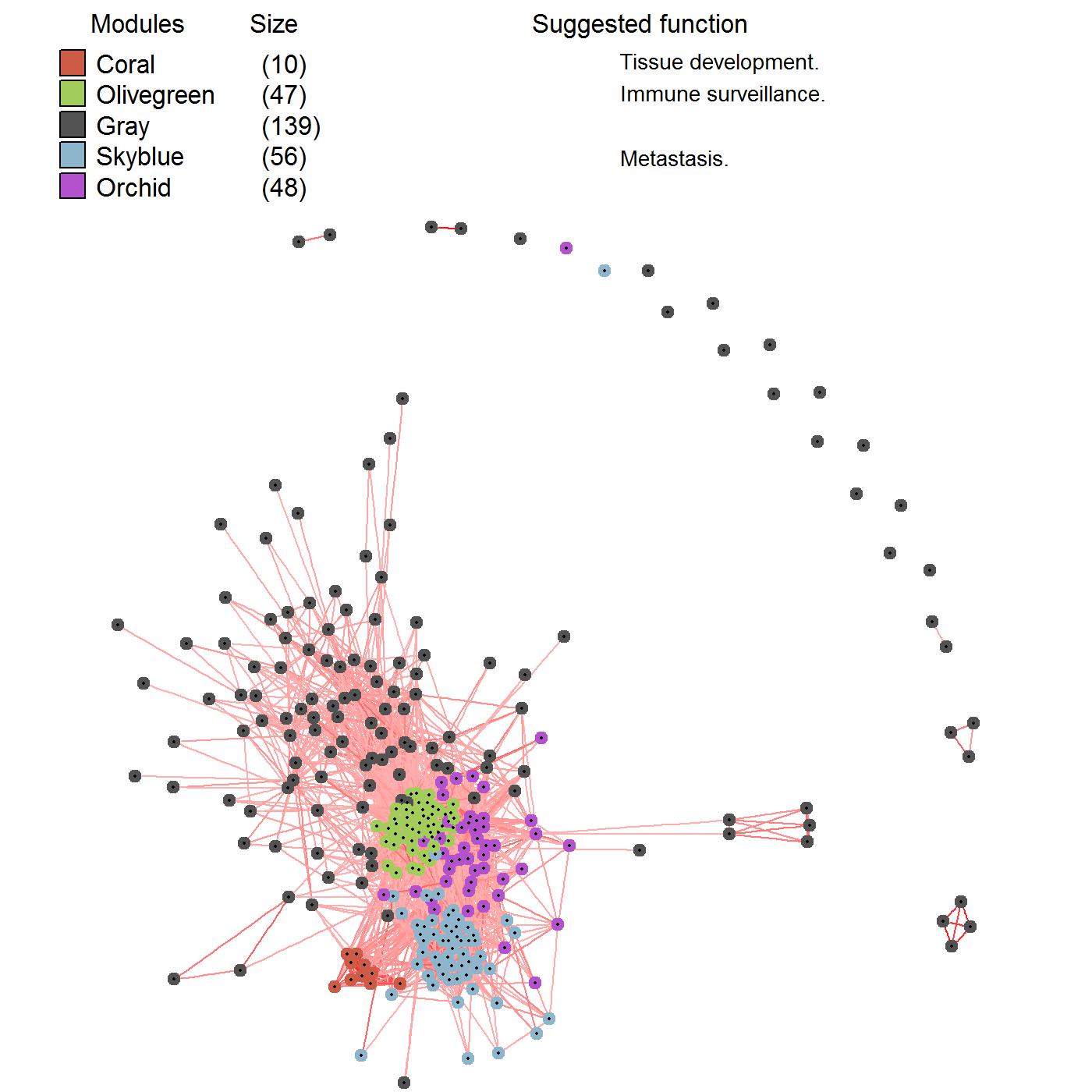}
  \end{subfigure}
\\
\begin{subfigure}{\textwidth}
	\centering
	  \includegraphics[width=\linewidth]{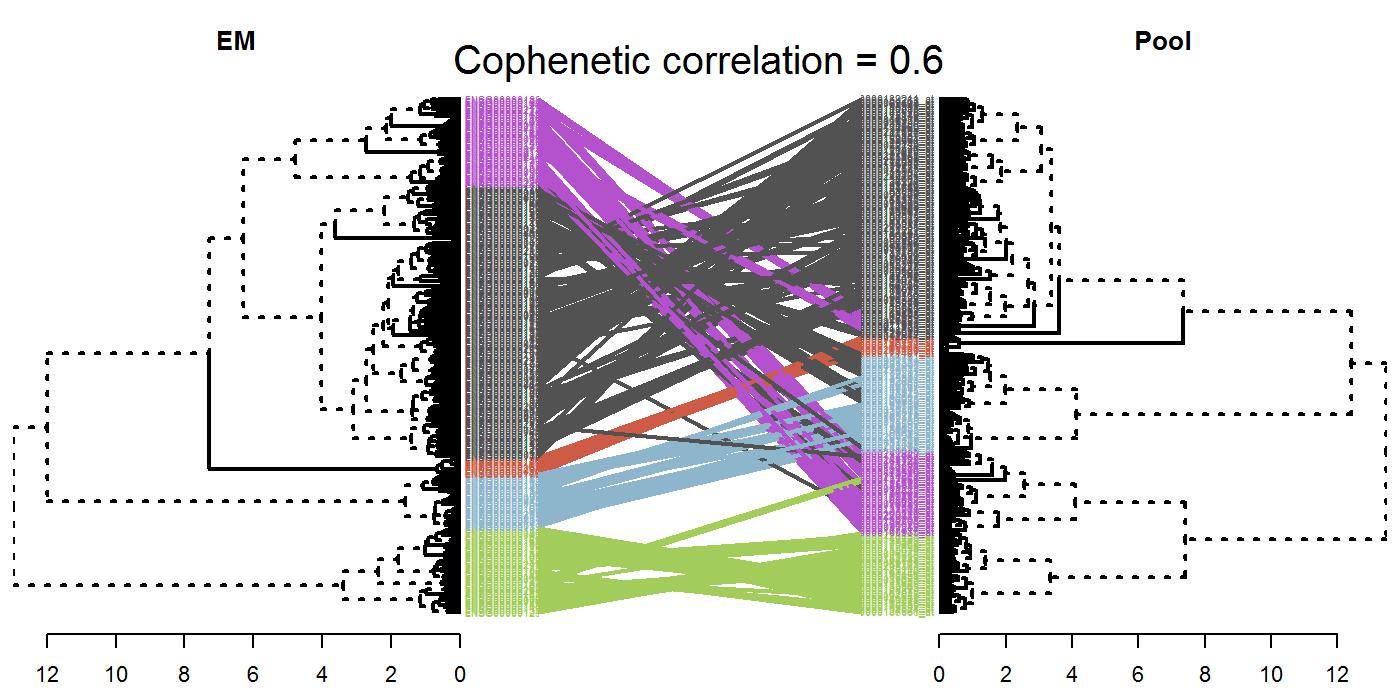}
	 \end{subfigure}
\caption{Heatmap and correlation network for the estimated correlation matrices of the top 300 genes for the DLBCL data using the Pool method. The network is cut at a height producing 5 clusters. The tanglegram in the lower panel shows the comparison of clusters between the EM and Pool methods.}
\label{fig:S5}
\end{figure}

\begin{figure}[H]
  \includegraphics[width=0.8\linewidth]{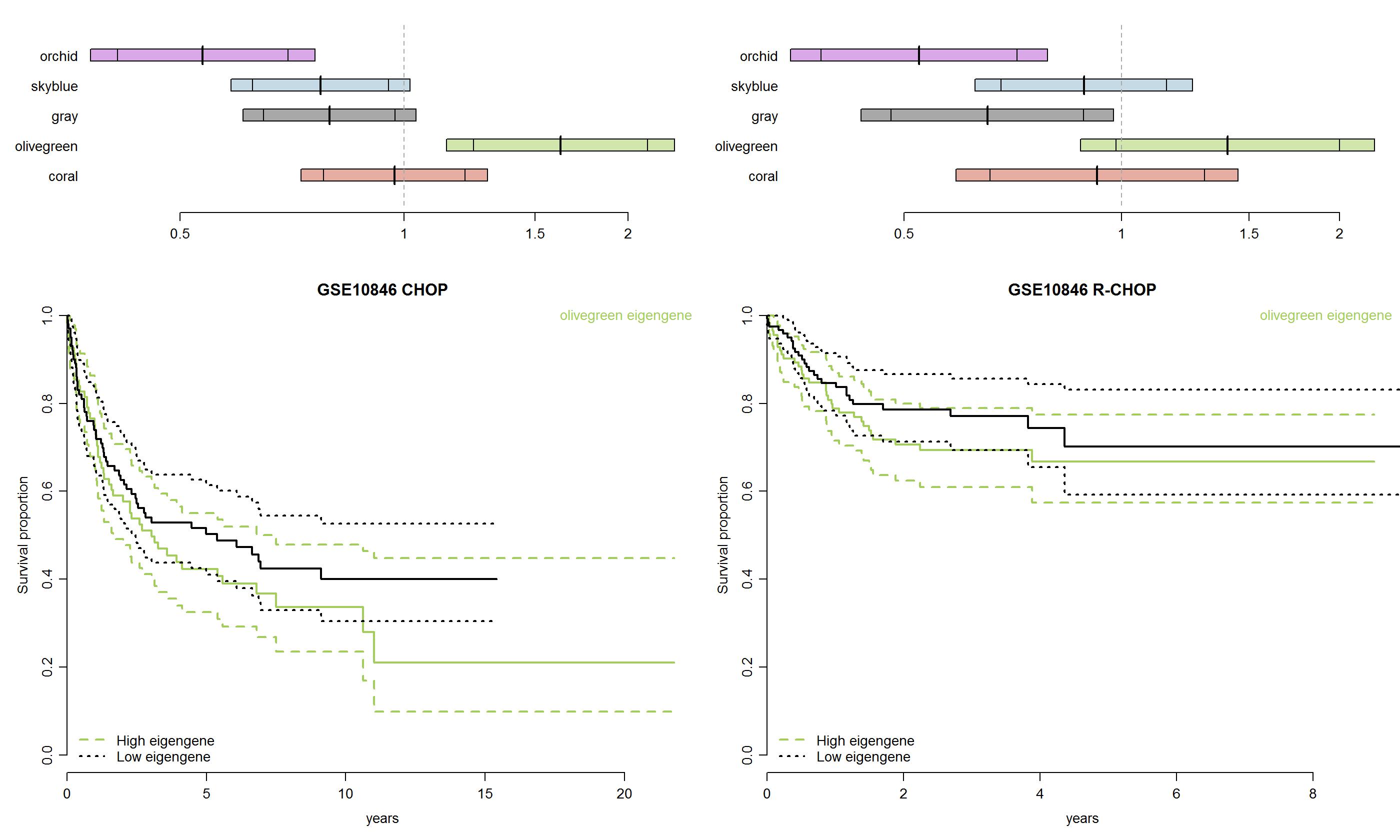}
  \caption{The top row shows $95\%$ and $99\%$ CI for the hazard ratio for each eigengene in the multiple Cox proportional hazards model containing all eigengenes in the CHOP or R-CHOP dataset. The bottom row shows Kaplan-Meier estimates (and $95\%$ CI) for the overall survival for patients stratified by the dichotomized olivegreen obtained from the Pool method. The proportion of variance explained by the eigengene in the CHOP and R-CHOP datasets for respectively the Coral, Olivegreen, Gray, Skyblue and Orchid modules were 0.72, 0.62, 0.13, 0.47, 0.33, and 0.77, 0.58, 0.13, 0.48, 0.31}
  \label{fig:S6}
\end{figure}

\begin{figure}[H]
  \includegraphics[width=0.8\linewidth]{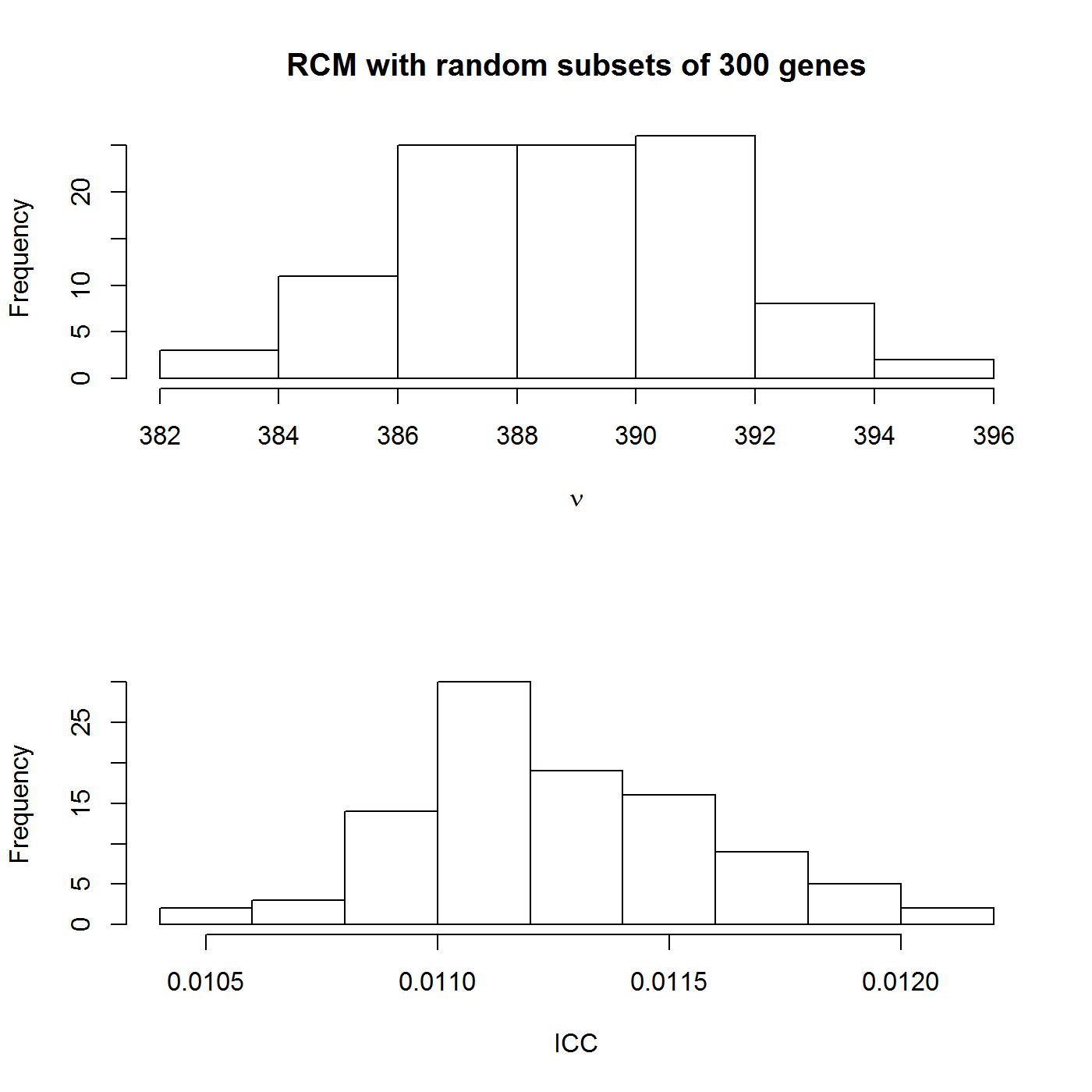}
  \caption{Distribution of $\nu$ and the corresponding ICC when fitting the RCM with the EM method to random subsets of 300 genes.}
  \label{fig:S7}
\end{figure}

\begin{landscape}\begin{table}[!tbp]
{\tiny
\caption{Mean cophenetic correlation and 
Kullback-Leibler divergence with $95\%$ confidence,
for estimated  vs true network for 
different values of $\nu$ and $n_i$ using the EM, MLE or Pool method\label{tab:results.clustering.full}} 
\begin{center}
\begin{tabular}{rrclllclll}
\hline\hline
\multicolumn{2}{c}{\bfseries }&\multicolumn{1}{c}{\bfseries }&\multicolumn{3}{c}{\bfseries Cophenetic Correlation}&\multicolumn{1}{c}{\bfseries }&\multicolumn{3}{c}{\bfseries Kullback-Leibler divergence}\tabularnewline
\cline{1-10}
\multicolumn{1}{c}{$n_i$}&\multicolumn{1}{c}{$\nu$}&\multicolumn{1}{c}{}&\multicolumn{1}{c}{EM}&\multicolumn{1}{c}{MLE}&\multicolumn{1}{c}{Pool}&\multicolumn{1}{c}{}&\multicolumn{1}{c}{EM}&\multicolumn{1}{c}{MLE}&\multicolumn{1}{c}{Pool}\tabularnewline
\hline
$  20$&$   50$&&0.19 (0.17;0.21)&0.2 (0.18;0.22)&0.2 (0.18;0.22)&&236.64 (229.16;244.11)&240.37 (232.79;247.94)&227.33 (220.13;234.52)\tabularnewline
$  30$&$   50$&&0.26 (0.23;0.28)&0.25 (0.23;0.28)&0.25 (0.23;0.28)&&123.99 (121.24;126.74)&126.61 (123.81;129.41)&121.81 (119.1;124.51)\tabularnewline
$  50$&$   50$&&0.6 (0.56;0.64)&0.43 (0.39;0.46)&0.43 (0.39;0.46)&&44.95 (43.34;46.56)&75.62 (73.5;77.74)&73.62 (71.54;75.69)\tabularnewline
$ 100$&$   50$&&0.88 (0.85;0.9)&0.7 (0.67;0.74)&0.7 (0.67;0.74)&&13.15 (12.81;13.49)&33.04 (32.56;33.52)&30.9 (30.44;31.36)\tabularnewline
$ 500$&$   50$&&0.99 (0.98;0.99)&0.91 (0.89;0.93)&0.91 (0.89;0.93)&&7.36 (7.18;7.55)&23.64 (23.41;23.88)&21.31 (21.1;21.53)\tabularnewline
$1000$&$   50$&&0.99 (0.99;0.99)&0.9 (0.88;0.92)&0.9 (0.88;0.92)&&6.87 (6.69;7.06)&22.86 (22.59;23.14)&20.53 (20.28;20.78)\tabularnewline
$  20$&$  100$&&0.35 (0.32;0.38)&0.35 (0.32;0.37)&0.35 (0.32;0.37)&&75.67 (73.06;78.28)&76.69 (74.05;79.33)&72.36 (69.85;74.86)\tabularnewline
$  30$&$  100$&&0.4 (0.37;0.42)&0.39 (0.37;0.42)&0.39 (0.37;0.42)&&34.01 (33.27;34.75)&34.51 (33.76;35.26)&33.14 (32.42;33.87)\tabularnewline
$  50$&$  100$&&0.72 (0.68;0.75)&0.69 (0.66;0.72)&0.69 (0.66;0.72)&&27.47 (26.75;28.18)&27.92 (27.2;28.65)&27.26 (26.55;27.97)\tabularnewline
$ 100$&$  100$&&0.97 (0.96;0.98)&0.96 (0.95;0.97)&0.96 (0.95;0.97)&&6.75 (6.62;6.88)&8.02 (7.88;8.16)&7.85 (7.71;7.98)\tabularnewline
$ 500$&$  100$&&1 (0.99;1)&1 (1;1)&1 (1;1)&&2.37 (2.35;2.4)&3.34 (3.31;3.38)&3.18 (3.15;3.21)\tabularnewline
$1000$&$  100$&&1 (1;1)&1 (1;1)&1 (1;1)&&2.02 (2;2.04)&2.95 (2.92;2.98)&2.79 (2.77;2.82)\tabularnewline
$  20$&$ 1000$&&0.51 (0.48;0.54)&0.51 (0.48;0.54)&0.51 (0.48;0.54)&&52.09 (50.49;53.7)&52.66 (51.04;54.29)&49.61 (48.07;51.16)\tabularnewline
$  30$&$ 1000$&&0.61 (0.58;0.64)&0.61 (0.58;0.64)&0.61 (0.58;0.64)&&22.29 (21.85;22.74)&22.5 (22.05;22.95)&21.59 (21.16;22.02)\tabularnewline
$  50$&$ 1000$&&0.81 (0.78;0.84)&0.81 (0.78;0.84)&0.81 (0.78;0.84)&&20.34 (19.76;20.92)&20.49 (19.91;21.08)&20.02 (19.44;20.59)\tabularnewline
$ 100$&$ 1000$&&0.99 (0.98;0.99)&0.99 (0.99;0.99)&0.99 (0.99;0.99)&&4.41 (4.3;4.51)&4.47 (4.36;4.58)&4.42 (4.31;4.52)\tabularnewline
$ 500$&$ 1000$&&1 (1;1)&1 (1;1)&1 (1;1)&&0.69 (0.68;0.7)&0.71 (0.7;0.72)&0.71 (0.7;0.72)\tabularnewline
$1000$&$ 1000$&&1 (1;1)&1 (1;1)&1 (1;1)&&0.4 (0.4;0.41)&0.41 (0.41;0.42)&0.41 (0.4;0.42)\tabularnewline
$  20$&$10000$&&0.53 (0.5;0.55)&0.52 (0.5;0.55)&0.52 (0.5;0.55)&&52.58 (50.71;54.46)&53.15 (51.26;55.04)&50.07 (48.28;51.86)\tabularnewline
$  30$&$10000$&&0.65 (0.61;0.68)&0.64 (0.61;0.68)&0.64 (0.61;0.68)&&21.71 (21.27;22.15)&21.91 (21.46;22.35)&21.01 (20.59;21.44)\tabularnewline
$  50$&$10000$&&0.83 (0.8;0.85)&0.82 (0.79;0.85)&0.82 (0.79;0.85)&&19.75 (19.16;20.34)&19.88 (19.29;20.48)&19.42 (18.84;20.01)\tabularnewline
$ 100$&$10000$&&0.99 (0.99;1)&0.99 (0.99;1)&0.99 (0.99;1)&&4.13 (4.05;4.21)&4.19 (4.11;4.27)&4.14 (4.06;4.22)\tabularnewline
$ 500$&$10000$&&1 (1;1)&1 (1;1)&1 (1;1)&&0.58 (0.57;0.59)&0.59 (0.58;0.6)&0.59 (0.58;0.6)\tabularnewline
$1000$&$10000$&&1 (1;1)&1 (1;1)&1 (1;1)&&0.27 (0.27;0.28)&0.28 (0.27;0.28)&0.28 (0.27;0.28)\tabularnewline
\hline
\end{tabular}\end{center}}
\end{table}\end{landscape}

\begin{landscape}\begin{table}[!tbp]
{\tiny
\caption{Simulation results based on IDRC data. Mean cophenetic correlation and 
            Kullback-Leibler divergence  with $95\%$ confidence,
            for estimated vs true network for 
            different values of $\nu$ and $n_i$ using the EM, MLE or Pool method.\label{tab:results.clustering.idrc}} 
\begin{center}
\begin{tabular}{rrclllclll}
\hline\hline
\multicolumn{2}{c}{\bfseries }&\multicolumn{1}{c}{\bfseries }&\multicolumn{3}{c}{\bfseries Cophenetic Correlation}&\multicolumn{1}{c}{\bfseries }&\multicolumn{3}{c}{\bfseries Kullback-Leibler divergence}\tabularnewline
\cline{1-10}
\multicolumn{1}{c}{$n_i$}&\multicolumn{1}{c}{$\nu$}&\multicolumn{1}{c}{}&\multicolumn{1}{c}{EM}&\multicolumn{1}{c}{MLE}&\multicolumn{1}{c}{Pool}&\multicolumn{1}{c}{}&\multicolumn{1}{c}{EM}&\multicolumn{1}{c}{MLE}&\multicolumn{1}{c}{Pool}\tabularnewline
\hline
$  40$&$  150$&&0.22 (0.2;0.25)&0.22 (0.2;0.25)&0.22 (0.2;0.25)&&1420.79 (1384.39;1457.2)&1439.26 (1402.35;1476.17)&1402.29 (1366.3;1438.28)\tabularnewline
$  50$&$  150$&&0.25 (0.23;0.27)&0.25 (0.23;0.28)&0.25 (0.23;0.28)&&647.2 (637.66;656.73)&656.21 (646.56;665.85)&642.39 (632.93;651.84)\tabularnewline
$  70$&$  150$&&0.3 (0.27;0.32)&0.29 (0.27;0.31)&0.29 (0.27;0.31)&&383.7 (379.35;388.04)&389.72 (385.3;394.14)&383.75 (379.4;388.1)\tabularnewline
$  90$&$  150$&&0.32 (0.3;0.34)&0.32 (0.3;0.34)&0.32 (0.3;0.34)&&306.15 (303.31;309)&311.8 (308.89;314.71)&308.05 (305.18;310.93)\tabularnewline
$ 150$&$  150$&&0.73 (0.72;0.74)&0.65 (0.62;0.67)&0.65 (0.62;0.67)&&55.57 (55.07;56.08)&82.27 (81.62;82.93)&80.52 (79.88;81.17)\tabularnewline
$ 500$&$  150$&&0.79 (0.78;0.8)&0.76 (0.74;0.77)&0.76 (0.74;0.77)&&16.82 (16.68;16.95)&31.61 (31.44;31.78)&29.51 (29.35;29.67)\tabularnewline
$1000$&$  150$&&0.82 (0.81;0.84)&0.79 (0.77;0.8)&0.79 (0.77;0.8)&&12.6 (12.51;12.68)&26.32 (26.19;26.45)&24.23 (24.11;24.35)\tabularnewline
$  40$&$  200$&&0.29 (0.26;0.32)&0.29 (0.27;0.32)&0.29 (0.27;0.32)&&1025.53 (997.03;1054.02)&1038.25 (1009.41;1067.09)&1011.46 (983.34;1039.58)\tabularnewline
$  50$&$  200$&&0.3 (0.28;0.32)&0.29 (0.27;0.32)&0.29 (0.27;0.32)&&446.39 (439.86;452.93)&452.07 (445.46;458.68)&442.48 (436;448.96)\tabularnewline
$  70$&$  200$&&0.36 (0.33;0.38)&0.35 (0.33;0.38)&0.35 (0.33;0.38)&&258.25 (255.7;260.79)&261.7 (259.11;264.28)&257.67 (255.13;260.21)\tabularnewline
$  90$&$  200$&&0.38 (0.36;0.41)&0.38 (0.36;0.4)&0.38 (0.36;0.4)&&203 (200.99;205.01)&205.82 (203.78;207.86)&203.34 (201.32;205.36)\tabularnewline
$ 150$&$  200$&&0.76 (0.75;0.77)&0.72 (0.7;0.73)&0.72 (0.7;0.73)&&48.6 (48.21;48.99)&58.25 (57.79;58.71)&57.71 (57.26;58.17)\tabularnewline
$ 500$&$  200$&&0.8 (0.78;0.81)&0.8 (0.79;0.81)&0.8 (0.79;0.81)&&13.45 (13.35;13.55)&18.23 (18.13;18.32)&17.41 (17.32;17.5)\tabularnewline
$1000$&$  200$&&0.83 (0.82;0.84)&0.81 (0.8;0.83)&0.81 (0.8;0.83)&&9.3 (9.24;9.36)&13.84 (13.78;13.91)&13.06 (13;13.12)\tabularnewline
$  40$&$ 1000$&&0.38 (0.36;0.41)&0.38 (0.36;0.4)&0.38 (0.36;0.4)&&651.25 (633.44;669.06)&658.49 (640.51;676.47)&641.33 (623.8;658.86)\tabularnewline
$  50$&$ 1000$&&0.41 (0.39;0.44)&0.41 (0.38;0.43)&0.41 (0.38;0.43)&&285.71 (281.84;289.58)&288.67 (284.77;292.57)&282.48 (278.66;286.3)\tabularnewline
$  70$&$ 1000$&&0.46 (0.44;0.47)&0.45 (0.43;0.47)&0.45 (0.43;0.47)&&161 (159.25;162.74)&162.46 (160.7;164.23)&159.95 (158.21;161.69)\tabularnewline
$  90$&$ 1000$&&0.47 (0.45;0.49)&0.47 (0.45;0.48)&0.47 (0.45;0.48)&&126.3 (125.2;127.39)&127.33 (126.22;128.43)&125.8 (124.71;126.9)\tabularnewline
$ 150$&$ 1000$&&0.74 (0.73;0.76)&0.72 (0.71;0.74)&0.72 (0.71;0.74)&&38.4 (38.11;38.7)&38.96 (38.66;39.26)&38.71 (38.41;39.01)\tabularnewline
$ 500$&$ 1000$&&0.84 (0.83;0.85)&0.84 (0.83;0.85)&0.84 (0.83;0.85)&&7.92 (7.87;7.98)&8.11 (8.06;8.16)&8.05 (8;8.1)\tabularnewline
$1000$&$ 1000$&&0.88 (0.87;0.89)&0.86 (0.85;0.87)&0.86 (0.85;0.87)&&4.37 (4.34;4.4)&4.45 (4.42;4.48)&4.41 (4.38;4.44)\tabularnewline
$  40$&$10000$&&0.4 (0.38;0.42)&0.4 (0.38;0.41)&0.4 (0.38;0.41)&&640.15 (623.66;656.64)&647.08 (630.41;663.76)&630.21 (613.95;646.47)\tabularnewline
$  50$&$10000$&&0.42 (0.4;0.44)&0.42 (0.4;0.44)&0.42 (0.4;0.44)&&272.28 (268.89;275.67)&274.94 (271.52;278.37)&269.04 (265.68;272.39)\tabularnewline
$  70$&$10000$&&0.46 (0.44;0.48)&0.46 (0.44;0.48)&0.46 (0.44;0.48)&&153.95 (152.4;155.5)&155.23 (153.66;156.79)&152.82 (151.28;154.36)\tabularnewline
$  90$&$10000$&&0.47 (0.45;0.49)&0.46 (0.45;0.48)&0.46 (0.45;0.48)&&121.16 (120.12;122.2)&122.04 (121;123.09)&120.58 (119.55;121.62)\tabularnewline
$ 150$&$10000$&&0.75 (0.73;0.76)&0.74 (0.73;0.76)&0.74 (0.73;0.76)&&36.83 (36.5;37.16)&37.21 (36.88;37.55)&36.98 (36.65;37.31)\tabularnewline
$ 500$&$10000$&&0.85 (0.84;0.86)&0.84 (0.83;0.86)&0.84 (0.83;0.86)&&7.07 (7.03;7.12)&7.19 (7.14;7.23)&7.16 (7.12;7.21)\tabularnewline
$1000$&$10000$&&0.87 (0.86;0.88)&0.85 (0.84;0.86)&0.85 (0.84;0.86)&&3.49 (3.46;3.51)&3.52 (3.49;3.54)&3.51 (3.49;3.53)\tabularnewline
\hline
\end{tabular}\end{center}}
\end{table}\end{landscape}

\setlongtables{\tiny
\begin{longtable}{lllrrr}\caption{The significant terms for the gene enrichment ,
                                          analysis of the DLBCL EM method modules. Number of genes ,
                                          in each term (N), and the overlap to module (O).} \tabularnewline
\hline\hline
\multicolumn{1}{l}{Term ID}&\multicolumn{1}{c}{Domain}&\multicolumn{1}{c}{Term}&\multicolumn{1}{c}{P}&\multicolumn{1}{c}{N}&\multicolumn{1}{c}{O}\tabularnewline
\hline
\endfirsthead\caption[]{\em (continued)} \tabularnewline*
\hline
\multicolumn{1}{l}{Term ID}&\multicolumn{1}{c}{Domain}&\multicolumn{1}{c}{Term}&\multicolumn{1}{c}{P}&\multicolumn{1}{c}{N}&\multicolumn{1}{c}{O}\tabularnewline
\hline
\endhead
\hline
\endfoot
\label{tab:enrichment.em}
{\bfseries Gray}&&&&&\tabularnewline*
~~$GO:0097159$&MF&organic cyclic compound bindin&$5.55e-03$&$157$&$ 50$\tabularnewline*
~~$GO:1901363$&MF&heterocyclic compound binding&$6.64e-03$&$157$&$ 48$\tabularnewline*
~~$TF:M00940_1$&tf&Factor: E2F-1; motif: NTTTCGCG&$1.50e-02$&$157$&$ 34$\tabularnewline*
~~$TF:M00695_0$&tf&Factor: ETF; motif: GVGGMGG; m&$2.38e-03$&$157$&$ 78$\tabularnewline*
~~$TF:M00428_1$&tf&Factor: E2F-1; motif: NKTSSCGC&$1.23e-03$&$157$&$ 56$\tabularnewline*
~~$TF:M03807_1$&tf&Factor: SP2; motif: GNNGGGGGCG&$4.27e-02$&$157$&$ 40$\tabularnewline*
~~$TF:M01199_0$&tf&Factor: RNF96; motif: BCCCGCRG&$3.63e-02$&$157$&$ 65$\tabularnewline*
~~$TF:M01199_1$&tf&Factor: RNF96; motif: BCCCGCRG&$4.27e-02$&$157$&$ 40$\tabularnewline*
~~$TF:M04869_0$&tf&Factor: Egr-1; motif: GCGCATGC&$2.42e-02$&$157$&$ 99$\tabularnewline*
~~$TF:M04869_1$&tf&Factor: Egr-1; motif: GCGCATGC&$1.87e-03$&$157$&$ 88$\tabularnewline*
~~$TF:M07250_1$&tf&Factor: E2F-1; motif: NNNSSCGC&$3.95e-02$&$157$&$ 54$\tabularnewline*
~~$TF:M08874_1$&tf&Factor: E2F1; motif: NNNNNGCGS&$1.61e-02$&$157$&$ 36$\tabularnewline*
~~$TF:M00803_1$&tf&Factor: E2F; motif: GGCGSG; ma&$3.88e-04$&$157$&$ 87$\tabularnewline*
~~$TF:M00938_1$&tf&Factor: E2F-1; motif: TTGGCGCG&$5.69e-03$&$157$&$ 40$\tabularnewline*
~~$TF:M02090_0$&tf&Factor: E2F-4; motif: GCGGGAAA&$1.38e-02$&$157$&$117$\tabularnewline
\hline
{\bfseries Olivegreen}&&&&&\tabularnewline*
~~$GO:0006952$&BP&defense response&$7.88e-04$&$ 48$&$ 29$\tabularnewline*
~~$GO:0045087$&BP&innate immune response&$3.99e-04$&$ 48$&$ 20$\tabularnewline*
~~$GO:0045088$&BP&regulation of innate immune re&$2.33e-02$&$ 48$&$ 10$\tabularnewline*
~~$GO:0009615$&BP&response to virus&$4.94e-02$&$ 48$&$  9$\tabularnewline*
~~$GO:0009607$&BP&response to biotic stimulus&$2.64e-02$&$ 48$&$ 18$\tabularnewline*
~~$REAC:5660526$&rea&Response to metal ions&$5.00e-02$&$ 48$&$  4$\tabularnewline*
~~$REAC:5661231$&rea&Metallothioneins bind metals&$5.00e-02$&$ 48$&$  4$\tabularnewline
\hline
{\bfseries Orchid}&&&&&\tabularnewline*
~~$GO:0002376$&BP&immune system process&$2.90e-02$&$ 49$&$ 36$\tabularnewline*
~~$GO:0006644$&BP&phospholipid metabolic process&$1.83e-02$&$ 49$&$  8$\tabularnewline*
~~$REAC:2730905$&rea&Role of LAT2/NTAL/LAB on calci&$1.08e-02$&$ 49$&$  5$\tabularnewline*
~~$REAC:2029482$&rea&Regulation of actin dynamics f&$4.98e-02$&$ 49$&$  4$\tabularnewline*
~~$REAC:2029485$&rea&Role of phospholipids in phago&$4.98e-02$&$ 49$&$  4$\tabularnewline*
~~$REAC:2871809$&rea&FCERI mediated Ca+2 mobilizati&$1.08e-02$&$ 49$&$  5$\tabularnewline*
~~$REAC:2871837$&rea&FCERI mediated NF-kB activatio&$4.98e-02$&$ 49$&$  4$\tabularnewline
\hline
\newpage
{\bfseries Skyblue}&&&&&\tabularnewline*
~~$GO:0009719$&BP&response to endogenous stimulu&$8.88e-03$&$ 31$&$ 12$\tabularnewline*
~~$GO:0009611$&BP&response to wounding&$4.05e-03$&$ 31$&$ 12$\tabularnewline*
~~$GO:0042060$&BP&wound healing&$9.50e-03$&$ 31$&$ 11$\tabularnewline*
~~$GO:0071230$&BP&cellular response to amino aci&$2.58e-02$&$ 31$&$  5$\tabularnewline*
~~$GO:0032502$&BP&developmental process&$1.34e-02$&$ 31$&$ 26$\tabularnewline*
~~$GO:0048856$&BP&anatomical structure developme&$2.78e-03$&$ 31$&$ 26$\tabularnewline*
~~$GO:0009888$&BP&tissue development&$1.16e-02$&$ 31$&$ 16$\tabularnewline*
~~$GO:0009653$&BP&anatomical structure morphogen&$2.64e-03$&$ 31$&$ 18$\tabularnewline*
~~$GO:0051093$&BP&negative regulation of develop&$1.07e-03$&$ 31$&$ 12$\tabularnewline*
~~$GO:0048646$&BP&anatomical structure formation&$3.52e-02$&$ 31$&$ 12$\tabularnewline*
~~$GO:0044767$&BP&single-organism developmental &$9.58e-03$&$ 31$&$ 26$\tabularnewline*
~~$GO:0007275$&BP&multicellular organism develop&$4.02e-04$&$ 31$&$ 26$\tabularnewline*
~~$GO:0048731$&BP&system development&$2.28e-03$&$ 31$&$ 24$\tabularnewline*
~~$GO:0072359$&BP&circulatory system development&$1.83e-04$&$ 31$&$ 14$\tabularnewline*
~~$GO:0001501$&BP&skeletal system development&$8.37e-09$&$ 31$&$ 15$\tabularnewline*
~~$GO:0072358$&BP&cardiovascular system developm&$2.66e-03$&$ 31$&$ 12$\tabularnewline*
~~$GO:0001944$&BP&vasculature development&$2.66e-03$&$ 31$&$ 12$\tabularnewline*
~~$GO:0001568$&BP&blood vessel development&$2.66e-03$&$ 31$&$ 12$\tabularnewline*
~~$GO:0010243$&BP&response to organonitrogen com&$3.02e-02$&$ 31$&$  9$\tabularnewline*
~~$GO:0032501$&BP&multicellular organismal proce&$2.06e-02$&$ 31$&$ 28$\tabularnewline*
~~$GO:0071840$&BP&cellular component organizatio&$1.36e-03$&$ 31$&$ 25$\tabularnewline*
~~$GO:0016043$&BP&cellular component organizatio&$1.11e-03$&$ 31$&$ 25$\tabularnewline*
~~$GO:0043062$&BP&extracellular structure organi&$1.77e-13$&$ 31$&$ 20$\tabularnewline*
~~$GO:0030198$&BP&extracellular matrix organizat&$1.77e-13$&$ 31$&$ 20$\tabularnewline*
~~$GO:0030199$&BP&collagen fibril organization&$1.13e-03$&$ 31$&$  7$\tabularnewline*
~~$GO:0050953$&BP&sensory perception of light st&$4.98e-02$&$ 31$&$  4$\tabularnewline*
~~$GO:0007601$&BP&visual perception&$4.98e-02$&$ 31$&$  4$\tabularnewline*
~~$GO:0009056$&BP&catabolic process&$6.84e-03$&$ 31$&$ 16$\tabularnewline*
~~$GO:0044712$&BP&single-organism catabolic proc&$1.14e-04$&$ 31$&$ 14$\tabularnewline*
~~$GO:0044236$&BP&multicellular organism metabol&$2.82e-05$&$ 31$&$ 11$\tabularnewline*
~~$GO:0044243$&BP&multicellular organismal catab&$1.48e-06$&$ 31$&$ 11$\tabularnewline*
~~$GO:0044259$&BP&multicellular organismal macro&$2.82e-05$&$ 31$&$ 11$\tabularnewline*
~~$GO:0032963$&BP&collagen metabolic process&$2.82e-05$&$ 31$&$ 11$\tabularnewline*
~~$GO:0030574$&BP&collagen catabolic process&$1.48e-06$&$ 31$&$ 11$\tabularnewline*
~~$GO:0012505$&CC&endomembrane system&$1.91e-03$&$ 31$&$ 23$\tabularnewline*
~~$GO:0005576$&CC&extracellular region&$2.40e-03$&$ 31$&$ 28$\tabularnewline*
~~$GO:0099080$&CC&supramolecular complex&$2.43e-03$&$ 31$&$ 11$\tabularnewline*
~~$GO:0099081$&CC&supramolecular polymer&$2.43e-03$&$ 31$&$ 11$\tabularnewline*
~~$GO:0099512$&CC&supramolecular fiber&$2.43e-03$&$ 31$&$ 11$\tabularnewline*
~~$GO:0044421$&CC&extracellular region part&$1.18e-04$&$ 31$&$ 28$\tabularnewline*
~~$GO:0005615$&CC&extracellular space&$9.33e-06$&$ 31$&$ 25$\tabularnewline*
~~$GO:0031012$&CC&extracellular matrix&$4.04e-10$&$ 31$&$ 19$\tabularnewline*
~~$GO:0044420$&CC&extracellular matrix component&$6.83e-12$&$ 31$&$ 13$\tabularnewline*
~~$GO:0005578$&CC&proteinaceous extracellular ma&$3.49e-11$&$ 31$&$ 18$\tabularnewline*
~~$GO:0005604$&CC&basement membrane&$3.66e-05$&$ 31$&$  7$\tabularnewline*
~~$GO:0043234$&CC&protein complex&$2.49e-03$&$ 31$&$ 13$\tabularnewline*
~~$GO:0005581$&CC&collagen trimer&$4.24e-07$&$ 31$&$ 11$\tabularnewline*
~~$GO:0098644$&CC&complex of collagen trimers&$3.66e-05$&$ 31$&$  7$\tabularnewline*
~~$GO:0098643$&CC&banded collagen fibril&$3.66e-05$&$ 31$&$  7$\tabularnewline*
~~$GO:0005583$&CC&fibrillar collagen trimer&$3.66e-05$&$ 31$&$  7$\tabularnewline*
~~$GO:0044432$&CC&endoplasmic reticulum part&$8.73e-03$&$ 31$&$ 10$\tabularnewline*
~~$GO:0005788$&CC&endoplasmic reticulum lumen&$2.57e-05$&$ 31$&$  8$\tabularnewline*
~~$GO:0050840$&MF&extracellular matrix binding&$2.58e-02$&$ 31$&$  5$\tabularnewline*
~~$GO:0048407$&MF&platelet-derived growth factor&$4.98e-02$&$ 31$&$  4$\tabularnewline*
~~$GO:0043169$&MF&cation binding&$1.18e-03$&$ 31$&$ 18$\tabularnewline*
~~$GO:0046872$&MF&metal ion binding&$4.93e-04$&$ 31$&$ 18$\tabularnewline*
~~$GO:0005201$&MF&extracellular matrix structura&$1.18e-05$&$ 31$&$  9$\tabularnewline*
~~$GO:0044877$&MF&macromolecular complex binding&$3.26e-02$&$ 31$&$ 10$\tabularnewline*
~~$GO:0032403$&MF&protein complex binding&$2.16e-02$&$ 31$&$ 10$\tabularnewline*
~~$HP:0000002$&hp&Abnormality of body height&$4.37e-02$&$ 31$&$ 12$\tabularnewline*
~~$KEGG:04510$&keg&Focal adhesion&$4.99e-02$&$ 31$&$  5$\tabularnewline*
~~$KEGG:04974$&keg&Protein digestion and absorpti&$4.45e-04$&$ 31$&$  8$\tabularnewline*
~~$KEGG:04512$&keg&ECM-receptor interaction&$1.65e-02$&$ 31$&$  5$\tabularnewline*
~~$REAC:3781865$&rea&Diseases of glycosylation&$7.07e-03$&$ 31$&$  4$\tabularnewline*
~~$REAC:1474244$&rea&Extracellular matrix organizat&$6.26e-11$&$ 31$&$ 17$\tabularnewline*
~~$REAC:3000178$&rea&ECM proteoglycans&$7.07e-03$&$ 31$&$  4$\tabularnewline*
~~$REAC:1474228$&rea&Degradation of the extracellul&$3.07e-02$&$ 31$&$  6$\tabularnewline*
~~$REAC:1474290$&rea&Collagen formation&$2.98e-05$&$ 31$&$  9$\tabularnewline*
~~$REAC:2022090$&rea&Assembly of collagen fibrils a&$1.97e-04$&$ 31$&$  8$\tabularnewline*
~~$REAC:1650814$&rea&Collagen biosynthesis and modi&$1.85e-07$&$ 31$&$  9$\tabularnewline
\hline
\newpage
{\bfseries Coral}&&&&&\tabularnewline*
~~$GO:0009888$&BP&tissue development&$1.02e-03$&$ 10$&$  9$\tabularnewline*
~~$GO:0060429$&BP&epithelium development&$7.35e-07$&$ 10$&$  9$\tabularnewline*
~~$GO:0030855$&BP&epithelial cell differentiatio&$1.01e-08$&$ 10$&$  9$\tabularnewline*
~~$GO:0008544$&BP&epidermis development&$2.23e-10$&$ 10$&$  9$\tabularnewline*
~~$GO:0009913$&BP&epidermal cell differentiation&$2.47e-11$&$ 10$&$  9$\tabularnewline*
~~$GO:0008219$&BP&cell death&$3.95e-03$&$ 10$&$  9$\tabularnewline*
~~$GO:0012501$&BP&programmed cell death&$2.22e-03$&$ 10$&$  9$\tabularnewline*
~~$GO:0043588$&BP&skin development&$5.35e-09$&$ 10$&$  9$\tabularnewline*
~~$GO:0030216$&BP&keratinocyte differentiation&$2.47e-11$&$ 10$&$  9$\tabularnewline*
~~$GO:0031424$&BP&keratinization&$1.13e-12$&$ 10$&$  9$\tabularnewline*
~~$GO:0070268$&BP&cornification&$1.44e-10$&$ 10$&$  8$\tabularnewline*
~~$GO:0018149$&BP&peptide cross-linking&$3.55e-02$&$ 10$&$  4$\tabularnewline*
~~$GO:0099513$&CC&polymeric cytoskeletal fiber&$4.54e-03$&$ 10$&$  5$\tabularnewline*
~~$GO:0045111$&CC&intermediate filament cytoskel&$3.76e-05$&$ 10$&$  5$\tabularnewline*
~~$GO:0005882$&CC&intermediate filament&$6.36e-06$&$ 10$&$  5$\tabularnewline*
~~$GO:0045095$&CC&keratin filament&$3.06e-04$&$ 10$&$  4$\tabularnewline*
~~$GO:0001533$&CC&cornified envelope&$4.44e-03$&$ 10$&$  4$\tabularnewline*
~~$GO:0005198$&MF&structural molecule activity&$4.41e-05$&$ 10$&$  8$\tabularnewline*
~~$GO:0005200$&MF&structural constituent of cyto&$2.00e-02$&$ 10$&$  4$\tabularnewline*
~~$HPA:053030_01$&hpa&tonsil; squamous epithelial ce&$9.03e-03$&$ 10$&$  9$\tabularnewline*
~~$HPA:053030_02$&hpa&tonsil; squamous epithelial ce&$1.63e-04$&$ 10$&$  9$\tabularnewline*
~~$HPA:053030_11$&hpa&tonsil; squamous epithelial ce&$7.87e-05$&$ 10$&$  9$\tabularnewline*
~~$HPA:053030_12$&hpa&tonsil; squamous epithelial ce&$2.70e-06$&$ 10$&$  9$\tabularnewline*
~~$HPA:029010_01$&hpa&oral mucosa; squamous epitheli&$2.59e-03$&$ 10$&$  9$\tabularnewline*
~~$HPA:029010_02$&hpa&oral mucosa; squamous epitheli&$8.70e-04$&$ 10$&$  8$\tabularnewline*
~~$HPA:029010_03$&hpa&oral mucosa; squamous epitheli&$9.38e-03$&$ 10$&$  5$\tabularnewline*
~~$HPA:029010_11$&hpa&oral mucosa; squamous epitheli&$3.54e-05$&$ 10$&$  9$\tabularnewline*
~~$HPA:029010_12$&hpa&oral mucosa; squamous epitheli&$8.81e-05$&$ 10$&$  8$\tabularnewline*
~~$HPA:029010_13$&hpa&oral mucosa; squamous epitheli&$9.38e-03$&$ 10$&$  5$\tabularnewline*
~~$HPA:015010_01$&hpa&esophagus; squamous epithelial&$3.06e-03$&$ 10$&$  9$\tabularnewline*
~~$HPA:015010_02$&hpa&esophagus; squamous epithelial&$1.29e-04$&$ 10$&$  9$\tabularnewline*
~~$HPA:015010_03$&hpa&esophagus; squamous epithelial&$9.55e-05$&$ 10$&$  7$\tabularnewline*
~~$HPA:015010_11$&hpa&esophagus; squamous epithelial&$4.66e-05$&$ 10$&$  9$\tabularnewline*
~~$HPA:015010_12$&hpa&esophagus; squamous epithelial&$1.84e-06$&$ 10$&$  9$\tabularnewline*
~~$HPA:015010_13$&hpa&esophagus; squamous epithelial&$3.52e-05$&$ 10$&$  7$\tabularnewline*
~~$HPA:009020_01$&hpa&cervix, uterine; squamous epit&$1.46e-02$&$ 10$&$  8$\tabularnewline*
~~$HPA:009020_02$&hpa&cervix, uterine; squamous epit&$3.99e-04$&$ 10$&$  8$\tabularnewline*
~~$HPA:009020_03$&hpa&cervix, uterine; squamous epit&$1.30e-03$&$ 10$&$  6$\tabularnewline*
~~$HPA:009020_11$&hpa&cervix, uterine; squamous epit&$3.02e-04$&$ 10$&$  8$\tabularnewline*
~~$HPA:009020_12$&hpa&cervix, uterine; squamous epit&$1.33e-05$&$ 10$&$  8$\tabularnewline*
~~$HPA:009020_13$&hpa&cervix, uterine; squamous epit&$8.27e-04$&$ 10$&$  6$\tabularnewline*
~~$HPA:042030_11$&hpa&skin 1; keratinocytes[Supporte&$1.62e-02$&$ 10$&$  7$\tabularnewline*
~~$HPA:042030_12$&hpa&skin 1; keratinocytes[Supporte&$8.13e-03$&$ 10$&$  6$\tabularnewline*
~~$HPA:055010_01$&hpa&vagina; squamous epithelial ce&$1.07e-03$&$ 10$&$  9$\tabularnewline*
~~$HPA:055010_02$&hpa&vagina; squamous epithelial ce&$1.47e-05$&$ 10$&$  9$\tabularnewline*
~~$HPA:055010_03$&hpa&vagina; squamous epithelial ce&$3.00e-02$&$ 10$&$  5$\tabularnewline*
~~$HPA:055010_11$&hpa&vagina; squamous epithelial ce&$7.76e-06$&$ 10$&$  9$\tabularnewline*
~~$HPA:055010_12$&hpa&vagina; squamous epithelial ce&$1.99e-07$&$ 10$&$  9$\tabularnewline*
~~$HPA:055010_13$&hpa&vagina; squamous epithelial ce&$2.11e-02$&$ 10$&$  5$\tabularnewline*
~~$HPA:043010_03$&hpa&skin 2; epidermal cells[Uncert&$9.38e-03$&$ 10$&$  5$\tabularnewline*
~~$HPA:043010_11$&hpa&skin 2; epidermal cells[Suppor&$2.19e-03$&$ 10$&$  8$\tabularnewline*
~~$HPA:043010_12$&hpa&skin 2; epidermal cells[Suppor&$1.46e-02$&$ 10$&$  6$\tabularnewline*
~~$HPA:043010_13$&hpa&skin 2; epidermal cells[Suppor&$9.38e-03$&$ 10$&$  5$\tabularnewline*
~~$OMIM:131800$&omi&EPIDERMOLYSIS BULLOSA SIMPLEX,&$5.00e-02$&$ 10$&$  2$\tabularnewline*
~~$OMIM:601001$&omi&EPIDERMOLYSIS BULLOSA SIMPLEX,&$5.00e-02$&$ 10$&$  2$\tabularnewline*
~~$OMIM:131760$&omi&EPIDERMOLYSIS BULLOSA SIMPLEX,&$5.00e-02$&$ 10$&$  2$\tabularnewline*
~~$OMIM:131900$&omi&EPIDERMOLYSIS BULLOSA SIMPLEX,&$5.00e-02$&$ 10$&$  2$\tabularnewline*
~~$REAC:1266738$&rea&Developmental Biology&$8.62e-07$&$ 10$&$  8$\tabularnewline*
~~$TF:M07051_1$&tf&Factor: NF-1B; motif: CTGGCASG&$6.22e-03$&$ 10$&$  7$\tabularnewline
\hline
\end{longtable}}

\begin{table}[!tbp]
{\tiny
\begin{center}
\begin{tabular}{lclclclcl}
\hline\hline
\multicolumn{1}{c}{\bfseries Gray}&\multicolumn{1}{c}{\bfseries }&\multicolumn{1}{c}{\bfseries Olivegreen}&\multicolumn{1}{c}{\bfseries }&\multicolumn{1}{c}{\bfseries Skyblue}&\multicolumn{1}{c}{\bfseries }&\multicolumn{1}{c}{\bfseries Orchid}&\multicolumn{1}{c}{\bfseries }&\multicolumn{1}{c}{\bfseries Coral}\tabularnewline
\cline{1-9}
\multicolumn{1}{c}{n = 139}&\multicolumn{1}{c}{}&\multicolumn{1}{c}{n = 47}&\multicolumn{1}{c}{}&\multicolumn{1}{c}{n = 56}&\multicolumn{1}{c}{}&\multicolumn{1}{c}{n = 48}&\multicolumn{1}{c}{}&\multicolumn{1}{c}{n = 10}\tabularnewline
\hline
MYBL1&&FCER1G&&COL5A2&&CD2&&KRT6A\tabularnewline
BATF&&C1QB&&COL1A2&&PTGDS&&SPRR1A\tabularnewline
STAP1&&IDO1&&COL3A1&&GIMAP4&&SPRR1B\tabularnewline
MME&&GBP1&&VCAN&&ADAMDEC1&&SPRR3\tabularnewline
CD44&&C1QA&&DCN&&CD3D&&S100A2\tabularnewline
CYB5R2&&CD14&&COL6A3&&CCL19&&KRT13\tabularnewline
TNFRSF13B&&GZMA&&THBS2&&IL18&&KRT14\tabularnewline
LRMP&&SERPING1&&SPARC&&TFEC&&DSP\tabularnewline
MARCKSL1&&RARRES3&&SULF1&&ITK&&KRT5\tabularnewline
BCL2A1&&CXCL10&&MMP2&&PLA2G2D&&\tabularnewline
HCK&&PSTPIP2&&MXRA5&&APOC1&&\tabularnewline
CCND2&&GBP2&&LUM&&CHI3L1&&\tabularnewline
VPREB3&&FGL2&&CTGF&&LYZ&&\tabularnewline
LMO2&&CXCL11&&COL15A1&&ENPP2&&\tabularnewline
HLA-DOB&&CCL8&&COL5A1&&LGALS2&&\tabularnewline
STAG3&&LILRB2&&FAP&&CSTA&&\tabularnewline
PDGFD&&CXCL9&&COL1A1&&CXCL13&&\tabularnewline
CCR7&&CD163&&POSTN&&ITM2A&&\tabularnewline
BLNK&&GZMB&&TMEM45A&&CLU&&\tabularnewline
SORL1&&GZMH&&EMP1&&PLA2G7&&\tabularnewline
MNDA&&GZMK&&CTSK&&IL7R&&\tabularnewline
RRAS2&&ALDH1A1&&PLS3&&TRBC2&&\tabularnewline
SPINK2&&IFNG&&TGFBI&&HSD11B1&&\tabularnewline
BACH2&&SLAMF7&&GJA1&&MMP9&&\tabularnewline
NCF2&&CPVL&&COL11A1&&C3&&\tabularnewline
GPR183&&KCNJ2&&AEBP1&&CXCL14&&\tabularnewline
OSBPL10&&CD8A&&TIMP1&&CYP27B1&&\tabularnewline
GRHPR&&MS4A4A&&TNFAIP6&&CHIT1&&\tabularnewline
DUSP5&&MT1G&&ANXA1&&LAMP3&&\tabularnewline
ALOX5AP&&LGMN&&TAGLN&&CCL21&&\tabularnewline
CD22&&MT2A&&FOS&&ROBO1&&\tabularnewline
MS4A1&&IGSF6&&CILP&&MAL&&\tabularnewline
SYBU&&S100A8&&DPT&&KLRB1&&\tabularnewline
TCL1A&&CRTAM&&MGP&&SQOR&&\tabularnewline
FCMR&&GNLY&&SPP1&&ORM1&&\tabularnewline
GPR137B&&S100A9&&G0S2&&SELENOP&&\tabularnewline
IGHM&&GPX3&&STEAP1&&P2RY14&&\tabularnewline
SLC12A8&&MT1M&&MMP1&&NPY1R&&\tabularnewline
CD83&&PLTP&&EPS8&&ORM2&&\tabularnewline
GMDS&&MARCO&&GREM1&&TRDC&&\tabularnewline
\hline
\end{tabular}
\caption{The identified modules from the Pool method, their sizes, and member genes. The genes are sorted decreasingly by their intra-module connectivity (sum of the incident edge weights). Only the top 40 genes are shown.\label{tab:top.genes.pool}}\end{center}}
\end{table}

\setlongtables{\tiny
\begin{longtable}{lllrrr}\caption{The significant terms for the gene enrichment ,
                                          analysis of the DLBCL Pool method modules. Number of genes ,
                                          in each term (N), and the overlap to module (O).} \tabularnewline
\hline\hline
\multicolumn{1}{l}{Term ID}&\multicolumn{1}{c}{Domain}&\multicolumn{1}{c}{Term}&\multicolumn{1}{c}{P}&\multicolumn{1}{c}{N}&\multicolumn{1}{c}{O}\tabularnewline
\hline
\endfirsthead\caption[]{\em (continued)} \tabularnewline*
\hline
\multicolumn{1}{l}{Term ID}&\multicolumn{1}{c}{Domain}&\multicolumn{1}{c}{Term}&\multicolumn{1}{c}{P}&\multicolumn{1}{c}{N}&\multicolumn{1}{c}{O}\tabularnewline
\hline
\endhead
\hline
\endfoot
\label{tab:enrichment.pool}
{\bfseries Gray}&&&&&\tabularnewline*
~~$TF:M00940_1$&tf&Factor: E2F-1; motif: NTTTCGCG&$3.04e-02$&$136$&$31$\tabularnewline
\hline
{\bfseries Olivegreen}&&&&&\tabularnewline*
~~$GO:0006952$&BP&defense response&$2.10e-03$&$ 45$&$27$\tabularnewline*
~~$GO:0045087$&BP&innate immune response&$5.53e-04$&$ 45$&$19$\tabularnewline*
~~$GO:0045088$&BP&regulation of innate immune re&$9.89e-03$&$ 45$&$10$\tabularnewline*
~~$REAC:5660526$&rea&Response to metal ions&$4.98e-02$&$ 45$&$ 4$\tabularnewline*
~~$REAC:5661231$&rea&Metallothioneins bind metals&$4.98e-02$&$ 45$&$ 4$\tabularnewline
\hline
{\bfseries Skyblue}&&&&&\tabularnewline*
~~$GO:0051093$&BP&negative regulation of develop&$2.67e-02$&$ 56$&$14$\tabularnewline*
~~$GO:0007167$&BP&enzyme linked receptor protein&$8.80e-03$&$ 56$&$14$\tabularnewline*
~~$GO:0007178$&BP&transmembrane receptor protein&$2.54e-03$&$ 56$&$ 8$\tabularnewline*
~~$GO:0007517$&BP&muscle organ development&$4.98e-02$&$ 56$&$ 9$\tabularnewline*
~~$GO:0010243$&BP&response to organonitrogen com&$1.71e-02$&$ 56$&$12$\tabularnewline*
~~$GO:0043200$&BP&response to amino acid&$1.32e-02$&$ 56$&$ 7$\tabularnewline*
~~$GO:0032501$&BP&multicellular organismal proce&$2.68e-02$&$ 56$&$45$\tabularnewline*
~~$GO:0044707$&BP&single-multicellular organism &$4.07e-02$&$ 56$&$44$\tabularnewline*
~~$GO:0001503$&BP&ossification&$4.02e-03$&$ 56$&$12$\tabularnewline*
~~$GO:0044712$&BP&single-organism catabolic proc&$2.06e-03$&$ 56$&$17$\tabularnewline*
~~$GO:0044236$&BP&multicellular organism metabol&$9.61e-06$&$ 56$&$14$\tabularnewline*
~~$GO:0044243$&BP&multicellular organismal catab&$6.82e-05$&$ 56$&$12$\tabularnewline*
~~$GO:0044259$&BP&multicellular organismal macro&$9.61e-06$&$ 56$&$14$\tabularnewline*
~~$GO:0032963$&BP&collagen metabolic process&$9.61e-06$&$ 56$&$14$\tabularnewline*
~~$GO:0030574$&BP&collagen catabolic process&$6.82e-05$&$ 56$&$12$\tabularnewline*
~~$GO:0070848$&BP&response to growth factor&$1.28e-04$&$ 56$&$15$\tabularnewline*
~~$GO:0071363$&BP&cellular response to growth fa&$5.43e-04$&$ 56$&$14$\tabularnewline*
~~$GO:0009653$&BP&anatomical structure morphogen&$8.62e-03$&$ 56$&$25$\tabularnewline*
~~$GO:0048646$&BP&anatomical structure formation&$1.54e-02$&$ 56$&$17$\tabularnewline*
~~$GO:0072359$&BP&circulatory system development&$8.14e-05$&$ 56$&$19$\tabularnewline*
~~$GO:0072358$&BP&cardiovascular system developm&$3.32e-04$&$ 56$&$17$\tabularnewline*
~~$GO:0001944$&BP&vasculature development&$3.32e-04$&$ 56$&$17$\tabularnewline*
~~$GO:0009887$&BP&animal organ morphogenesis&$2.44e-03$&$ 56$&$14$\tabularnewline*
~~$GO:0001568$&BP&blood vessel development&$3.32e-04$&$ 56$&$17$\tabularnewline*
~~$GO:0048514$&BP&blood vessel morphogenesis&$4.75e-03$&$ 56$&$14$\tabularnewline*
~~$GO:0001525$&BP&angiogenesis&$1.69e-02$&$ 56$&$13$\tabularnewline*
~~$GO:0071822$&BP&protein complex subunit organi&$1.17e-02$&$ 56$&$16$\tabularnewline*
~~$GO:0071840$&BP&cellular component organizatio&$2.24e-03$&$ 56$&$38$\tabularnewline*
~~$GO:0016043$&BP&cellular component organizatio&$1.71e-03$&$ 56$&$38$\tabularnewline*
~~$GO:0097435$&BP&supramolecular fiber organizat&$1.28e-04$&$ 56$&$15$\tabularnewline*
~~$GO:0043062$&BP&extracellular structure organi&$1.19e-13$&$ 56$&$25$\tabularnewline*
~~$GO:0030198$&BP&extracellular matrix organizat&$1.19e-13$&$ 56$&$25$\tabularnewline*
~~$GO:0030199$&BP&collagen fibril organization&$5.59e-05$&$ 56$&$ 9$\tabularnewline*
~~$GO:0009888$&BP&tissue development&$3.46e-02$&$ 56$&$22$\tabularnewline*
~~$GO:0061448$&BP&connective tissue development&$2.97e-03$&$ 56$&$11$\tabularnewline*
~~$GO:0001501$&BP&skeletal system development&$1.84e-09$&$ 56$&$19$\tabularnewline*
~~$GO:0051216$&BP&cartilage development&$4.43e-04$&$ 56$&$10$\tabularnewline*
~~$GO:0009611$&BP&response to wounding&$6.33e-04$&$ 56$&$17$\tabularnewline*
~~$GO:0042060$&BP&wound healing&$4.54e-03$&$ 56$&$15$\tabularnewline*
~~$GO:0009719$&BP&response to endogenous stimulu&$4.15e-05$&$ 56$&$19$\tabularnewline*
~~$GO:0071495$&BP&cellular response to endogenou&$2.93e-04$&$ 56$&$15$\tabularnewline*
~~$GO:0005576$&CC&extracellular region&$2.53e-04$&$ 56$&$46$\tabularnewline*
~~$GO:0044421$&CC&extracellular region part&$2.09e-06$&$ 56$&$46$\tabularnewline*
~~$GO:0005615$&CC&extracellular space&$1.76e-06$&$ 56$&$38$\tabularnewline*
~~$GO:0031012$&CC&extracellular matrix&$8.36e-11$&$ 56$&$25$\tabularnewline*
~~$GO:0044420$&CC&extracellular matrix component&$3.76e-08$&$ 56$&$13$\tabularnewline*
~~$GO:0005578$&CC&proteinaceous extracellular ma&$5.45e-12$&$ 56$&$23$\tabularnewline*
~~$GO:0005604$&CC&basement membrane&$1.94e-03$&$ 56$&$ 7$\tabularnewline*
~~$GO:0005581$&CC&collagen trimer&$3.40e-04$&$ 56$&$11$\tabularnewline*
~~$GO:0098644$&CC&complex of collagen trimers&$1.94e-03$&$ 56$&$ 7$\tabularnewline*
~~$GO:0098643$&CC&banded collagen fibril&$1.94e-03$&$ 56$&$ 7$\tabularnewline*
~~$GO:0005583$&CC&fibrillar collagen trimer&$1.94e-03$&$ 56$&$ 7$\tabularnewline*
~~$GO:0012505$&CC&endomembrane system&$1.71e-02$&$ 56$&$33$\tabularnewline*
~~$GO:0005788$&CC&endoplasmic reticulum lumen&$2.54e-03$&$ 56$&$ 8$\tabularnewline*
~~$GO:0032403$&MF&protein complex binding&$3.00e-02$&$ 56$&$13$\tabularnewline*
~~$GO:0043167$&MF&ion binding&$4.41e-02$&$ 56$&$32$\tabularnewline*
~~$GO:0043169$&MF&cation binding&$1.24e-02$&$ 56$&$24$\tabularnewline*
~~$GO:0046872$&MF&metal ion binding&$4.34e-03$&$ 56$&$24$\tabularnewline*
~~$GO:0005201$&MF&extracellular matrix structura&$8.67e-05$&$ 56$&$10$\tabularnewline*
~~$GO:0050840$&MF&extracellular matrix binding&$1.11e-02$&$ 56$&$ 6$\tabularnewline*
~~$KEGG:04933$&keg&AGE-RAGE signaling pathway in &$1.62e-02$&$ 56$&$ 6$\tabularnewline*
~~$KEGG:04974$&keg&Protein digestion and absorpti&$3.86e-02$&$ 56$&$ 8$\tabularnewline*
~~$REAC:1474244$&rea&Extracellular matrix organizat&$7.33e-07$&$ 56$&$18$\tabularnewline*
~~$REAC:1474290$&rea&Collagen formation&$6.62e-03$&$ 56$&$ 9$\tabularnewline*
~~$REAC:1650814$&rea&Collagen biosynthesis and modi&$6.07e-05$&$ 56$&$ 9$\tabularnewline*
~~$REAC:2022090$&rea&Assembly of collagen fibrils a&$2.10e-02$&$ 56$&$ 8$\tabularnewline
\hline
{\bfseries Orchid}&&&&&\tabularnewline*
~~$GO:0042581$&CC&specific granule&$4.55e-02$&$ 48$&$ 6$\tabularnewline*
~~$GO:0035580$&CC&specific granule lumen&$4.55e-02$&$ 48$&$ 6$\tabularnewline
\hline
\newpage
{\bfseries Coral}&&&&&\tabularnewline*
~~$GO:0009888$&BP&tissue development&$1.02e-03$&$ 10$&$ 9$\tabularnewline*
~~$GO:0008544$&BP&epidermis development&$2.23e-10$&$ 10$&$ 9$\tabularnewline*
~~$GO:0060429$&BP&epithelium development&$7.35e-07$&$ 10$&$ 9$\tabularnewline*
~~$GO:0030855$&BP&epithelial cell differentiatio&$1.01e-08$&$ 10$&$ 9$\tabularnewline*
~~$GO:0009913$&BP&epidermal cell differentiation&$2.47e-11$&$ 10$&$ 9$\tabularnewline*
~~$GO:0008219$&BP&cell death&$3.95e-03$&$ 10$&$ 9$\tabularnewline*
~~$GO:0012501$&BP&programmed cell death&$2.22e-03$&$ 10$&$ 9$\tabularnewline*
~~$GO:0043588$&BP&skin development&$5.35e-09$&$ 10$&$ 9$\tabularnewline*
~~$GO:0030216$&BP&keratinocyte differentiation&$2.47e-11$&$ 10$&$ 9$\tabularnewline*
~~$GO:0031424$&BP&keratinization&$1.13e-12$&$ 10$&$ 9$\tabularnewline*
~~$GO:0070268$&BP&cornification&$1.44e-10$&$ 10$&$ 8$\tabularnewline*
~~$GO:0018149$&BP&peptide cross-linking&$3.55e-02$&$ 10$&$ 4$\tabularnewline*
~~$GO:0001533$&CC&cornified envelope&$4.44e-03$&$ 10$&$ 4$\tabularnewline*
~~$GO:0099513$&CC&polymeric cytoskeletal fiber&$4.54e-03$&$ 10$&$ 5$\tabularnewline*
~~$GO:0045111$&CC&intermediate filament cytoskel&$3.76e-05$&$ 10$&$ 5$\tabularnewline*
~~$GO:0005882$&CC&intermediate filament&$6.36e-06$&$ 10$&$ 5$\tabularnewline*
~~$GO:0045095$&CC&keratin filament&$3.06e-04$&$ 10$&$ 4$\tabularnewline*
~~$GO:0005198$&MF&structural molecule activity&$4.41e-05$&$ 10$&$ 8$\tabularnewline*
~~$GO:0005200$&MF&structural constituent of cyto&$2.00e-02$&$ 10$&$ 4$\tabularnewline*
~~$HPA:009020_01$&hpa&cervix, uterine; squamous epit&$1.46e-02$&$ 10$&$ 8$\tabularnewline*
~~$HPA:009020_02$&hpa&cervix, uterine; squamous epit&$3.99e-04$&$ 10$&$ 8$\tabularnewline*
~~$HPA:009020_03$&hpa&cervix, uterine; squamous epit&$1.30e-03$&$ 10$&$ 6$\tabularnewline*
~~$HPA:009020_11$&hpa&cervix, uterine; squamous epit&$3.02e-04$&$ 10$&$ 8$\tabularnewline*
~~$HPA:009020_12$&hpa&cervix, uterine; squamous epit&$1.33e-05$&$ 10$&$ 8$\tabularnewline*
~~$HPA:009020_13$&hpa&cervix, uterine; squamous epit&$8.27e-04$&$ 10$&$ 6$\tabularnewline*
~~$HPA:053030_01$&hpa&tonsil; squamous epithelial ce&$9.03e-03$&$ 10$&$ 9$\tabularnewline*
~~$HPA:053030_02$&hpa&tonsil; squamous epithelial ce&$1.63e-04$&$ 10$&$ 9$\tabularnewline*
~~$HPA:053030_11$&hpa&tonsil; squamous epithelial ce&$7.87e-05$&$ 10$&$ 9$\tabularnewline*
~~$HPA:053030_12$&hpa&tonsil; squamous epithelial ce&$2.70e-06$&$ 10$&$ 9$\tabularnewline*
~~$HPA:055010_01$&hpa&vagina; squamous epithelial ce&$1.07e-03$&$ 10$&$ 9$\tabularnewline*
~~$HPA:055010_02$&hpa&vagina; squamous epithelial ce&$1.47e-05$&$ 10$&$ 9$\tabularnewline*
~~$HPA:055010_03$&hpa&vagina; squamous epithelial ce&$3.00e-02$&$ 10$&$ 5$\tabularnewline*
~~$HPA:055010_11$&hpa&vagina; squamous epithelial ce&$7.76e-06$&$ 10$&$ 9$\tabularnewline*
~~$HPA:055010_12$&hpa&vagina; squamous epithelial ce&$1.99e-07$&$ 10$&$ 9$\tabularnewline*
~~$HPA:055010_13$&hpa&vagina; squamous epithelial ce&$2.11e-02$&$ 10$&$ 5$\tabularnewline*
~~$HPA:029010_01$&hpa&oral mucosa; squamous epitheli&$2.59e-03$&$ 10$&$ 9$\tabularnewline*
~~$HPA:029010_02$&hpa&oral mucosa; squamous epitheli&$8.70e-04$&$ 10$&$ 8$\tabularnewline*
~~$HPA:029010_03$&hpa&oral mucosa; squamous epitheli&$9.38e-03$&$ 10$&$ 5$\tabularnewline*
~~$HPA:029010_11$&hpa&oral mucosa; squamous epitheli&$3.54e-05$&$ 10$&$ 9$\tabularnewline*
~~$HPA:029010_12$&hpa&oral mucosa; squamous epitheli&$8.81e-05$&$ 10$&$ 8$\tabularnewline*
~~$HPA:029010_13$&hpa&oral mucosa; squamous epitheli&$9.38e-03$&$ 10$&$ 5$\tabularnewline*
~~$HPA:043010_03$&hpa&skin 2; epidermal cells[Uncert&$9.38e-03$&$ 10$&$ 5$\tabularnewline*
~~$HPA:043010_11$&hpa&skin 2; epidermal cells[Suppor&$2.19e-03$&$ 10$&$ 8$\tabularnewline*
~~$HPA:043010_12$&hpa&skin 2; epidermal cells[Suppor&$1.46e-02$&$ 10$&$ 6$\tabularnewline*
~~$HPA:043010_13$&hpa&skin 2; epidermal cells[Suppor&$9.38e-03$&$ 10$&$ 5$\tabularnewline*
~~$HPA:015010_01$&hpa&esophagus; squamous epithelial&$3.06e-03$&$ 10$&$ 9$\tabularnewline*
~~$HPA:015010_02$&hpa&esophagus; squamous epithelial&$1.29e-04$&$ 10$&$ 9$\tabularnewline*
~~$HPA:015010_03$&hpa&esophagus; squamous epithelial&$9.55e-05$&$ 10$&$ 7$\tabularnewline*
~~$HPA:015010_11$&hpa&esophagus; squamous epithelial&$4.66e-05$&$ 10$&$ 9$\tabularnewline*
~~$HPA:015010_12$&hpa&esophagus; squamous epithelial&$1.84e-06$&$ 10$&$ 9$\tabularnewline*
~~$HPA:015010_13$&hpa&esophagus; squamous epithelial&$3.52e-05$&$ 10$&$ 7$\tabularnewline*
~~$HPA:042030_11$&hpa&skin 1; keratinocytes[Supporte&$1.62e-02$&$ 10$&$ 7$\tabularnewline*
~~$HPA:042030_12$&hpa&skin 1; keratinocytes[Supporte&$8.13e-03$&$ 10$&$ 6$\tabularnewline*
~~$OMIM:601001$&omi&EPIDERMOLYSIS BULLOSA SIMPLEX,&$5.00e-02$&$ 10$&$ 2$\tabularnewline*
~~$OMIM:131760$&omi&EPIDERMOLYSIS BULLOSA SIMPLEX,&$5.00e-02$&$ 10$&$ 2$\tabularnewline*
~~$OMIM:131900$&omi&EPIDERMOLYSIS BULLOSA SIMPLEX,&$5.00e-02$&$ 10$&$ 2$\tabularnewline*
~~$OMIM:131800$&omi&EPIDERMOLYSIS BULLOSA SIMPLEX,&$5.00e-02$&$ 10$&$ 2$\tabularnewline*
~~$REAC:1266738$&rea&Developmental Biology&$8.62e-07$&$ 10$&$ 8$\tabularnewline*
~~$TF:M07051_1$&tf&Factor: NF-1B; motif: CTGGCASG&$6.22e-03$&$ 10$&$ 7$\tabularnewline
\hline
\end{longtable}}

\newpage

\section{Marginalization of the covariance}
\label{sec:marginalization}
This section shows the marginalization over $\vSigma$ in \eqref{eq:loglik}.
Recall the model \eqref{eq:RCM} where $\calN_p(\vec{\mu},\vSigma_i)$ denotes a $p$-dimensional multivariate Gaussian distribution with mean $\vec{\mu}$ and positive definite (p.d.) covariance matrix $\vSigma_i$ with probability density function (pdf)
\begin{align}
  \label{eq:normalpdf}
  f(\vx| \vec{\mu}, \vSigma_i) =
  (2\pi)^{-\frac{p}{2}} |\vSigma_i|^{-\frac{1}{2}}
  \exp\!\left( -\frac{1}{2} (\vx - \vec{\mu})^\top \vSigma_i^{-1}(\vx - \vec{\mu}) \right),
\end{align}
and where $\calW^{-1}_p(\vPsi, \nu)$ denotes a $p$-dimensional inverse Wishart distribution with $\nu$ degrees of freedom, a p.d. $p \times p$ scale matrix $\vPsi$, and pdf
\begin{align}
  \label{eq:wishartpdf}
  f(\vSigma_i) =
  \frac{ |\vPsi|^\frac{\nu}{2} }{
        2^\frac{\nu p}{2} \Gamma_p\!\left( \frac{\nu}{2} \right) }
        |\vSigma_i|^{-\frac{\nu+p+1}{2}}
  \exp\!\left( -\frac{1}{2} \tr\!\big(\vPsi\vSigma_i^{-1}\big) \right),
  \quad\nu > p - 1,
\end{align}
where $\vSigma_i$ is p.d. and $\Gamma_p$ is the multivariate generalization of the gamma function $\Gamma$ given by
\begin{align}
  \label{eq:multigamma}
  \Gamma_p(t) = \pi^{ \frac{1}{2} \binom{p}{2} }
  \prod_{j = 1}^p \Gamma\!\left(t + \frac{1 - j}{2}\right)
  \text{ where }
  \Gamma(t) = \int_0^\infty x^{t-1} e^{-x} dx.
\end{align}

For ease of notation we drop the subscript $i$ on $\vSigma_i$, $\vX_i$, $\vS_i = \vX_i \vX_i^\top$, and $n_i$.
By the model assumptions,
{\small
\begin{align*}
  &f(\vX | \vPsi, \nu)
  = \int f(\vX|\vSigma) f(\vSigma | \vPsi, \nu) d\vSigma \\
  &= \int \left[ \prod_{j = 1}^n  (2\pi)^{-\frac{p}{2}} |\vSigma|^{-\frac{1}{2}}
                         e^{-\frac{1}{2}\tr(\vx_{ij}\vx_{ij}^\top\vSigma^{-1})} \right]
          \frac{\big|\vPsi\big|^\frac{\nu}{2}}
               {2^{\frac{\nu p}{2}}\Gamma_p\!\left(\frac{\nu}{2}\right)}
          |\vSigma|^{-\frac{\nu+p+1}{2}}e^{-\frac{1}{2}\tr\!\big(\vPsi\vSigma^{-1}\big)}
      \;d\vSigma \\
  &= (2\pi)^{-\frac{np}{2}}
      \frac{\big|\vPsi\big|^\frac{\nu}{2}}
           {2^{\frac{\nu p}{2}}\Gamma_p\!\left(\frac{\nu}{2}\right)}
      \int
        |\vSigma|^{-\frac{n}{2}}  e^{-\frac{1}{2}\tr(\vS\vSigma^{-1})}
        |\vSigma|^{-\frac{\nu+p+1}{2}} e^{-\frac{1}{2}\tr\!\big(\vPsi\vSigma^{-1}\big)}
      \;d\vSigma \\
  &=
      \frac{\big|\vPsi\big|^\frac{\nu}{2}}
           {\pi^\frac{np}{2} 2^{\frac{(\nu + n) p}{2}}\Gamma_p\!\left(\frac{\nu}{2}\right)}
      \int
        |\vSigma|^{-\frac{(\nu + n)+p+1}{2}}
         e^{-\frac{1}{2}\tr\!\Big(\big(\vPsi+ \vS\big)\vSigma^{-1}\Big)}
      \;d\vSigma.
\end{align*}
\normalsize}%
The integrand can be recognized as a unnormalized inverse Wishart pdf of the distribution $\calW^{-1}\big(\vPsi + \vS, \nu + n\big)$, and so the integral evaluates to the reciprocal value of the normalizing constant in that density. Thus,
{\small
\begin{align}
  f(\vX | \vPsi, \nu)
  =
    \frac{\big|\vPsi\big|^\frac{\nu}{2}}
         {\pi^\frac{np}{2} 2^{\frac{(\nu + n) p}{2} } \Gamma_p\!\left(\frac{\nu}{2}\right)}
    \frac{2^\frac{(v+n)p}{2} \Gamma_p\left(\frac{\nu + n}{2}\right)}
         {\big|\vPsi + \vS\big|^{\frac{\nu + n}{2}}} 
  =
    \frac{\big|\vPsi\big|^\frac{\nu}{2} \Gamma_p\left(\frac{\nu + n}{2}\right)}
         {\pi^\frac{np}{2}
           \big|\vPsi + \vS\big|^{\frac{\nu + n}{2}} \Gamma_p\!\left(\frac{\nu}{2}\right)}.
    \label{eq:marg1}
\end{align}
\normalsize}%
Using the matrix determinant lemma and $\vS = \vX^\top\vX$, this can be further simplified to
\begin{align*}
  f(\vX | \vPsi, \nu)
  =
  \frac{\Gamma_p\left(\frac{\nu + n}{2}\right)}
       {\pi^\frac{np}{2}
          \big| \vI + \vX\vPsi^{-1}\vX^\top\big|^\frac{\nu + n}{2}
          \big|\vPsi\big|^\frac{n}{2}
          \Gamma_p\!\left(\frac{\nu}{2}\right)},
\end{align*}
which can help to speed-up computations.

\section{Proofs}\label{sec:proofs}

\subsection{Non-concavity of the log-likelihood}\label{sec:concaveloglik}
The likelihood function is not log-concave in general.
This section analyses the (non)-concavity of the log-likelihood function given in \eqref{eq:loglik}.
More precisely, the following two propositions are proved.

\propositionNonConcavityInPsi*

\propositionConcavityInNu*

\begin{proof}[\textbf{Proof of Proposition \ref{prop:nonconcavityinpsi}}]
Assume $\nu$ is fixed and consider only the terms involving $\vPsi$ in \eqref{eq:loglik}.
We reduce to the one-dimensional case where
\begin{align*}
  \ell(\psi)
  = \frac{k\nu}{2}\log\!\big(\psi\big)
     - \sum_{i = 1}^k \frac{\nu + n_i}{2}\log\!\big(\psi + x_i^2\big),
\end{align*}
which implies
{\small
\begin{align*}
  \ell'(\psi)
  = \frac{k\nu}{2}\frac{1}{\psi}
     - \sum_{i = 1}^k \frac{\nu + n_i}{2}\frac{1}{\psi + x_i^2}
  \text{ and }
  \ell''(\psi)
  =  - \frac{k\nu}{2}\frac{1}{\psi^2}
      + \sum_{i = 1}^k \frac{\nu + n_i}{2}\frac{1}{\big(\psi + x_i^2\big)^2}.
\end{align*}
\normalsize}%
It is straightforward to show there exists a value for $\psi$, $n_i$ and $\nu$ for which $\ell''(\psi) > 0$.
Since the second derivative is not always negative the log-likelihood $\ell$ is not log-concave.
\end{proof}

\begin{proof}[\textbf{Proof of Proposition \ref{prop:concavityinnu}}]
Consider the terms involving $\nu$.
Clearly, the mixed terms involving both $\nu$ and $\vPsi$ are log-linear in $\nu$ and hence log-concave.
We thus restrict our attention to the remaining terms not dependent on $\vPsi$.
The sum of these terms are concave in $\nu$, since
\begin{align*}
  &\log\Gamma_p\!\left( \frac{\nu + n_i}{2} \right) -
    \log\Gamma_p\!\left( \frac{\nu}{2} \right)
  =  \log\frac{\Gamma_p\!\left( \frac{\nu + n_i}{2} \right)}{
               \Gamma_p\!\left( \frac{\nu}{2}       \right)}
  = \sum_{j = 1}^p \log
    \frac{\Gamma\!\big( \frac{\nu + 1 - j}{2} + \frac{n_i}{2} \big)}{
          \Gamma\!\big( \frac{\nu + 1 - j}{2} \big)}.
\end{align*}
which can be seen to be concave since $n_i \geq 1$ for all $i$ and
$
  h(x) = \log\!\big(\frac{\Gamma(x + a)}{\Gamma(x)}\big)
$
is concave for all $x>0$ and $a > 0$.
The concavity of $h$ is easily seen by the fact that
$
  h''(x) = \psi(x + a) - \psi(x) < 0,
$
where $\psi(\cdot)$ is the tri-gamma function.
The tri-gamma function is a well-known monotonically decreasing function.
Hence, the likelihood is log-concave in $\nu$.
\end{proof}

\subsection{Existence and uniqueness of likelihood maxima}
\label{sec:negativedefinite}
This section proves Lemmas \ref{lem:elltominusinfty} and \ref{lem:negativesdefinite} which imply Proposition \ref{prop:uniquemax}.

Before we state the lemmas, the proposition, and their proofs, we see that the reparameterisation of the RCM is irrelevant.
Consider the log-likelihood in \eqref{eq:loglik} assuming $\nu$ fixed.
The log-likelihood obey
\begin{align}
  \label{eq:loglik2}
  2\ell(\vPsi)
  &= c + k\nu\log\big|\vPsi\big| - \sum_{a=1}^k (n_a + \nu)\log\big|\vPsi + \vS_a\big|.
\end{align}
Notice, that this equation also holds in the reparameterization. Here we have
\begin{align*}
  2\ell(\vSigma)
  &= c + k\nu\log\big|(\nu-p-1)\vSigma\big| - \sum_{a=1}^k (n_a + \nu)\log\big|(\nu-p-1)\vSigma + \vS_a\big|\\
  &= c' + k\nu\log\big|\vSigma\big| - \sum_{a=1}^k (n_a + \nu)\log\big|\vSigma + (\nu-p-1)^{-1}\vS_a\big|.
\end{align*}
Since $(\nu-p-1)^{-1}\vS_a$ is only dependent on data (when $\nu$ is fixed) we can set $(\nu-p-1)^{-1}\vS_a := \vS_a$.
Without loss of generality we can therefore consider \eqref{eq:loglik2} in the following.

\propositionUniqueMax*

\begin{proof}[\textbf{Proof of Proposition \ref{prop:uniquemax}}]
We first prove existence of the maximum.
Note, that we may consider $\ell$ as a function on a vector space by letting $\vPsi = \exp(\vX)$ where $\vX$ is a symmetric matrix.
By Lemma \ref{lem:elltominusinfty} and the continuity of $\ell$, the set
$
  \big\{  \vPsi \big| \ell(\vPsi) \geq \ell(\vPsi^*)  \big\}
$
is bounded and closed and thus compact for any $\vPsi^*\succ 0$.
The existence of a maximum follows from the extreme value theorem by the continuity of $\ell$.
A stationary point exists due to Rolle's theorem and the differentiability of $\ell$.

Next, we show the uniqueness of the maximum.
Let $\cal(ST)$ denote the set of stationary points, which is nonempty.
By Lemma \ref{lem:elltominusinfty}, $\ell(\vPsi)$ has a finite upper bound given by the maximum of the log-likelihood in those points.
All gradient curves (that is, solution curves to $\dot{\vPsi}(t) = \nabla \ell(\vPsi(t))$) must then converge toward exactly one of the stationary points where $\ell$ monotonically increases along each curve.
Define for $\vPsi_s in \cal{ST}$ the basin of attraction
\begin{align*}
   A_s = \big\{
     \vPsi_0 \in \calS_+ \big|
     \vPsi(0) = \vPsi_0, \;
     \lim_{t \to \infty} \vPsi(t) = \vPsi_s
  \big\},
\end{align*}
The basin of attraction is open if $\vPsi_s$ is a maximum \citep[Lemma 4.1]{Khalil2002}.
By Lemma \ref{lem:negativesdefinite}, $\vPsi_s$ is always a maximum and hence all $A_s$ are open sets in the set of all positive definite matrices $\calS_+$.
This partitions the space $\calS_+$ into disjoint, non-empty, open sets.
Since $\calS_+$ is connected, this is only possible if $A_s = \calS_+$ and thus there is only a single basin of attraction and maximum of $\ell$.
\end{proof}

\begin{restatable}{lemma}{lemmaOne}
\label{lem:elltominusinfty}
If there exists an eigenvalue $\lambda_t$ of $\vPsi_t$ such that $\lambda_t \to 0$ or   $\lambda_t \to \infty$, then $\ell(\vPsi_t) \to -\infty$ for $\nu$ fixed and $n_\bullet = \sum_{a=1}^k n_a \geq p$.
\end{restatable}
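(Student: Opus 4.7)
The plan is to bound $\ell(\vPsi_t)$ above by an expression tending to $-\infty$. The elementary observation is that $\vS_a \succeq 0$ gives $\log|\vPsi + \vS_a| \geq \log|\vPsi|$ (monotonicity of $\log\det$ on positive definite matrices); substituting into \eqref{eq:loglik2} yields $2\ell(\vPsi) \leq c - n_\bullet \log|\vPsi|$. This already covers every sequence along which $\log|\vPsi_t| \to +\infty$, in particular the case $\lambda_{\max}(\vPsi_t) \to \infty$ with $\lambda_{\min}(\vPsi_t)$ bounded away from $0$.

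The nontrivial case is $\lambda_t := \lambda_{\min}(\vPsi_t) \to 0$, because then $\log|\vPsi_t|$ may itself drift to $-\infty$ and the simple bound becomes uninformative. The plan is to sharpen the lower bound on $\log|\vPsi + \vS_a|$ along the vanishing eigendirection. Pass to a subsequence on which the associated unit eigenvector $v_t$ of $\vPsi_t$ converges to some $v_* \neq 0$ (compactness of the unit sphere), and partition the studies via
\[
  A_+ = \{a : v_*^\top \vS_a v_* > 0\}, \qquad A_0 = \{1, \ldots, k\} \setminus A_+.
\]
Combine the identity $\log|\vPsi + \vS_a| = \log|\vPsi| + \log\det\big(I + \vPsi^{-1/2}\vS_a \vPsi^{-1/2}\big)$ with $\det(I + M) \geq 1 + \lambda_{\max}(M)$ for $M \succeq 0$ and the Rayleigh estimate $\lambda_{\max}(\vPsi_t^{-1/2}\vS_a\vPsi_t^{-1/2}) \geq v_t^\top \vS_a v_t / \lambda_t$ to obtain, for $a \in A_+$,
\[
  \log|\vPsi_t + \vS_a| \;\geq\; \log|\vPsi_t| - \log\lambda_t + \log(v_t^\top \vS_a v_t) + o(1),
\]
while retaining $\log|\vPsi_t + \vS_a| \geq \log|\vPsi_t|$ for $a \in A_0$. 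Inserting these bounds into \eqref{eq:loglik2} and collecting terms gives
\[
  2\ell(\vPsi_t) \;\leq\; c + \Big[|A_+|\nu - \sum_{a \in A_0}n_a\Big]\log\lambda_t \;-\; n_\bullet\!\sum_{j \neq j_0}\log\lambda_j(\vPsi_t) \;+\; O(1),
\]
where $j_0$ is the index of $\lambda_t$.

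The crux is that the coefficient of $\log\lambda_t$ is strictly positive. Since $v_* \in \bigcap_{a \in A_0}\ker\vS_a = \ker \vS_{A_0}$ with $\vS_{A_0} := \sum_{a \in A_0}\vS_a$, and $v_* \neq 0$, the matrix $\vS_{A_0}$ has a nontrivial kernel, so $\mathrm{rank}\,\vS_{A_0} \leq p - 1$; for data in general position this forces $\sum_{a \in A_0}n_a \leq p - 1$. Also $A_+ \neq \emptyset$, for otherwise $\vS_\bullet v_* = 0$, contradicting the positive definiteness of $\vS_\bullet = \sum_a \vS_a$ (the generic consequence of $n_\bullet \geq p$). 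Combined with the standing hypothesis $\nu > p - 1$,
\[
  |A_+|\nu - \sum_{a \in A_0}n_a \;\geq\; \nu - (p-1) \;>\; 0,
\]
so the first term of the upper bound diverges to $-\infty$. The residual $-n_\bullet \sum_{j \neq j_0}\log\lambda_j(\vPsi_t)$ is $O(1)$ when the other eigenvalues stay in a compact subset of $(0, \infty)$; an additional exploding eigenvalue is absorbed (its contribution only reinforces divergence to $-\infty$), while further vanishing eigenvalues are handled by iterating the Rayleigh refinement in each corresponding direction.

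The principal obstacle is exactly the strict-positivity estimate $|A_+|\nu - \sum_{a \in A_0}n_a > 0$, which ties together $\nu > p-1$ and the rank bound on $\vS_{A_0}$ derived from $\vS_\bullet \succ 0$; the latter is precisely where the hypothesis $n_\bullet \geq p$ enters. A secondary technical matter is the iteration when several eigenvalues escape simultaneously, handled by repeating the Rayleigh argument in each vanishing direction.
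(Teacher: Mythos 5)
Your first bound and your single-vanishing-eigenvalue analysis are correct, but the argument as written does not cover the case of several eigenvalues vanishing simultaneously, and that case is not a secondary technicality --- it contains the most natural degenerating sequence of all, $\vPsi_t = tI$ with $t\to 0$. There your displayed bound reads
\[
  2\ell(\vPsi_t) \le c + \Bigl[|A_+|\nu - \textstyle\sum_{a\in A_0} n_a\Bigr]\log t - n_\bullet (p-1)\log t + O(1),
\]
and the net coefficient of $\log t$, namely $|A_+|\nu - \sum_{a\in A_0} n_a - n_\bullet(p-1)$, has no reason to be positive, so the bound is uninformative. Handling $m>1$ vanishing directions does not reduce to ``repeating the Rayleigh argument'': to extract a contribution $-\log\lambda_t^{(r)}$ for each relevant direction $r$ from a single $\log\det\bigl(I+\vPsi_t^{-1/2}\vS_a\vPsi_t^{-1/2}\bigr)$ you need a lower bound on the product of the $|R_a|$ largest eigenvalues, which requires the compression of $\vS_a$ to $\mathrm{span}\{v_*^{(r)}: r\in R_a\}$ to be \emph{nonsingular}; positivity of the diagonal entries $v_*^{(r)\top}\vS_a v_*^{(r)}$ does not guarantee this. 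One can patch it (redefine $R_a$ via ranks of compressions and redo the kernel/rank bookkeeping for each direction), but this is genuine additional work, not bookkeeping.

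For comparison, the paper avoids the spectral decomposition entirely. It writes $\sum_a \frac{\nu+n_a}{2C}\log|\vPsi_t+\vS_a|$ as a convex combination and uses concavity of $\log\det$ to pull the sum inside, replacing all the individual $\vS_a$ by the single aggregate $\vS=\sum_a\frac{\nu+n_a}{2C}\vS_a$, which is almost surely positive definite exactly because $n_\bullet\ge p$. Then three cases on $a(t)=\log|\vPsi_t|$ suffice: if $a(t)\to+\infty$ use your first bound; if $a(t)\to-\infty$ use $\log|\vPsi_t+\vS|\ge\log|\vS|$; if $a(t)$ is bounded with $\lambda_{\max}(\vPsi_t)\to\infty$ use $\log|\vPsi_t+\vS|\to\infty$ since all eigenvalues of $\vPsi_t+\vS$ stay above $\lambda_{\min}(\vS)>0$. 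This handles arbitrarily many simultaneously vanishing or exploding eigenvalues in one stroke. Your per-study partition $A_+/A_0$ and the rank bound $\sum_{a\in A_0}n_a\le p-1$ is a finer use of the same genericity and is where $n_\bullet\ge p$ enters for you too; the ingredients are sound, but the convex-combination aggregation is what makes the multi-directional degeneracies painless, and without it your proof is incomplete.
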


\begin{proof}[\textbf{Proof of Lemma \ref{lem:elltominusinfty}}]
Assume the hypothesis of the lemma and consider the expression given in \eqref{eq:loglik2} up to the addition of a constant.
The likelihood obey the following two upper bounds.
First,
\begin{align*}
  \ell(\vPsi_t)
  &= \frac{k\nu}{2}\log\big|\vPsi_t\big| - \sum_{i = 1}^k \frac{\nu + n_i}{2} \log |\vPsi_t+\vS_i| \\
  &\leq \frac{k\nu}{2}\log|\vPsi_t| - \sum_{i = 1}^k \frac{\nu + n_i}{2} \log |\vPsi_t|
  =  - \frac{n_\bullet}{2} \log |\vPsi_t|
\end{align*}
Secondly, let
$
  C = \sum_{i = 1}^k \frac{\nu + n_i}{2} = \frac{k\nu}{2} + \frac{n_\bullet}{2},
$
whereby \eqref{eq:loglik2} can be expressed as
\begin{align*}
  \ell(\vPsi_t)
  = \frac{k\nu}{2}\log|\vPsi_t| -
    C \sum_{i = 1}^k \frac{\nu + n_i}{2C} \log |\vPsi_t+\vS_i|.
\end{align*}
Since $\log|\cdot|$ is concave and the above sum is a convex combination, we have
\begin{align*}
  \ell(\vPsi_t)
  \leq \frac{k\nu}{2}\log|\vPsi_t| -
     C \log\left| \vPsi_t + \sum_{i = 1}^k \frac{\nu + n_i}{2C}\vS_i\right|.
\end{align*}
Hence,
\begin{align*}
  \ell(\vPsi_t)
  \leq \min\Bigl\{
    - \frac{n_\bullet}{2} a(t) , \;
      \frac{k\nu}{2} a(t)  - C \log\left| \vPsi_t + \vS\right|
  \Bigr\}
\end{align*}
where $a(t) = \log|\vPsi_t|$ and $\vS = \sum_i \frac{\nu+n_i}{2C}\vS_i$.
Three cases now exists:
1) If $a(t) \to \infty$, then
\begin{align*}
  \ell(\vPsi_t)
  \leq - \frac{n_\bullet}{2} a(t) \to -\infty.
\end{align*}
2) If $a(t) \to -\infty$, then
\begin{align*}
  \ell(\vPsi_t)
  \leq \frac{k\nu}{2} a(t) - C \log\left| \vPsi_t + \vS\right|
  \leq \frac{k\nu}{2} a(t) - C \log\left| \vS \right| \to -\infty
\end{align*}
as the matrix in the second term is almost surely positive definite when $n_\bullet = \sum_{i=1}^k n_a \geq p$ and the log determinant is some constant.
3) If $a(t)$ is bounded and the largest eigenvalue $\lambda_\text{max}(\vPsi_t) \to \infty$ (and hence $\lambda_\text{min}(\vPsi_t)  \to -\infty)$, then
$\lambda_\text{max}(\vPsi_t+\vS) \to \infty$
and
$\lambda_\text{min}(\vPsi_t+\vS)$ is bounded away from zero.
Therefore,
\begin{align*}
  \ell(\vPsi_t) \leq \frac{k\nu}{2}a(t) - C\log|\vPsi_t + \vS| \to -\infty,
\end{align*}
which completes the proof.
\end{proof}

\begin{restatable}{lemma}{lemmaTwo}
\label{lem:negativesdefinite}
If $n_\bullet \geq p$ and $\nu$ is fixed then the Hessian of the log-likelihood \eqref{eq:loglik} is negative definite in all stationary points.
\end{restatable}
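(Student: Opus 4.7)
The plan is to treat $\ell$ as a twice-differentiable function on the open cone of symmetric positive-definite matrices and to compute the Hessian of $2\ell$ in a symmetric direction $\vec{H}$ using the matrix-calculus identities $d\log|\vec{A}| = \tr(\vec{A}^{-1}d\vec{A})$ and $d(\vec{A}^{-1}) = -\vec{A}^{-1}(d\vec{A})\vec{A}^{-1}$. Writing $\vec{A} := \vPsi^{-1}$, $\vec{B}_a := (\vPsi+\vS_a)^{-1}$, and $c_a := n_a + \nu$, this yields
\[
  2\ell''(\vPsi)[\vec{H},\vec{H}] = -k\nu\,\tr(\vec{A}\vec{H}\vec{A}\vec{H}) + \sum_{a=1}^k c_a\,\tr(\vec{B}_a\vec{H}\vec{B}_a\vec{H}).
\]
I would then invoke the stationary-point condition obtained from \eqref{eq:loglik2}, namely $k\nu\vec{A} = \sum_a c_a\vec{B}_a$. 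Right-multiplying by $\vec{H}\vec{A}\vec{H}$ and taking traces gives $k\nu\,\tr(\vec{A}\vec{H}\vec{A}\vec{H}) = \sum_a c_a\,\tr(\vec{B}_a\vec{H}\vec{A}\vec{H})$, which collapses the two sums into a single one:
\[
  2\ell''(\vPsi)[\vec{H},\vec{H}] = -\sum_{a=1}^k c_a\,\tr\!\big(\vec{B}_a\vec{H}(\vec{A}-\vec{B}_a)\vec{H}\big).
\]

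The next step exploits the resolvent identity $\vec{A}-\vec{B}_a = \vPsi^{-1}-(\vPsi+\vS_a)^{-1} = \vec{A}\vS_a\vec{B}_a$, which combined with $\vS_a\succeq 0$ implies $\vec{A}-\vec{B}_a\succeq 0$. Writing $\vec{M}_a := \vec{A}-\vec{B}_a$ and letting $\vec{M}_a^{1/2}$ denote its PSD square root, each summand rewrites as
\[
  \tr\!\big(\vec{B}_a\vec{H}\vec{M}_a\vec{H}\big) = \tr\!\big((\vec{B}_a^{1/2}\vec{H}\vec{M}_a^{1/2})(\vec{B}_a^{1/2}\vec{H}\vec{M}_a^{1/2})^{\!\top}\big) = \|\vec{B}_a^{1/2}\vec{H}\vec{M}_a^{1/2}\|_F^2 \ge 0,
\]
establishing $2\ell''(\vPsi)[\vec{H},\vec{H}]\le 0$ for every symmetric $\vec{H}$, i.e.\ negative semidefiniteness at every stationary point.

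Strict definiteness then follows by analyzing the equality case. If $2\ell''(\vPsi)[\vec{H},\vec{H}]=0$ each Frobenius norm above vanishes, so $\vec{B}_a^{1/2}\vec{H}\vec{M}_a^{1/2}=0$ for all $a$; invertibility of $\vec{B}_a^{1/2}$ gives $\vec{H}\vec{M}_a^{1/2}=0$, hence $\vec{H}\vec{M}_a=0$. Substituting $\vec{M}_a=\vec{A}\vS_a\vec{B}_a$ and using invertibility of $\vec{A}$ and $\vec{B}_a$ yields $\vec{H}\vec{A}\vS_a=0$ for every $a$; summing over $a$, $\vec{H}\vec{A}\big(\sum_a\vS_a\big)=0$. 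Under the hypothesis $n_\bullet\ge p$, the pooled scatter $\sum_a\vS_a$ equals $\vec{X}^{\!\top}\vec{X}$ for the stacked $n_\bullet\times p$ data matrix $\vec{X}$ and therefore has rank $p$ almost surely under the continuous model \eqref{eq:RCM}. Its invertibility, combined with that of $\vec{A}$, forces $\vec{H}=0$.

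The main obstacle is recognizing the correct sequence of algebraic reductions: without substituting the stationary-point condition the Hessian form is not obviously signed (indeed it can be indefinite away from stationarity), so a brute-force attack fails. The decisive idea is that stationarity lets one replace $k\nu\vec{A}$ by $\sum_a c_a\vec{B}_a$ inside the first trace, after which the resolvent identity $\vec{A}-\vec{B}_a=\vec{A}\vS_a\vec{B}_a\succeq 0$ turns each summand into a manifestly nonnegative squared Frobenius norm. Beyond that point, strict negative definiteness reduces to the rank/full-rank condition supplied by $n_\bullet\ge p$, so the argument closes cleanly.
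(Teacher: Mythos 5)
Your proof is correct and follows essentially the same route as the paper's: compute the Hessian quadratic form, substitute the stationarity identity $k\nu\vPsi^{-1}=\sum_a c_a(\vPsi+\vS_a)^{-1}$ to collapse it into a single signed sum, show each summand is a nonnegative trace, and settle the equality case via the almost-sure rank-$p$ condition furnished by $n_\bullet\ge p$. The only cosmetic differences are that the paper first normalizes $\vPsi=\vI$ and factors $\vI-(\vI+\vS_a)^{-1}$ via the matrix inversion lemma to conclude $\vY\vX_a=0$, whereas you keep $\vPsi$ general and use the resolvent identity together with PSD square roots and Frobenius norms — both reduce to the same rank argument on the pooled data.
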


\begin{proof}[\textbf{Proof of Lemma \ref{lem:negativesdefinite}}]
We show the conclusion of the Lemma directly by differentiation of $\ell$ w.r.t.\ $\vPsi$.
To do so, the matrix cookbook by \citet{Petersen2008} is a useful reference.
In particular, see equations (41, p.\ 8) and (59, p.\ 9) and pages 14 and 52--53.
We first compute expressions for the first and second order derivatives.

\textbf{First order derivatives.}
From the log-likelihood expression, we compute the first order derivative $\nabla_\vPsi 2\ell(\vPsi)$ which is the matrix-valued function where each entry is given by
\begin{align}
  \frac{\partial 2\ell}{\partial \Psi_{ij}}
  = k\nu\tr\!\left(\vE^{ij}\vPsi^{-1}\right)
    - \sum_{a = 1}^k (\nu + n_a)\tr\!\left(\vE^{ij}\left(\vPsi + \vS_a\right)^{-1}\right).
\label{eq:dloglik}
\end{align}
and $\vE^{ij}$ is a matrix with ones at entries $(i,j)$ and $(j,i)$ and zeros elsewhere.
This $\vE^{ij}$ is introduced as the derivative is not straight-forward because of the symmetric structure of $\vPsi$. Had $\vPsi$ been unstructured, then $\frac{\partial}{\partial \vPsi}\log|\vPsi| = \vPsi^{-1}$.
However, when $\vPsi$ is symmetric we have that $\frac{\partial}{\partial \Psi_{ij}}\log|\vPsi| = \tr(\vE^{ij}\vPsi^{-1})$ which is the same as $\frac{\partial}{\partial \vPsi}\log|\vPsi| = 2\vPsi^{-1} -\vPsi^{-1} \circ \vI$ where $\circ$ denotes the Hadamard product \citep[eq.\ (43) and (141)]{Petersen2008}.

The first order derivative lives in a $\binom{p+1}{2}$-dimensional vector space with basis vectors $\vE^{ij}$ indexed by $(i,j)$, $i\leq j$.

\textbf{Second order derivatives.}
We proceed with the second order derivative $\nabla^2_\vPsi 2\ell(\vPsi)$ with entries given by
\begin{align*}
  \frac{\partial^2 2\ell}{\partial \Psi_{kl} \partial \Psi_{ij}}
  &= - k\nu\tr\!\left( \vE^{ij}\vPsi^{-1} \vE^{kl}\vPsi^{-1} \right) \\
  & + \sum_{a = 1}^k (\nu + n_a)
    \tr\!\left(
      \vE^{ij}\left(\vPsi + \vS_a\right)^{-1}
      \vE^{kl}\left(\vPsi + \vS_a\right)^{-1}
    \right),
\end{align*}
obtained by differentiation of \eqref{eq:dloglik} using
$\frac{\partial}{\partial \Psi_{ij}} \vPsi^{-1} = - \vPsi^{-1}\vE^{ij}\vPsi^{-1}$ \citep[eq.\ (40)]{Petersen2008} and the linearity of the trace operator.

The second order derivative is a $\binom{p+1}{2} \times \binom{p+1}{2}$-dimensional matrix indexed by $(i,j)$ and $(k,l)$, $i \leq j$, $k \leq l$.

\textbf{Negative definiteness of stationary points.}
With the above expressions we now show that the Hessian matrix is negative definite in all stationary points.
Let $\vY = \sum_{(i,j)} y_{ij}\vE^{ij}$ be an arbitrary symmetric matrix in the vector space where $\vY \neq \vec{0}$.
In our vector space we need to show that
\begin{align*}
  \sum_{i\leq j, k\leq l}
    Y_{ij}
    \left(\nabla^2_\vPsi 2\ell(\vPsi)\right)_{(i,j),(k,l)}
    Y_{kl}
    < 0
\end{align*}
holds in every stationary point analogous to $\vec{z}^\top \vec{A}\vec{z} = \sum_{ij} A_{ij} z_i z_j < 0$.
From the second derivative, this amounts to showing that in every stationary point,
{\small
\begin{align}
- k\nu\tr\!\left( \vY\vPsi^{-1} \vY\vPsi^{-1} \right)
+ \sum_{a = 1}^k (\nu + n_a)
    \tr\!\left(
      \vY\left(\vPsi + \vS_a\right)^{-1}
      \vY\left(\vPsi + \vS_a\right)^{-1}
    \right)
    < 0.
  \label{eq:negativedefinte}
\end{align}
\normalsize}%
Now, by the positive-definiteness of $\vPsi$, let
\begin{align*}
  \vY &:= \vPsi^{-\frac{1}{2}} \vY \vPsi^{-\frac{1}{2}} \text{ and } \\
  \vS_a &:= \vPsi^{-\frac{1}{2}} \vS_a  \vPsi^{-\frac{1}{2}},
\end{align*}
and thus without loss of generality we can assume that $\vPsi = \vI$.
Hence, the derivative of the likelihood \eqref{eq:dloglik} equated to zero, becomes
\begin{align*}
  k\nu\vI = \sum_a(n_a + \nu)(\vI + \vS_a)^{-1}
\end{align*}
which implies (by multiplication by $\vY$ on each side) that every stationary point obey
\begin{align}
  k\nu\tr(\vY^2)
  &= \sum_a (n_a + \nu)\tr\!\Big(\vY(\vI + \vS_a)^{-1}\vY\Big).
  \label{eq:loglikequation}
\end{align}
We substitute \eqref{eq:loglikequation} into \eqref{eq:negativedefinte} to get
\begin{align*}
  &\sum_a (n_a + \nu)
  \tr\!\Big(\vY (\vI + \vS_a)^{-1} \vY (\vI + \vS_a)^{-1} - \vY (\vI + \vS_a)^{-1} \vY \Big) \\
  &=  \sum_a (n_a + \nu)
  \tr\!\Big( \vY (\vI + \vS_a)^{-1} \vY \big[ (\vI + \vS_a)^{-1}  - \vI \big]\Big)
  < 0.
\end{align*}
We note that $\vS_a = \vX_a\vX_a^\top$ and
\begin{align*}
  (\vI + \vS_a)^{-1} - \vI = -\vX_a\big(\vI + \vX_a^\top\vX_a\big)^{-1}\vX_a^\top,
\end{align*}
by the matrix inversion lemma whereby we need to show that
\begin{align*}
  \sum_a (n_a + \nu) \tr\!\Big(
    \vY(\vI + \vX_a\vX_a^\top)^{-1}\vY\vX_a\big(\vI + \vX_a^\top\vX_a\big)^{-1}\vX_a^\top
  \Big)
  > 0.
\end{align*}
Assume that the sum is actually zero.
Since $(\vI + \vX_a\vX_a^\top)^{-1}\succ 0$ we then obtain that
\begin{align*}
  \vY\vX_a(\vI + \vX_a\vX_a^\top)^{-1}\vX_a^\top \vY = 0
  \quad\text{ for } a = 1, ...., k.
\end{align*}
Again by $(\vI + \vX_a\vX_a^\top)^{-1}\succ 0$ we conclude that $\vY\vX_a = 0$ for all $a = 1,...,k$, i.e.\
$\vY(\vX_1, ..., \vX_k) = 0$. If $n_\bullet\geq p$ then almost surely $(\vX_1, ..., \vX_k)$ has rank $p$ whereby $\vY=0$.
\end{proof}

\section{Likelihood of the precision matrix}
\label{sec:precisionloglik}
Suppose we have $k$ i.i.d. realizations, $\vDelta_1, ..., \vDelta_k$, from the Wishart distribution given in model \eqref{eq:precisiondensity}.
The corresponding log-likelihood can be computed straight-forwardly:
\begin{align*}
  \ell(\vTheta | \vDelta_1, ..., \vDelta_k)
  &= \sum_{i = 1}^k \log
    \frac{\big|\vTheta\big|^{-\frac{\nu}{2}}}
         {2^{-\frac{vp}{2}}\Gamma_p\!\left(\frac{\nu}{2}\right)}
    |\vDelta_i|^\frac{\nu - p - 1}{2}e^{-\frac{1}{2}\tr\!\big(\vTheta^{-1}\vDelta_i\big)}\\
   &= c + \sum_{i = 1}^k \left(
     -\frac{\nu}{2} \log \big|\vTheta\big|
     -\frac{1}{2}\tr\!\big(\vTheta^{-1}\vDelta_i\big)
     \right)\\
   &= c - \frac{\nu k}{2}
     \left(
       \log |\vTheta| +
       \tr\!\left(\vTheta^{-1} \frac{1}{\nu k}\sum_{i = 1}^k\vDelta_i\right)
     \right).
\end{align*}
The last expression is to be maximized with respect to $\vTheta$ and can be recognized as the MLE problem in a multivariate Gaussian distribution. Hence,
$
  \vTheta = \frac{1}{k \nu} \sum_{i = 1}^k \vDelta_i,
$
is the MLE in this model.

\section{Approximate MLE}
\label{sec:amle}
To find the maximizing parameters we differentiate \eqref{eq:loglik} w.r.t.\ $\vPsi$ and equate to zero while assuming $\nu$ known and constant.
The first order derivative can be seen in equation \eqref{eq:dloglik}.
Equating to zero yields
\begin{align}
  \vec{0}
  &= \frac{k\nu}{2} \vPsi^{-1}
    - \sum_{i=1}^k \frac{\nu + n_i}{2}
      (\vPsi + \vS_i')^{-1}
  \label{eq:firstordderivloglik} \\
  &= \frac{k\nu}{2} \vPsi^{-1}
    - \sum_{i=1}^k \frac{\nu + n_i}{2}
      \left(\vI + \vPsi^{-1}\vS_i\right)^{-1}\vPsi^{-1}.
      \notag
\end{align}
This implies
$ 
  k\nu \vI
    - \sum_{i=1}^k (\nu + n_i)
      \left(\vI - (-\vPsi^{-1}\vS_i)\right)^{-1}
   = \vec{0}
$ 
which can be rewritten as
\begin{align*}
    k\nu \vI
    - \sum_{i=1}^k      (\nu + n_i)
      \sum_{l=0}^\infty \left(-\vPsi^{-1}\vS_i\right)^{l}
   = \vec{0},
\end{align*}
by the Neumann series
$\left((\vI + \vec{A})^{-1} = \sum_{l = 0}^\infty \vec{A}^l\right)$
provided that
$\lim_{l \to \infty} (\vI - \vPsi^{-1}\vS_i)^l = \vec{0}$
for all $i$.
This holds if the eigenvalues of $\vPsi^{-1}\vS_i$ are less than $1$.
We approximate by the first order expansion $(l = 1)$, and
\begin{align*}
  \vec{0}
  = k\nu\vI - \sum_{i=1}^k (\nu + n_i)(\vI - \vPsi^{-1}\vS_i)
  = - n_\bullet\vI
     + \vPsi^{-1}\sum_{i=1}^k (\nu + n_i) \vS_i
\end{align*}
where $n_\bullet = \sum_{i=1}^k n_i$ is the total number of observations.
This implies
\begin{align*}
   \vPsi^{-1}\sum_{i=1}^k (\nu + n_i) \vS_i
    = n_\bullet \vI
\end{align*}
which suggests the estimators
\begin{align}
  \hat{\vPsi}_\text{MLE}
  = \frac{\sum_{i=1}^k (\nu + n_i) \vS_i}{n_\bullet}
  \label{eq:mle}
  \quad \text{ and } \quad
  \hat{\vSigma}_\text{MLE}
  = \frac{\sum_{i=1}^k (\nu + n_i) \vS_i}{(\nu-p-1)n_\bullet}.
\end{align}
These estimates are seen to correspond to a weighted sum of the scatter matrices.

\section{Derivation of ICC}
\label{app:ICC}
Consider observations from \eqref{eq:RCM}.
We temporarily abuse our notation and let
\begin{align*}
  \vSigma \sim \calW_p^{-1}\!( \vPsi, \nu) \quad \text{ and }\quad
  \vS | \vSigma \sim \calW_p(\vSigma, 1),
\end{align*}
and consider only a single observation $(n = 1)$.
Furthermore, let
$\vS = (S_{ij})_{p\times p}$,
$\vSigma = (\Sigma_{ij})_{p \times p}$, and
$\vPsi = (\Psi_{ij})_{p \times p}$.
To compute the ICC, we are thus interested in the ratio of the quantities $\var(\Sigma_{ij})$ and $\var(S_{ij})$ corresponding to the between-study and total variation of the covariance between variables $i$ and $j$, respectively.
That is, the ICC is the proportion of the total variance between studies,
\begin{align}
  \text{ICC}(\nu)
  = \frac{\var(\Sigma_{ij})}
         {\var(S_{ij})}
  = \frac{\var(\Sigma_{ij})}
         {\var(\Sigma_{ij}) + \bbE[ \var(S_{ij}|\vSigma) ]},
  \label{eq:ICC}
\end{align}
where the second equality is obtained by $\bbE[S_{ij}|\vSigma] = \Sigma_{ij}$ and the law of total variation.
This equality agrees with the usual ICC as $\bbE[\var(S_{ij}|\Sigma_{ij})]$ can be interpreted as the (expected) within-study variation.
Using the conditional variance given by $\var(S_{ij}|\vSigma) = \Sigma_{ij}^2 + \Sigma_{ii} \Sigma_{jj}$ the needed quantities can be found.
To compute an expression for \eqref{eq:ICC} we need to consider the fourth-order moments of the observations.
From the model, known results of the inverse Wishart distribution, cf.\ \citep{Cook2011, Rosen1988}, leads to
\begin{align}
  \label{eq:invwishcovar}
  \cov(\Sigma_{ij}, \Sigma_{kl})
  = \frac{2\Psi_{ij}\Psi_{kl}+ (\nu{-}p{-}1)
    \big(\Psi_{ik}\Psi_{jl} + \Psi_{il}\Psi_{kj}\big)}
          {(\nu-p)(\nu-p-1)^2(\nu-p-3)}, \;\nu > p +3,
\end{align}
implying that
\begin{align}
  \var(\Sigma_{ij})
  = \cov(\Sigma_{ij}, \Sigma_{ij})
  \label{eq:invwishvar}
  = \frac{(\nu-p+1)\Psi_{ij}^2 + (\nu-p-1)\Psi_{ii}\Psi_{jj}}
          {(\nu-p)(\nu-p-1)^2(\nu-p-3)}.
\end{align}
Continuing with the expected conditional variance of $S_{ij} | \vSigma$ in the denominator of \eqref{eq:ICC},
\begin{align}
  \bbE\big[\var(S_{ij}|\Sigma_{ij})\big]
  &= \var(\Sigma_{ij}) + \bbE[\Sigma_{ij}]^2
    + \cov(\Sigma_{ii}, \Sigma_{jj})
    + \bbE[\Sigma_{ii}]\bbE[\Sigma_{jj}]  \notag\\
  &= \var(\Sigma_{ij}) + \cov(\Sigma_{ii}, \Sigma_{jj})
    + (\nu-p-1)^{-2}(\Psi_{ij}^2 + \Psi_{ii}\Psi_{jj}).
    \label{eq:varSij}
\end{align}
An expression of $\var(S_{ij})$ in terms of the elements of $\vPsi$ can then found by substituting \eqref{eq:invwishcovar} and \eqref{eq:invwishvar} into \eqref{eq:varSij} and by extension an expression for the ICC \eqref{eq:ICC} can be obtained.
We omit this tedious calculation which can be verified to yield $\text{ICC}(\nu) = 1/(\nu - p)$ as given in \eqref{eq:ICCexprs}

\end{document}